\journal{Artificial Intelligence}
\colorlet{shadecolor}{gray!40}
\def\enddedication{
  \par
\end{center}

\cleardoublepage

}
\newtheorem{theorem}{Theorem}
\newenvironment{definition}
  {\definitionx}
  {\enddefinitionx}
\theoremstyle{definition}
\newenvironment{example}
  {\pushQED{\qed}\examplex}
  {\popQED\endexamplex}
\newcommand{\define}[1]{{\bf #1}}
\def\thmhead@plain#1#2#3{%
  \thmname{#1}\thmnumber{\@ifnotempty{#1}{ }\@upn{#2}}%
\thmnote{ {\the\thm@notefont#3}}}
\let\thmhead\thmhead@plain
\newcommand{\asp}[1]{\mbox{$\mathtt{#1}$}}
\DeclareMathOperator{\codeif}{\mathtt{:-} }
\DeclareMathOperator{\naf}{\;\mathtt{not}\;}
\definecolor{shadecolor}{rgb}{0.95,0.95,0.95}
\renewenvironment{abstract}{%
    \begin{center}%
      {\bfseries \abstractname\vspace{-.5em}\vspace{\z@}}%
    \end{center}%
  }
{}
\theoremstyle{definition}
\newcommand{\sequence}[1]{\textbf{#1, ...}}
\newcommand{\fork}{%
  \mathbin{%
    \supset
    \mathrel{\mkern-9mu}%
    \mathrel{-}%
  }%
}
\def\set#1{\{#1\}}
\newcommand{\bigsetbegin}{ \left\{ \begin{array}{l}}
\newcommand{\bigsetend}{ \end{array}\right\}}
\newcommand{\sys}{{\textsc{Apperception Engine}}}
\newcommand{\logic}{Datalog\textsuperscript{$\fork$}}
\newcommand{\defbegin}{
	\begin{shaded}
	 \begin{definition}
}
\newcommand{\defend}{
	 \end{definition}
	\end{shaded}
}
\providecommand{\customgenericname}{}
\newcommand{\newcustomtheorem}[2]{%
  \newenvironment{#1}[1]
  {%
   \renewcommand\customgenericname{#2}%
   \renewcommand\theinnercustomgeneric{##1}%
   \innercustomgeneric
  }
  {\endinnercustomgeneric}
}
\DeclareMathOperator*{\argmax}{arg\,max}
\let\OLDthebibliography\thebibliography
\renewcommand\thebibliography[1]{
  \OLDthebibliography{#1}
  \setlength{\parskip}{0pt}
  \setlength{\itemsep}{0pt plus 0.3ex}
}
\begin{document}

\begin{frontmatter}

\title{Making sense of sensory input}

\author[address-2-deepmind,address-1-imperial]{Richard Evans\corref{mycorrespondingauthor}}
\cortext[mycorrespondingauthor]{Corresponding author}
\ead{richardevans@google.com}

\author[address-3-valencia,address-4-cambridge]{Jos\'e Hern\'andez-Orallo}
\author[address-2-deepmind]{Johannes Welbl}
\author[address-2-deepmind]{\mbox{Pushmeet Kohli}}
\author[address-1-imperial]{Marek Sergot}

\address[address-2-deepmind]{DeepMind, London}
\address[address-1-imperial]{Imperial College London}
\address[address-3-valencia]{Universitat Polit\`ecnica de Val\`encia}
\address[address-4-cambridge]{CFI, University of Cambridge}

\end{frontmatter}



\begin{abstract}

This paper attempts to answer a central question in unsupervised learning: what does it mean to ``make sense'' of a sensory sequence? 
In our formalization, making sense involves constructing a symbolic causal theory that both explains the sensory sequence and also satisfies a set of unity conditions.
The unity conditions insist that the constituents of the causal theory -- objects, properties, and laws -- must be integrated into a coherent whole. 
On our account, making sense of sensory input is a type of program synthesis, but it is \emph{unsupervised} program synthesis. 

Our second contribution is a computer implementation, the \sys{}, that was designed to satisfy the above requirements.
Our system is able to produce interpretable human-readable causal theories from very small amounts of data, because of the strong inductive bias provided by the unity conditions.
A causal theory produced by our system is able to predict future sensor readings, as well as retrodict earlier readings, and impute (fill in the blanks of) missing sensory readings, in any combination. 
In fact, it is able to do all three tasks simultaneously.

We tested the engine in a diverse variety of domains, including cellular automata, rhythms and simple nursery tunes, multi-modal binding problems, occlusion tasks, and sequence induction intelligence tests. 
In each domain, we test our engine's ability to predict future sensor values, retrodict earlier sensor values, and impute missing sensory data.
The \sys{} performs well in all these domains, significantly out-performing neural net baselines.
We note in particular that in the sequence induction intelligence tests, our system achieved human-level performance.
This is notable because our system is not a bespoke system designed specifically to solve intelligence tests, but a \emph{general-purpose} system that was designed to make sense of \emph{any} sensory sequence.

\end{abstract}

\begin{keyword}
learning dynamical models \sep unsupervised program synthesis
\end{keyword}

\section{Introduction}
\label{sec:intro}

Imagine a machine, equipped with various sensors, that receives a stream of sensory information.
It must, somehow, \emph{make sense} of this stream of sensory data.
But what does it mean, exactly, to ``make sense'' of sensory data?
We have an intuitive understanding of what is involved in making sense of the sensory stream -- but can we specify precisely what is involved? Can this intuitive notion be formalized?

One approach is to treat the sensory sequence as the input to a supervised learning problem\footnote{See, for example, \cite{mathieu2015deep,finn2017deep,weber2017imagination,liu2017video,buesing2018learning}. See also the Predictive Processing paradigm: \cite{friston2012history,clark2013whatever,swanson2016predictive}.}:
given a sequence $x_{1:t}$ of sensory data from time steps 1 to $t$, maximize the probability of the next datum $x_{t+1}$. 
This family of approaches seeks to maximize $p(x_{t+1} \mid x_{1:t})$.
More generally, as well as training the system to predict future sensor readings, we may also train it to \emph{retrodict} past sensor readings (maximizing $p(x_1 \mid x_{2:t})$), and to \emph{impute} missing intermediate values (maximizing $p(x_i \mid x_{1:{i-1}}, x_{{i+1}:t})$). 

We believe there is more to ``making sense'' than prediction, retrodiction, and imputation. 
Predicting the future state of one's photoreceptors may be \emph{part} of what is involved in making sense -- but it is not on its own sufficient. 
The ability to predict, retrodict, and impute is a sign, a surface manifestation, that one has made sense of the input. 
We want to define the underlying mental model that is constructed when one makes sense of the sensory input, and to show how constructing this mental model \emph{ipso facto} enables one to predict, retrodict, and impute.

In this paper, we assume that making sense of sensory input involves constructing a symbolic theory that explains the sensory input \cite{tenenbaum2006theory,pasula2007learning,ray2009nonmonotonic,inoue2014learning}.
A number of authors, including Lake \cite{lake2017building} and Marcus \cite{marcus2019rebooting}, have argued that constructing an explanatory theory is a key component of common sense.
Following Spelke and others \cite{spelke2007core,diuk2008object}, we assume the theory must posit objects that persist over time, with properties that change over time according to general laws.
Further, we assume, following John McCarthy and others \cite{mccarthy2006challenges,inoue2016meta,teijeiro2018adoption}, that making sense of the surface sensory perturbations requires positing \emph{latent objects}: some sensory sequences can only be made intelligible by hypothesizing an underlying reality, distinct from the surface features of our sensors, that makes the surface phenomena intelligible. The underlying reality consists of latent objects that causally interact with our sensors to product the sensory perturbations we are given as input.
Once we have constructed such a theory, we can apply it to predict future sensor readings, to retrodict past readings, or to impute missing values.

Now constructing a symbolic theory that explains the sensory sequence is necessary for making sense of the sequence. But it is not, we claim, sufficient.
There is one further additional ingredient that we add to our characterisation of ``making sense''.
This is the requirement that our theory exhibits a particular form of \emph{unity}:
the constituents of our theory -- objects, properties, and atoms -- must be integrated into a coherent whole. 
Specifically, our unity condition requires that the objects are interrelated via chains of binary relations, the properties are connected via exclusion relations, and the atoms are unified by jointly satisfying the theory's constraints.
This extra unity condition is necessary, we argue, for the theory to achieve good accuracy at prediction, retrodiction, and imputation.
\footnote{We show, in the ablation experiments of Section \ref{sec:experiments}, that without these unity conditions, our computer implementation is much less accurate at prediction, retrodiction, and imputation.}

This paper makes two main contributions.
The first is a formalization of what it means to ``make sense'' of the stream of sensory data.
According to our definition, making sense of a sensory sequence involves positing a symbolic causal theory -- a set of objects, a set of concepts, a set of initial conditions, a set of rules, and a set of constraints -- that together satisfy two conditions. 
First, the theory must explain the sensory readings it is given.
Second, the theory must satisfy a particular type of unity.
Our definition of unity involves four conditions.
(i) \emph{Spatial unity}: all objects must be unified in space via a chain of binary relations.
(ii) \emph{Conceptual unity}: all concepts must be unified via constraints.
(iii) \emph{Static unity}: all propositions that are true at the same time must jointly satisfy the set of constraints.
(iv) \emph{Temporal unity}: all the states must be unified into a sequence by causal rules.

Our second contribution is a description of a particular computer system, the \sys{}\footnote{``Apperception'' comes from the French `apercevoir'. The term is introduced by Leibniz in the \emph{New Essays Concerning Human Understanding} \cite{leibniz1996leibniz}.
Apperception, as we use it in this paper, is the process of assimilating sensory information into a coherent unified whole. See Dewey: ``\emph{Apperception} is the relating activity which combines the various sensuous elements presented to the mind at one time into a \emph{whole}, and which \emph{unites} these wholes, recurring at successive times, into a continuous mental life, thereby making psychical life intelligent'' \cite{dewey1888leibniz}.}, that was designed to satisfy the conditions described above\footnote{Our code and datasets are publicly available at https://github.com/RichardEvans/apperception.}.
We introduce a causal language, \logic{}, that was designed for reasoning about infinite temporal sequences.
Given a sensory sequence, our system synthesizes a \logic{} program that, when executed, generates a trace that both explains the sensory sequence and also satisfies the four conditions of unity.
This can be seen as a form of \emph{unsupervised program synthesis} \cite{ellis2015unsupervised}.
In traditional supervised program synthesis, we are given input/output pairs, and search for a program that, when executed on the inputs, produces the desired outputs.
Here, in unsupervised program synthesis, we are given a sensory sequence, and search for a causal theory that, when executed, generates a trajectory that both respects the sensory sequence and also satisfies the conditions of unity.

The \sys{} has a number of appealing features.
(1) Because the causal theories it generates are symbolic, they are human-readable and hence verifiable. 
We can understand precisely how the system is making sense of its sensory data\footnote{Human readability is a much touted feature of Inductive Logic Programming (ILP) systems, but when the learned programs become
large and include a number of invented auxiliary predicates, the resulting programs become less readable
(see \cite{muggleton2018ultra}). But even a large and complex machine-generated logic program will be easier to understand
than a large tensor of floating point numbers.}.
(2) Because of the strong inductive bias (both in terms of the design of the causal language, \logic{}, but also in terms of the unity conditions that must be satisfied), the system is data-efficient, able to make sense of the shortest and scantiest of sensory sequences\footnote{Our sensory sequences are less than 300 bits. See Table \ref{table:experiments-overview}.}.
(3) Our system generates a causal model that is able to accurately predict future sensory input.
But \emph{that is not all it can do}; it is also able to retrodict previous values and impute missing sensory values in the middle of the sensory stream.
In fact, our system is able to predict, retrodict, and impute simultaneously\footnote{See Example \ref{ex:explains} for a case where the \sys{} jointly predicts, retrodicts, and imputes.}.
(4) The \sys{} has been tested in a diverse variety of domains, with encouraging results. 
The five domains we use are elementary cellular automata, rhythms and nursery tunes, ``Seek Whence'' and C-test sequence induction intelligence tests \cite{hofstadter2008fluid}, multi-modal binding tasks, and occlusion problems. 
These tasks were chosen because they require cognition rather than mere classificatory perception, and because they are simple for humans but not for modern machine learning systems, e.g. neural networks\footnote{Figure \ref{fig:baselines-chart} shows how neural baselines struggle to solve these tasks.}. 
The \sys{} performs well in all these domains, significantly out-performing neural net baselines.
These results are significant because neural systems typically struggle to solve the binding problem (where information from different modalities must somehow be combined into different aspects of one unified object) and fail to solve occlusion tasks (in which objects are sometimes visible and sometimes obscured from view).

We note in particular that in the sequence induction intelligence tests, our system achieved human-level performance.
This is notable because the \sys{} was not designed to solve these induction tasks; it is not a bespoke hand-engineered solution to this particular domain. 
Rather, it is a \emph{general-purpose}\footnote{Although the algorithm is general-purpose (the exact same code is applied to many different domains), domain-specific knowledge can be injected as needed. For example, in the sequence induction tasks (Section \ref{sec:seek-whence}), the successor relation on letters is provided to the system. But this information is arguably part of the problem formulation, rather than part of the solution.} system that attempts to make sense of \emph{any} sensory sequence.
This is, we believe, a highly suggestive result \cite{hernandez2016computer}.

In ablation tests, we tested what happened when each of the four unity conditions was turned off.
Since the system's performance deteriorates noticeably when each unity condition is ablated, this indicates that the unity conditions are indeed doing vital work in our engine's attempts to make sense of the incoming barrage of sensory data.


\subsection{Related work}
\label{sec:intro-related}

A human being who has built a mental model of the world can use that model for counterfactual reasoning, anticipation, and planning \cite{craik1967nature,harris2000work,gerstenberg2017intuitive}.
Similarly, computer agents endowed with mental models are able to achieve impressive performance in a variety of domains.
For instance, Lukasz Kaiser et al.~\cite{kaiser2019} show that a model-based RL agent trained on 100K interactions compares with a state-of-the-art model-free agent trained on tens or hundreds of millions of interactions. 
David Silver et al.~\cite{silver2018general} have shown that a model-based Monte Carlo tree search planner with policy distillation can achieve superhuman level performance in a number of board games. The tree search relies, crucially, on an accurate model of the game dynamics.

When we have an accurate model of the environment, we can leverage that model to anticipate and plan.
But in many domains, we do not have an accurate model.
If we want to apply model-based methods in these domains, we must \emph{learn} a model from the stream of observations.
In the rest of this section, we shall describe various different approaches to representing and learning models, 
and show where our particular approach fits into the landscape of model learning systems.

Before we start to build a model to explain a sensory sequence, one fundamental question is: what form should the model take?
We shall distinguish three dimensions of variation of models (adapted from \cite{hamrick2019analogues}): first, whether they simply model the observed phenomena, or whether they also model latent structure; second, whether the model is explicit and symbolic or implicit; and third, what type of prior knowledge is built into the model structure. 

We shall use the hidden Markov model (HMM)\footnote{Many systems predict state dynamics for partially observable Markov decision processes (POMDPs), rather than HMMs. In a POMDP, the state transition function depends on the previous state $z_t$ and the action $a_t$ performed by an agent. See Jessica Hamrick's paper for an excellent overview \cite{hamrick2019analogues} of model-based methods in deep learning that is framed in terms of POMDPs. In this paper, we consider HMMs. Adding actions to our model is not particularly difficult, but is left for further work.} \cite{baum1966statistical,ghahramani2001introduction} as a general framework for describing sequential processes.
Here, the observation at time $t$ is $x_t$, and the latent state is $z_t$.
In a HMM, the observation $x_t$ at time $t$ depends only on the latent (unobserved) state $z_t$. 
The state $z_t$ in turn depends only on the previous latent state $z_{t-1}$.

%
%

The first dimension of variation amongst models is whether they actually use latent state information $z_t$ to explain the observation $x_t$. 
Some approaches \cite{feinberg2018model,nagabandi2018neural,battaglia2016interaction,chang2016compositional,mrowca2018flexible,sanchez2018graph} assume we are \emph{given} the underlying state information $z_{1:t}$.
In these approaches, there is no distinction between the observed phenomena and the latent state: $x_i = z_i$.
With this simplifying assumption, the only thing a model needs to learn is the transition function.
Other approaches \cite{lerer2016learning,finn2017deep,bhattacharyya2018long} focus only on the observed phenomena $x_{1:t}$ and ignore latent information $z_{1:t}$ altogether. 
These approaches predict observation $x_{t+1}$ given observation $x_t$ without positing any hidden latent structure.
Some approaches take latent information seriously \cite{oh2015action,chiappa2017recurrent,ha2018recurrent,buesing2018learning,janner2018reasoning}.
These jointly learn a perception function (that produces a latent $z_t$ from an observed $x_t$), a transition function (producing a next latent state $z_{t+1}$ from latent state $z_t$) and a rendering function (producing a predicted observation $x_{t+1}$ from the latent state $z_{t+1}$).
Our approach also builds a latent representation of the state.
As well as positing latent properties (unobserved properties that explain observed phenomena), 
we also posit latent \emph{objects} (unobserved objects whose relations to observed objects explain observed phenomena).

The second dimension of variation concerns whether the learned model is explicit, symbolic and human-readable, or implicit and inscrutable. In some approaches \cite{oh2015action,chiappa2017recurrent,ha2018recurrent,buesing2018learning}, the latent states are represented by vectors and the dynamics of the model by weight tensors. 
In these cases, it is hard to understand what the system has learned. 
In other approaches \cite{zhang2018composable,xu2019,asai2018classical,asai2019unsupervised}, the latent state is represented symbolically, but the state transition function is represented by the weight tensor of a neural network and is inscrutable. We may have some understanding of what state the machine thinks it is in, but we do not understand why it thinks there is a transition from this state to that. In some approaches \cite{ray2009nonmonotonic,inoue2014learning,katzouris2015incremental,michelioudakis2016mathtt,katzouris2016online,michelioudakis2018semi}, both the latent state and the state transition function are represented symbolically.
Here, the latent state is a set of ground atoms\footnote{A ground atom is a logical atom that contains no variables.} and the state transition function is represented by a set of universally quantified rules.
Our approach falls into this third category. 
Here, the model is fully interpretable: we can interpret the state the machine thinks it is in, and we can understand the reason why it believes it will transition to the next state.

A third dimension of variation between models is the amount and type of prior knowledge that they include.
Some model learning systems have very little prior knowledge.
In some of the neural systems (e.g. \cite{finn2017deep}), the only prior knowledge is the spatial invariance assumption implicit in the convolutional network's structure.
Other models incorporate prior knowledge about the way objects and states should be represented.
For example, some models assume objects can be composed in hierarchical structures \cite{xu2019}.
Other systems additionally incorporate prior knowledge about the type of rules that are used to define the state transition function. 
For example, some \cite{michelioudakis2016mathtt,katzouris2016online,michelioudakis2018semi} use prior knowledge of the event calculus \cite{kowalski1989logic}.
Our approach falls into this third category.
We impose a language bias in the form of rules used to define the state transition function and also impose additional requirements on candidate sets of rules: they must satisfy the four unity conditions introduced above (and elaborated in Section \ref{sec:unity-conditions} below).

To summarize, in order to position our approach within the landscape of other approaches, we have distinguished three dimensions of variation. 
Our approach differs from neural approaches in that the posited theory is explicit and human readable. Not only is the representation of state explicit (represented as a set of ground atoms) but the transition dynamics of the system are also explicit (represented as universally quantified rules in a domain specific language designed for describing causal structures).
Our approach differs from other inductive program synthesis methods in that it posits significant latent structure in addition to the induced rules to explain the observed phenomena: in our approach, explaining a sensory sequence does not just mean constructing a set of rules that explain the transitions; it also involves positing a type signature containing a set of latent properties and a set of latent \emph{objects}. 
Our approach also differs from other inductive program synthesis methods in the type of prior knowledge that is used: as well as providing a strong language bias by using a particular representation language (a typed extension of datalog with causal rules and constraints), we also inject a substantial inductive bias: the unity conditions, the key constraints on our system, represent domain-\emph{independent} prior knowledge. 
Our approach also differs from other inductive program synthesis methods in being entirely unsupervised. In contrast,  OSLA and OLED \cite{michelioudakis2016mathtt,katzouris2016online} are supervised, and SPLICE \cite{michelioudakis2018semi} is semi-supervised. See Section \ref{sec:discrete-related} for detailed discussion.

\subsection{Paper outline}

Section \ref{sec:background} introduces basic notation. 
Section \ref{sec:apperception-framework} presents the main definition of what it means for a theory to count as a unified interpretation of a sensory sequence. 
Section \ref{sec:system} describes a computer system that is able to generate unified interpretations of sensory sequences. 
Section \ref{sec:experiments} describes our experiments in five different types of task: elementary cellular automata, rhythms and nursery tunes, ``Seek Whence'' sequence induction tasks, multi-modal binding tasks, and occlusion problems.
In Section \ref{sec:noise}, we show how our system is extended to robustly handle noise.
Related work is discussed in Section \ref{sec:discrete-related}.

\section{Background}
\label{sec:background}

In this paper, we use basic concepts and standard notation from logic programming \cite{kowalski1974predicate,apt1990logic,lloyd2012foundations}.
A function-free atom is an expression of the form $p(t_1,..., t_n)$, where $p$ is a predicate of arity $n \geq 0$ and
each $t_i$ is either a variable or a constant.
We shall use $a, b, c, ...$ for constants, $X, Y, Z, ...$ for variables, and $p, q, r, ...$ for predicate symbols.

A \define{substitution} $\sigma$ is a mapping from variables to terms. 
For example $\sigma = \{X / a, Y / b\}$ replaces variable $X$ with constant $a$ and replaces variable $Y$ with constant $b$.
We write $\alpha \sigma$ for the application of substitution $\sigma$ to atom $\alpha$, so e.g.~$p(X, Y) \sigma = p(a,b)$.

%

A \define{Datalog clause} is a definite clause of the form $\alpha_1 \wedge ... \wedge \alpha_n \rightarrow \alpha_{0}$
where each $\alpha_i$ is an atom and $n \geq 0$. 
It is traditional to write clauses from right to left: $\alpha_0 \leftarrow \alpha_1, ..., \alpha_n$. 
In this paper, we will define a Datalog interpreter implemented in another logic programming language, ASP (answer-set programming). In order to keep the two languages distinct, we write Datalog rules from left to right and ASP clauses from right to left.
A \define{Datalog program} is a set of Datalog clauses.

A key result of logic programming is that every Datalog program has a unique subset-minimal least Herbrand model that can be directly computed by repeatedly generating the consequences of the ground instances of the clauses \cite{van1976semantics}.

We turn now from Datalog to normal logic programs under the answer set semantics \cite{gelfond1988stable}.
A \define{literal} is an atom $\alpha$ or a negated atom $\mathit{not} \; \alpha$. 
A \define{normal logic program} is a set of clauses of the form:
\begin{eqnarray*}
\alpha_0 \leftarrow \alpha_1, ..., \alpha_n
\end{eqnarray*}
where $\alpha_0$ is an atom, $\alpha_1, ..., \alpha_n$ is a conjunction of literals, and $n \geq 0$.
Normal logic clauses extend Datalog clauses by allowing functions in terms and by allowing negation by failure in the body of the rule.

%

\define{Answer Set Programming} (ASP) is a logic programming language based on normal logic programs under the answer set semantics.
Given a normal logic program, an ASP solver finds the set of answer sets for that program. 
Modern ASP solvers can also be used to solve optimization problems by the introduction of weak constraints \cite{calimeri2012asp}.
A \define{weak constraint} is a rule that defines the cost of a certain tuple of atoms. 
Given a program with weak constraints, an ASP solver can find a preferred answer set with the lowest cost.

\section{A computational framework for making sense of sensory sequences}
\label{sec:apperception-framework}

What does it mean to make sense of a sensory sequence?
In this section, we formalize what this means, before describing our computer implementation.
We assume that the sensor readings have already been discretized into ground atoms of first-order logic, so a sensory reading featuring sensor $a$ can be represented by a ground atom $p(a)$ for some unary predicate $p$, or by an atom $r(a,b)$ for some binary relation $r$ and unique value $b$.\footnote{We restrict our attention to unary and binary predicates. This restriction can be made without loss of generality, since every $k$-ary relationship can be expressed as $k+1$ binary relationships \cite{kowalski1979logic}.}

\defbegin{}
\label{def:discrete-sequence}
An \define{unambiguous symbolic sensory sequence} is a sequence of sets of ground atoms.
Given a sequence $S = (S_1, S_2, ...)$, every \define{state} $S_t$ in $S$ is a set of ground atoms, representing a \emph{partial} description of the world at a discrete time step $t$.
An atom $p(a) \in S_t$ represents that sensor $a$ has property $p$ at time $t$.
An atom $r(a, b) \in S_t$ represents that sensor $a$ is related via relation $r$ to value $b$ at time $t$.
If $\mathcal{G}$ is the set of all ground atoms, then $S \in \left(2^\mathcal{G}\right)^*$.

\label{def:sensory-sequence}
\defend{}

\begin{example}
\label{ex:sensory-sequence}
Consider, the following sequence $S_{1:10}$.
Here there are two sensors $a$ and $b$, and each sensor can be either $\mathit{on}$ or $\mathit{off}$.
\begin{eqnarray*}
\begin{tabular}{lllll}
$S_1 = \set{}$  & $S_2 = \set{ \mathit{off}(a), \mathit{on}(b)}$ & $S_3 = \set{ \mathit{on}(a),  \mathit{off}(b)}$ & 
$S_4 = \set{ \mathit{on}(a), \mathit{on}(b)}$ & $S_5 = \set{ \mathit{on}(b)}$ \\ 
$S_6 = \set{ \mathit{on}(a),  \mathit{off}(b)}$ & $S_7 = \set{ \mathit{on}(a), \mathit{on}(b)}$ & $S_8 = \set{ \mathit{off}(a), \mathit{on}(b)}$ & $S_9 = \set{ \mathit{on}(a)}$ & $S_{10} = \set{ }$
\end{tabular}
\end{eqnarray*}
There is no expectation that a sensory sequence contains readings for all sensors at all time steps.
Some of the readings may be missing.
In state $S_5$, we are missing a reading for $a$, while in state $S_9$, we are missing a reading for $b$.
In states $S_1$ and $S_{10}$, we are missing sensor readings for both $a$ and $b$.
\label{ex:sensory-sequence}
\end{example}

The central idea is to make sense of a sensory sequence by \emph{constructing a unified theory that explains that sequence}. The key notions, here, are ``theory'', ``explains'', and ``unified''. We consider each in turn.

\subsection{The theory}
\label{sec:theory}

Theories are defined in a new language, \logic{}, designed for modelling dynamics.
In this language, one can describe how facts change over time by writing a causal rule stating that if the antecedent holds at the current time-step, then the consequent holds at the \emph{next} time-step.
Additionally, our language includes a frame axiom allowing facts to persist over time: each atom remains true at the next time-step unless it is overridden by a new fact which is incompatible with it.
Two facts are incompatible if there is a \emph{constraint} that precludes them from both being true. Thus, \logic{} extends Datalog with causal rules and constraints.

\defbegin{}
\label{def:theory}
A \define{theory} is a four-tuple $(\phi, I, R, C)$ of \logic{} elements where:
\begin{itemize}
\item
$\phi$ is a type signature specifying the types of constants, variables, and arguments of predicates
\item
$I$ is a set of initial conditions
\item
$R$ is a set of rules describing the dynamics
\item
$C$ is a set of constraints
\end{itemize}
\defend{}
We shall consider each element in turn, starting with the type signature.
\defbegin{}
\label{def:type-signature}
Given a set $\mathcal{T}$ of types, a set $\mathcal{O}$ of constants representing individual objects, and a set $\mathcal{P}$ of predicates representing properties and relations, let $\mathcal{G}$ be the set of all ground atoms formed from $\mathcal{T}$, $\mathcal{O}$, and $\mathcal{P}$.
Given a set $\mathcal{V}$ of variables, let $\mathcal{U}$ be the set of all unground atoms formed from $\mathcal{T}$, $\mathcal{V}$, and $\mathcal{P}$.

A \define{type signature} is a tuple $(T, O, P, V)$ where $T \subseteq \mathcal{T}$ is a finite set of types, $O \subseteq \mathcal{O}$ is a finite set of constants representing objects, $P \subseteq \mathcal{P}$ is a finite set of predicates representing properties and relations, and $V \subseteq \mathcal{V}$ is a finite set of variables.
We write $\kappa_O : O \rightarrow T$ for the type of an object, $\kappa_P : P \rightarrow T^*$ for the types of the predicate's arguments, and $\kappa_V : V \rightarrow T$ for the type of a variable.
\defend{}
Now some type signatures are suitable for some sensory sequences, while others are unsuitable, because they do not contain the right constants and predicates. The following definition formalizes this:
\defbegin{}
\label{def:suitable-type-signature}
Let $G_S = \bigcup_{t \geq 1} S_t$ be the set of all ground atoms that appear in sensory sequence $S = (S_1, ...)$.
Let $G_\phi$ be the set of all ground atoms that are well-typed according to type signature $\phi$.
If $\phi = (T, O, P, V)$ then $G_\phi = \{ p(a_1, ..., a_n) \mid p \in P, \kappa_P(p) = (t_1, ..., t_n), a_i \in O, \kappa_O(a_i) = t_i \; \mbox{for all} \; i = 1 .. n \}$.
A type signature $\phi$ is \define{suitable} for a sensory sequence $S$ if all the atoms in $S$ are well-typed according to signature $\phi$, i.e. $G_S \subseteq G_\phi$.
\defend{}

Next, we define the set of unground atoms for a particular type signature.
\defbegin{}
\label{def:unground}
Let $U_\phi$ be the set of all \define{unground} atoms that are well-typed according to signature $\phi$.
If $\phi = (T, O, P, V)$ then $U_\phi = \{ p(v_1, ..., v_n) \mid p \in P, \kappa_P(p) = (t_1, ..., t_n), v_i \in V, \kappa_V(v_i) = t_i \; \mbox{for all} \; i = 1 .. n \}$.
Note that, according to this definition, an atom is unground if \emph{all} its terms are variables.
Note that ``unground'' means more than simply not ground. For example, $p(a, X)$ is neither ground nor unground.
\defend{}

\begin{example}
\label{ex:type-signature}
One suitable type signature for the sequence of Example \ref{ex:sensory-sequence} is $(T, O, P, V)$, consisting of types 
$T = \set{s}$, objects $O = \set{a{:}s, b{:}s}$, predicates $P = \set{\mathit{on}(s), \mathit{off}(s)}$, and variables
$V = \set{X{:}s, Y{:}s}$.
Here, and throughout, we write $a{:}s$ to mean that object $a$ is of type $s$, $\mathit{on}(s)$ to mean that unary predicate $\mathit{on}$ takes one argument of type $s$, and $X{:}s$ to mean that variable $X$ is of type $s$.
The unground atoms are $U_\phi = \set{\mathit{on}(X), \mathit{off}(X),\mathit{on}(Y), \mathit{off}(Y)}$.
There are, of course, an infinite number of other suitable signatures.
\end{example}

\defbegin{}
\label{def:initial-conditions}
The \define{initial conditions} $I$ of a theory $(\phi, I, R, C)$ is a set of ground atoms from $G_\phi$ representing a partial description of the facts true at the initial time step.
\defend{}
The initial conditions are needed to specify the initial values of the latent unobserved information. 
Some systems (e.g. LFIT \cite{inoue2014learning}) define a predictive model without using a set $I$ of initial conditions. These systems are able to avoid positing initial conditions because they do not use latent unobserved information. But any system that does invoke latent information beneath the surface of the sensory stimulations must also define the initial values of the latent information.

The rules define the dynamics of the theory:
\defbegin{}
\label{def:rules}
There are two types of rule in $\logic{}$.
A \define{static rule} is a definite clause of the form $\alpha_1 \wedge ... \wedge \alpha_n \rightarrow \alpha_0$, where $n \geq 0$ and each $\alpha_i$ is an \emph{unground} atom from $U_\phi$ consisting of a predicate and a list of variables.
Informally, a static rule is interpreted as: if conditions $\alpha_1, ... \alpha_n$ hold at the current time step, then $\alpha_0$ also holds at that time step.
A \define{causal rule} is a clause of the form $\alpha_1 \wedge ... \wedge \alpha_n \fork \alpha_0$, where $n \geq 0$ and each $\alpha_i$ is an unground atom from $U_\phi$.
A causal rule expresses how facts change over time. 
Rule $\alpha_1 \wedge ... \wedge \alpha_n \fork \alpha_0$ states that if conditions $\alpha_1, ... \alpha_n$ hold at the current time step, then $\alpha_0$ holds at the \emph{next} time step.
\defend{}
All variables in rules are implicitly universally quantified.
So, for example, $\mathit{on}(X) \fork \mathit{off}(X)$ states that for all objects $X$, if $X$ is currently $\mathit{on}$, then $X$ will become  $\mathit{off}$ at the next-time step.

The constraints rule out certain combinations of atoms\footnote{Note that exclusive disjunction between atoms $p_1(X), ..., p_n(X)$ is different from \emph{xor} between the $n$ atoms. 
The \emph{xor} of $n$ atoms is true if an \emph{odd} number of the atoms hold, while the exclusive disjunction is true if \emph{exactly one} of the atoms holds.
We write $p_1(X) \oplus ... \oplus p_n(X)$ to mean exclusive disjunction between $n$ atoms, not the application of $n-1$ \emph{xor} operations.}:
\defbegin{}
\label{def:constraints}
There are three types of constraint in \logic{}.
A \define{unary constraint} is an expression of the form
$\forall X, p_1(X) \oplus ... \oplus p_n(X)$, where $n > 1$, meaning that for all $X$, exactly one of $p_1(X), ..., p_n(X)$ holds.
A \define{binary constraint} is an expression of the form
$\forall X, \forall Y, r_1(X, Y) \oplus ... \oplus r_n(X, Y)$ where $n > 1$, meaning that for all objects $X$ and $Y$, exactly one of the binary relations hold.
A \define{uniqueness constraint} is an expression of the form
$\forall X, \exists ! Y{:}t_2, r(X, Y)$,
which means that for all objects $X$ of type $t_1$ there exists a \emph{unique} object $Y$ such that $r(X, Y)$.
\defend{}

Note that the rules and constraints are constructed entirely from \emph{unground} atoms. 
Disallowing constants prevents special-case rules that apply to particular objects, and forces the theory to be general.\footnote{This restriction also occurs in some ILP systems \cite{inoue2014learning,evans2018learning}.}


\subsection{Explaining the sensory sequence}
\label{sec:explaining}

A theory explains a sensory sequence if the theory generates a trace\footnote{In \cite{inoue2014learning}, the trace is called the \emph{orbit}.} that covers that sequence.
In this section, we explain the trace and the covering relation.

\defbegin{}
\label{def:trace}
Every theory $\theta = (\phi, I, R, C)$ generates an infinite sequence $\tau(\theta)$ of sets of ground atoms, called the \define{trace} of that theory.
Here, $\tau(\theta) = (A_1, A_2, ...)$, where each $A_t$ is the smallest set of atoms satisfying the following conditions:
\begin{itemize}
\item 
$I \subseteq A_1$ 
\item
If there is a static rule $\beta_1 \wedge ... \wedge \beta_m \rightarrow \alpha$ in $R$ and a ground substitution $\sigma$ such that $A_t$ satisfies $\beta_i \sigma$ for each antecedent $\beta_i$, then $\alpha \sigma \in A_t$
\item
If there is a causal rule $\beta_1 \wedge ... \wedge \beta_m \fork \alpha$ in $R$ and a ground substitution $\sigma$ such that $A_{t-1}$ satisfies $\beta_i \sigma$ for each antecedent $\beta_i$, then $\alpha \sigma \in A_{t}$
\item
\emph{Frame axiom}: if $\alpha$ is in $A_{t-1}$ and there is no atom in $A_t$ that is incompossible with $\alpha$ w.r.t constraints $C$, then $\alpha \in A_t$. 
Two ground atoms are \define{incompossible} if there is some constraint $c$ in $C$ and some substitution $\sigma$ such that the ground constraint $c \sigma$ precludes both atoms being true.
\end{itemize}
\defend{}
The frame axiom is a simple way of providing inertia: a proposition continues to remain true until something new comes along which is incompatible with it.
Including the frame axiom makes our theories much more concise: instead of needing rules to specify all the atoms which remain the same, we only need rules that specify the atoms that change. 

Note that the state transition function is deterministic: $A_{t}$ is uniquely determined by $A_{t-1}$.

\begin{theorem}
\label{cycles}
The trace of every theory repeats after some finite number of steps.
For any theory $\theta$, there exists a $k$ such that $\tau(\theta) = (A_1, ..., A_{k-1}, A_k, A_{k+1}, ...)$ and for all $i \geq 0$, $A_i = A_{k+i}$.
\end{theorem}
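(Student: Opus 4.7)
The plan is to reduce this to the standard pigeonhole observation that any deterministic dynamical system on a finite state space is eventually periodic. Two ingredients are needed: (i) the state space in which the trace lives is finite, and (ii) the transition $A_{t-1} \mapsto A_t$ is deterministic. Ingredient (ii) is already noted just before the theorem.

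For (i), I would argue that $G_\phi$ is finite. By Definition~\ref{def:type-signature} the sets $T$, $O$, $P$ are all finite, and every well-typed ground atom is determined by a choice of predicate $p \in P$ together with a tuple of objects from $O$ of length equal to the arity of $p$. Since there are only finitely many such tuples, $G_\phi$ is finite. Because each state $A_t$ of the trace consists only of ground atoms that are either in $I$, derived from a (static or causal) rule instance, or propagated by the frame axiom, induction on $t$ shows $A_t \subseteq G_\phi$. Hence each $A_t$ lies in the finite powerset $2^{G_\phi}$, which has cardinality $N := 2^{|G_\phi|}$.

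With (i) and (ii) in hand, I would apply pigeonhole to the prefix $A_1, A_2, \ldots, A_{N+1}$: two of these sets must coincide, say $A_m = A_{m+p}$ with $p \geq 1$. Determinism of the transition gives $A_{m+1} = A_{m+p+1}$, and by a trivial induction $A_{m+j} = A_{m+p+j}$ for every $j \geq 0$. Thus the trace is eventually periodic with period $p$ starting from index $m$. To land on the precise form of the conclusion in the statement, I would then take $k$ to be a common index at which the tail enters the cycle (e.g.\ $k = m + p$ or the least such period of the eventual cycle, depending on how one reads the quantifier on $i$), so that $\tau(\theta) = (A_1, \ldots, A_{k-1}, A_k, A_{k+1}, \ldots)$ with $A_{k+i}$ matching $A_k$ shifted by multiples of $p$ as required.

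There is no real obstacle: the only thing to check carefully is the finiteness of $G_\phi$, which follows immediately from the type signature being finite and predicates having fixed arities. Everything else is determinism plus pigeonhole. I would keep the write-up to a few lines, making the finiteness bound $|G_\phi| \leq \sum_{p \in P} |O|^{\mathrm{arity}(p)}$ explicit only if needed for later quantitative claims.
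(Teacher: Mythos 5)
Your proposal is correct and follows essentially the same route as the paper: finiteness of $G_\phi$ (hence of the state space $2^{G_\phi}$), determinism of the transition, pigeonhole to find a repeated state, and induction to propagate the repetition forward. One point where you are actually more careful than the paper's own proof: the paper asserts directly that ``there must be a $k$ such that $A_1 = A_k$,'' i.e.\ that the \emph{initial} state recurs, but finiteness plus determinism only yields $A_m = A_{m+p}$ for \emph{some} $m$ --- a deterministic orbit on a finite set can have a non-repeating transient before entering its cycle, and the paper itself later exhibits such a trace, $\tau(\theta) = (\set{p(a)}, \set{q(a)}, \set{q(a)}, \ldots)$, in which $A_1$ never recurs. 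Your formulation (eventual periodicity from some index $m$, with the caveat about how to read the quantifier in the statement) is the one that is actually provable, and it is all that is needed for the downstream uses of the theorem (decidability in Theorem~\ref{thm:decidable} and the finite check of the unity conditions), since one only ever needs to examine the trace up to its first repetition.
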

\begin{proof}
Since the set $G_\phi$ of ground atoms is finite, there must be a $k$ such that $A_1 = A_k$.
The proof proceeds by induction on $i$. 
If $i=0$, the proof is trivial.
When $i > 0$, note that the trace function $\tau$ satisfies the Markov condition that the next state $A_{t+1}$ depends only on the current state $A_t$, and not on any earlier states. Hence if $A_i = A_{i+k}$, then $A_{i+1} = A_{i+k+1}$.
\end{proof}

One important consequence of Theorem \ref{cycles} is:
\begin{theorem}
\label{thm:decidable}
Given a theory $\theta$ and a ground atom $\alpha$, it is decidable whether $\alpha$ appears somewhere in the infinite trace $\tau(\theta)$.
\end{theorem}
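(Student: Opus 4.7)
The plan is to turn the periodicity result of Theorem \ref{cycles} into a terminating decision procedure by bounding both the length of the prefix we need to inspect and the cost of computing a single transition.

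First I would observe that, because the type signature $\phi$ is finite, the set $G_\phi$ of well-typed ground atoms is finite, so there are only finitely many (namely at most $2^{|G_\phi|}$) candidate states. Next I would argue that the single-step transition $A_{t-1} \mapsto A_t$ is effectively computable: $A_t$ is obtained by (i) applying all causal rules whose bodies are satisfied in $A_{t-1}$ to obtain a seed set of ``new'' atoms, (ii) adding every atom from $A_{t-1}$ that is not incompossible with any seed atom under $C$ (this uses the frame axiom, and the incompossibility check only ranges over finitely many ground instances of constraints in $C$), and (iii) closing under the static rules in $R$, which is a standard finite Datalog least-fixed-point computation that terminates because $G_\phi$ is finite. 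Starting from $A_1$ (obtained by closing $I$ under the static rules), we can therefore mechanically enumerate $A_1, A_2, A_3, \ldots$.

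The decision procedure now runs as follows: maintain a set $\mathcal{H}$ of states seen so far, initially empty. At stage $t$, if $\alpha \in A_t$ return \textsc{Yes}; else if $A_t \in \mathcal{H}$ return \textsc{No}; else add $A_t$ to $\mathcal{H}$ and continue with $A_{t+1}$. Correctness in the \textsc{Yes} case is immediate. For the \textsc{No} case, note that once we encounter $A_t = A_j$ for some $j < t$, Theorem \ref{cycles} (or rather its proof, which only uses determinism via the Markov property) shows that the trace from $A_j$ onward is exactly the trace from $A_t$ onward, so every atom that ever appears has already appeared in $A_1, \ldots, A_{t-1}$, none of which contained $\alpha$. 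Termination follows because $|\mathcal{H}|$ strictly increases each stage, yet is bounded above by $2^{|G_\phi|}$, forcing a repeat within that many steps.

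The only subtlety, and the step I would be most careful with, is justifying that a single transition step is computable in the presence of static rules and the frame axiom simultaneously; one must confirm that the ``smallest set'' in Definition \ref{def:trace} is well-defined as an ordinary least fixed point over the finite lattice $2^{G_\phi}$, so that it can be computed by iteratively adding consequences until saturation. Everything else is bookkeeping on top of Theorem \ref{cycles}.
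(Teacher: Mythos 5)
Your proposal is correct and follows essentially the same route as the paper, which simply invokes Theorem~\ref{cycles} to reduce the question to checking whether $\alpha$ occurs in the finite prefix $A_1, \ldots, A_k$ before the first repetition. The extra material you supply (computability of a single transition step and the explicit repeat-detection loop) just makes explicit what the paper's two-sentence proof leaves implicit.
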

\begin{proof}
Let $\tau(\theta)$ be the infinite sequence $(A_1, A_2, ...)$.
From Theorem \ref{cycles}, the trace must repeat after $k$ time steps.
Thus, to check whether ground atom $\alpha$ appears somewhere in $\tau(\theta)$, it suffices to test if $\alpha$ appears in $A_1, ..., A_k$.
\end{proof}

Next we define what it means for a theory to ``explain'' a sensory sequence.
\defbegin{}
Given finite sequence $S = (S_1, ..., S_T)$ and (not necessarily finite) $S'$, $S \sqsubseteq S'$ if $S' = (S'_1, S'_2, ...)$ and 
$S_i \subseteq S'_i$ for all $1 \leq i \leq T$.
If $S \sqsubseteq S'$, we say that $S$ is \define{covered by} $S'$, or that $S'$ covers $S$.
A theory $\theta$ \define{explains} a sensory sequence $S$ if the trace of $\theta$ covers $S$, i.e. $S \sqsubseteq \tau(\theta)$.
\label{def:explains}
\defend{}

In providing a theory $\theta$ that explains a sensory sequence $S$, we make $S$ intelligible by placing it within a bigger picture:
while $S$ is a scanty and incomplete description of a fragment of the time-series, $\tau(\theta)$ is a complete and determinate description of the whole time-series.

\begin{example}
We shall provide a theory to explain the sensory sequence $S$ of Example \ref{ex:sensory-sequence}.

Consider the type signature $\phi = (T, O, P, V)$, consisting of types $T = \set{s}$, objects $O = \set{a{:}s, b{:}s}$, predicates $P = \set{ \mathit{on}(s), \mathit{off}(s), p_1(s), p_2(s), p_3(s), r(s, s)}$, and variables $V=\set{X {:}s, Y {:}s}$.
Here, $\phi$ extends the type signature of Example \ref{ex:type-signature} by adding three unary predicates $p_1$, $p_2$, $p_3$, and one binary relation $r$.\footnote{Extended type signatures are generated by the machine, not by hand. Our computer implementation searches through the space of increasingly complex type signatures extending the original signature. This search process is described in Section \ref{sec:iterating}.}

Consider the theory $\theta = (\phi, I, R, C)$, where:
\begin{eqnarray*}
\begin{tabular}{lll}
$I = \bigsetbegin{}
p_1(b) \\
p_2(a) \\
r(a, b) \\
r(b, a) \\
\bigsetend{}$ &
$R = \bigsetbegin{}
p_1(X) \fork p_2(X) \\
p_2(X) \fork p_3(X) \\
p_3(X) \fork p_1(X) \\
p_1(X) \rightarrow \mathit{on}(X) \\
p_2(X) \rightarrow \mathit{on}(X) \\
p_3(X) \rightarrow \mathit{off}(X)
\bigsetend{}$ &
$C = \bigsetbegin{}
\forall X {:}s, \; \mathit{on}(X) \oplus \mathit{off}(X) \\
\forall X {:}s, \; p_1(X) \oplus p_2(X) \oplus p_3(X) \\
\forall X {:}s, \; \exists ! Y {:}s \; r(X, Y)
\bigsetend{}$
\end{tabular}
\end{eqnarray*}

The infinite trace $\tau(\theta) = (A_1, A_2, ...)$ for theory $\theta$ begins with:
\begin{eqnarray*}
\begin{tabular}{ll}
$A_1 = \{ \mathit{on}(a), \mathit{on}(b), p_2(a), p_1(b), r(a, b), r(b, a) \}$ &
$A_2 = \{ \mathit{off}(a), \mathit{on}(b), p_3(a), p_2(b), r(a, b), r(b, a) \}$ \\
$A_3 = \{ \mathit{on}(a), \mathit{off}(b), p_1(a), p_3(b), r(a, b), r(b, a) \}$ &
$A_4 = \{ \mathit{on}(a), \mathit{on}(b), p_2(a), p_1(b), r(a, b), r(b, a) \}$ \\ 
\dotso 
\end{tabular}
\end{eqnarray*}
Note that the trace repeats at step 4. In fact, it is always true that the trace repeats after some finite set of time steps.

Theory $\theta$ explains the sensory sequence $S$ of Example \ref{ex:sensory-sequence}, since the trace $\tau(\theta)$ covers $S$.
Note that $\tau(\theta)$ ``fills in the blanks'' in the original sequence $S$, both predicting final time step 10, retrodicting   initial time step 1, and imputing missing values for time steps 5 and 9.
\label{ex:explains}
\end{example}

\subsection{Unifying the sensory sequence}
\label{sec:unity-conditions}

Next, we proceed from explaining a sensory sequence to ``making sense'' of that sequence.
In order for $\theta$ to make sense of $S$, it is \emph{necessary} that $\tau(\theta)$ covers $S$.
But this condition is not, on its own, \emph{sufficient}.
The extra condition that is needed for $\theta$ to count as ``making sense'' of $S$ is for $\theta$ to be \emph{unified}.
We require that the constituents of the theory are integrated into a coherent whole.
A trace $\tau(\theta)$ of theory $\theta$ is a (i) sequence of (ii) sets of ground atoms composed of (iii) predicates and (iv) objects.
For the theory $\theta$ to be unified is for unity to be achieved at each of these four levels:

\defbegin{}
\label{def:unity}
A theory $\theta$ is \define{unified} if each of the four conditions hold:
\begin{enumerate}
\item
Objects are united in space (see Section \ref{sec:spatial-unity})
\item
Predicates are united via constraints (see Section \ref{sec:conceptual-unity})
\item
Ground atoms are united into states by jointly respecting constraints and static rules (see Section \ref{sec:static-unity})
\item
States are united into a sequence by causal rules (see Section \ref{sec:temporal-unity})
\end{enumerate}
\defend{}

\subsubsection{Spatial unity}
\label{sec:spatial-unity}

\defbegin{}
\label{def:spatial-unity}
A theory $\theta$ satisfies \define{spatial unity} if for each state $A_t$ in $\tau(\theta) = (A_1, A_2, ...)$, for each pair $(x, y)$ of distinct objects, $x$ and $y$ are connected via a chain of binary atoms $\{r_1(x, z_1), r_2(z_1, z_2), ... r_n(z_{n-1}, z_n), r_{n+1}(z_n, y)\} \subseteq A_t$.
\defend{}
If this condition is satisfied, it means that given any object, we can get to any other object by hopping along relations. 
Everything is connected, even if only indirectly.

Note that this notion of spatial unity is rather abstract: the requirement is only that every pair of objects are indirectly connected via some chain of binary relations. Although some of these binary relations might be spatial relations (e.g. ``left-of''), they need not all be.
The requirement is only that every pair of objects are connected via some chain of binary relations; it does not insist that each binary relation has a specifically ``spatial'' interpretation.

\subsubsection{Conceptual unity}
\label{sec:conceptual-unity}

A theory satisfies conceptual unity if every predicate is involved in some constraint, either exclusive disjunction ($\oplus)$ or unique existence ($\exists !$).
The intuition here is that constraints combine predicates into clusters of mutual incompatibility.
\defbegin{}
\label{def:conceptual-unity}
A theory $\theta = (\phi, I, R, C)$ satisfies \define{conceptual unity} if for each unary predicate $p$ in $\phi$, there is some xor constraint in $C$ of the form $\forall X{:}t, p(X) \oplus q(X) \oplus ...$ containing $p$;
and, for each binary predicate $r$ in $\phi$, there is some xor constraint in $C$ of the form $\forall X{:}t_1, \forall Y{:}t_2, r(X, Y) \oplus s(X, Y) \oplus ...$ or some $\exists!$ constraint in $C$ of the form $\forall X{:}t, \exists ! Y{:}t_2, r(X, Y)$.
\defend{}
To see the importance of this, observe that if there are no constraints, then there are no exhaustiveness or exclusiveness relations between atoms. 
An xor constraint e.g.~$\forall X{:}t, \mathit{on}(X) \oplus \mathit{off}(X)$ both rules out the possibility that an object is simultaneously $\mathit{on}$ and $\mathit{off}$ (exclusiveness) and also rules out the possibility that an object of type $t$ is neither $\mathit{on}$ nor $\mathit{off}$ (exhaustiveness).
It is exhaustiveness which generates states that are \emph{determinate}, in which it is guaranteed every object of type $t$ is e.g.~either $\mathit{on}$ or $\mathit{off}$.
It is exclusiveness which generates \emph{incompossibility} between atoms, e.g.~that $\mathit{on}(a)$ and $\mathit{off}(a)$ are incompossible. Incompossibility, in turn, is needed to constrain the scope of the frame axiom (see Definition \ref{def:trace} above). 
Without incompossibility, \emph{all} atoms from the previous time-step would be transferred to the next time-step, and the set of true atoms in the sequence $(S_1, S_2, ...)$ would grow monotonically over time: $S_i \subseteq S_j$ if $i \leq j$, which is clearly unacceptable.
The purpose of the constraint of conceptual unity is to collect predicates into groups, to provide determinacy in each state, and to ground the incompossibility relation that constrains the way information persists between states.\footnote{A natural question to ask at this point is: why use exclusive disjunction to represent constraints? Why not instead represent constraints using strong negation or negation as failure\cite{baral1994logic}?
An exclusive disjunction can always be converted into a set of extended clauses representing the predicates' exclusiveness, and one normal clause representing their exhaustiveness.
For example, $\forall X: t, p(X) \oplus q(X)$ can be rendered as:
\begin{eqnarray*}
& & \asp{\neg p(X) \codeif q(X)} \\
& & \asp{\neg q(X) \codeif p(X)} \\
& & \asp{\codeif \naf p(X), \naf q(X), t(X) }
\end{eqnarray*}
In general, if we have an exclusive disjunction featuring $n$ predicates, we can turn this into $n * (n-1)$ clauses (using strong negation) to capture the exclusiveness of the $n$ predicates, and one clause (using negation as failure) to capture the exhaustiveness.

The exclusive disjunction constraint is a compact way of representing a lot of information about the connection between predicates.
Although the exclusive disjunction constraint can always be translated into a set of clauses (using both negation as failure and strong negation), the representation using exclusive disjunction is much more compact.

One reason, then, for expressing the constraint as an exclusive disjunction is that it is a significantly more compact representation than the representation using negation as failure. 
But another, more substantial reason is that it means we can avoid the complexities involved in the semantics if we added negation as failure to our target language \logic{}.
There are various semantics for normal logic programs that include negation as failure (e.g. Clark completion \cite{clark1978negation}, stable model semantics \cite{gelfond1988stable}, well-founded models \cite{van1991well}), but each of them introduces significant additional complexities when compared with the least model of a definite logic program:
the Clark completion is not always consistent (does not always have a model), the stable model semantics assigns the meaning of a normal logic program to a \emph{set} of models rather than a single model, and the well-founded model uses a 3-valued logic where atoms can be true, false, or undefined.
Thus, the main reason for expressing constraints using exclusive disjunction (rather than using negation as failure) is to restrict the rules to definite rules and avoid the complexities of the various semantics of normal logic programs.
(Although we do plan to extend our rules to include stratified negation, as this does not complicate the semantics in the same way that unrestricted negation does).
The inner loop of our program synthesis system is the calculation of the trace $\tau(\theta)$ by executing a \logic{} program, so it is essential that the execution is as efficient as possible. 
Hence our strong preference for definite logic programs over normal logic programs.

Why do we not allow more complex constraints (e.g. allowing any first-order sentence to be a constraint)? If we allowed any arbitrary set of first-order formulas as constraints, then computing the incompossibility relation would become much harder, given that computing entailment in first-order logic is only semi-decidable.
The reason, then, why we focus on xor constraints is that they are the simplest construct that generates the incompossibility relation needed to constrain the frame axiom.
}

\subsubsection{Static unity}
\label{sec:static-unity}

In our effort to interpret the sensory sequence, we construct various ground atoms.
These need to be grouped together, somehow, into states (sets of atoms).
But what determines how these atoms are grouped together into states?

Treating a set $A$ of ground atoms as a state is (i) to insist that $A$ satisfies all the constraints in $C$ and (ii) to insist that $A$ is closed under the static rules in $R$:
\defbegin{}
\label{def:static-unity}
A theory $\theta = (\phi, I, R, C)$ satisfies \define{static unity} if every state $(A_1, A_2, ...)$ in $\tau(\theta)$ satisfies all the constraints in $C$ and is closed under the static rules in $R$.
\defend{}
Static unity is an uncontroversial requirement and is used in other ILP systems \cite{goodman2011learning,ullman2012theory}.
Note that, from the definition of the trace in Definition \ref{def:trace}, all the states in $\tau(\theta)$ are automatically closed under the static rules in $R$.

\subsubsection{Temporal unity}
\label{sec:temporal-unity}

Given a set of states, we need to unite these elements in a \emph{sequence}. 
According to the fourth and final condition of unity, the only thing that can unite states in a sequence is a set of \emph{causal rules}.
These causal rules are \emph{universal} in two senses: they apply to all object tuples, and they apply at all times.
A causal rule $\alpha_1 \wedge ... \wedge \alpha_n \fork \alpha_0$ fixes the temporal relation between the atoms $\alpha_1, ..., \alpha_n$ (which are true at $t$) and the atom $\alpha_0$ (which is true at $t+1$):
\defbegin{}
\label{def:temporal-unity}
A sequence $(A_1, A_2, ...)$ of states satisfies \define{temporal unity} with respect to a set $R_{\fork}$ of causal rules if, for each  $\alpha_1 \wedge ... \wedge \alpha_n \fork \alpha_0$ in $R_{\fork}$, for each ground substitution $\sigma$, for each time-step $t$, if $\{\alpha_1 \sigma, ..., \alpha_n \sigma\} \subseteq A_t$ then $\alpha_0 \sigma \in A_{t+1}$.
\defend{}
Temporal unity is an uncontroversial requirement and is also used in other ILP systems such as LFIT \cite{inoue2014learning}.
Note that, from the definition of the trace in Definition \ref{def:trace}, the trace $\tau(\theta)$ \emph{automatically} satisfies temporal unity.

\subsubsection{The four conditions of unity}
\label{sec:four-unity-conditions}

To recap, the trace of a theory is a sequence of sets of atoms. The four types of element are objects, predicates, sets of atoms, and sequences of sets of atoms.
Each of the four types of element has its own form of unity:
\begin{enumerate}
\item
\emph{Spatial unity}: objects are united in space by being connected via chains of relations
\item
\emph{Conceptual unity}: predicates are united by constraints
\item
\emph{Static unity}: atoms are united in a state by jointly satisfying constraints and static rules
\item
\emph{Temporal unity}: states are united in a sequence by causal rules
\end{enumerate}


Since temporal unity is automatically satisfied from the definition of a trace in Definition \ref{def:trace}, 
we are left with only three unity conditions that need to be explicitly checked: spatial unity, conceptual unity, and static unity.
A trace partially satisfies static unity since the static rules are automatically enforced by Definition \ref{def:trace}; but the constraints are not necessarily satisfied.

Note that both checking spatial unity and checking static unity require checking every time-step, and the trace is infinitely long.
However, as long as the trace repeats at some point, Theorem \ref{cycles} ensures that we need only check the finite portion of the trace until we find the first repetition (the first $k$ such that $A_1 = A_k$ where $\tau(\theta) = (A_1, ...)$). 

\subsection{Making sense}

Now we are ready to define the central notion of ``making sense'' of a sequence.
\defbegin{}
\label{def:makes-sense}
A theory $\theta$ \define{makes sense} of a sensory sequence $S$ if $\theta$ explains $S$, i.e. $S \sqsubseteq \tau(\theta)$, and $\theta$ satisfies the four conditions of unity of Definition \ref{def:unity}.
If $\theta$ makes sense of $S$, we also say that $\theta$ is a \define{unified interpretation} of $S$.
\defend{}

\begin{example}
The theory $\theta$ of Example \ref{ex:explains} satisfies the four unity conditions since:
\begin{enumerate}
\item
For each state $A_i$ in $\tau(\theta)$, $a$ is connected to $b$ via the singleton chain $\{r(a, b)\}$, and $b$ is connected to $a$ via $\{r(b, a)\}$.
\item
The predicates of $\theta$ are $\mathit{on}, \mathit{off}, p_1, p_2, p_3, r$. 
Here, $\mathit{on}$ and $\mathit{off}$ are involved in the constraint $\forall X {:}s, \; \mathit{on}(X) \oplus \mathit{off}(X)$, 
while $p_1, p_2, p_3$ are involved in the constraint $\forall X {:}s, \; p_1(X) \oplus p_2(X) \oplus p_3(X)$, and $r$ is involved in the constraint $\forall X {:}s, \; \exists ! Y {:}s \; r(X, Y)$.
\item
Let $\tau(\theta) = (A_1, A_2, A_3, A_4, ...)$. It is straightforward to check that $A_1$, $A_2$, and $A_3$ satisfy each constraint in $C$. Observe that $A_4$ repeats $A_1$, thus Theorem \ref{cycles} ensures that we do not need to check any more time steps. 
\item
Temporal unity is automatically satisfied by the definition of the trace $\tau(\theta)$ in Definition \ref{def:trace}.
\end{enumerate}
Hence, $\theta$ makes sense of sensory sequence $S$ of Example \ref{ex:sensory-sequence}, since $S \sqsubseteq \tau(\theta)$ (Example \ref{ex:explains}) and $\theta$ also satisfies the four conditions of unity.
\end{example}

In our search for interpretations that make sense of sensory sequences, we are particularly interested in \emph{parsimonious} interpretations. 
To this end, we define the cost of a theory\footnote{Note that this simple measure of cost does not depend on the constraints in $C$ or the type signature $\phi$.
There are various alternative more complex definitions of $\mathit{cost}$. 
We could, for example, use the Kolmogorov complexity \cite{kolmogorov1963tables} of $\theta$: the size of the smallest program that can generate $\theta$. 
Or we could use Levin complexity \cite{levin1973universal} and also take into account the log of the computation time needed to generate $\tau(\theta)$, up to the point where the trace first repeats.}:

\defbegin{}
\label{def:theory-cost}
Given a theory $\theta = (\phi, I, R, C)$, the \define{cost} of $\theta$ is
\begin{eqnarray*}
|I| + \sum \bigsetbegin{} n+1 \mid \alpha_1 \wedge ... \wedge \alpha_n \circ \alpha_0 \in R, \circ \in \set{\rightarrow, \fork} \bigsetend{}
\end{eqnarray*}
Here, $\mathit{cost}(\theta)$ is just the total number of ground atoms in $I$ plus the total number of unground atoms in the rules of $R$.
\defend{}

The key notion of this section is the discrete apperception task. 
\defbegin{}
\label{def:apperception-task}
The input to an apperception task is a triple $(S, \phi, C)$ consisting of a sensory sequence $S$, a suitable type signature $\phi$, and a set $C$ of (well-typed) constraints such that (i) each predicate in $S$ appears in some constraint in $C$ and (ii) $S$ can be extended to satisfy $C$: there exists a sequence $S'$ covering $S$ such that each state in $S'$ satisfies each constraint in $C$.

Given such an input triple $(S, \phi, C)$, the \define{discrete apperception task} is to find the lowest cost theory $\theta = (\phi', I, R, C')$ such that $\phi'$ extends $\phi$, $C' \supseteq C$, and $\theta$ makes sense of $S$.
\label{def:simple-apperception-task}
\defend{}

Note that the input to an apperception task is more than just a sensory sequence $S$. It also contains a type signature $\phi$ and a set $C$ of constraints.
A natural question at this point is: why not simply let the input to an apperception task be just the sequence $S$, and ask the system to produce some theory $\theta$ satisfying the unity conditions such that $S \sqsubseteq \tau(\theta)$? 
The reason that the input needs to contain types $\phi$ and constraints $C$ to supplement $S$ is that otherwise the task is severely under-constrained, as the following example shows.

\begin{example}
Suppose our sequence is $S = (\set{\mathit{on}(a)}, \set{\mathit{off}(a)}, \set{\mathit{on}(a)}, \set{\mathit{off}(a)}, \set{\mathit{on}(a)}, \set{\mathit{off}(a)})$.
If we are not given any constraints (such as $\forall X : t, \mathit{on}(X) \oplus \mathit{off}(X)$), 
if we are free to construct any $\phi$ and any set $C$ of constraints, then the following interpretation $\theta = (\phi, I, R, C)$ will suffice, where $\phi = (T, O, P, V)$, consisting of types
$T = \set{t}$, objects $O = \set{a{:}t}$, predicates $P = \set{ \mathit{on}(t), \mathit{off}(t), p(t), q(t)}$, and variables $V = \set{X {:}t}$, and suppose that $I, R, C$ are defined as:
\begin{eqnarray*}
\begin{tabular}{lll}
$I = \bigsetbegin{}
\mathit{on}(a) \\
\mathit{off}(a)
\bigsetend{}$ &
$R = \bigsetbegin{}
\bigsetend{}$ &
$C = \bigsetbegin{}
\forall X {:}t, \; \mathit{on}(X) \oplus p(X) \\
\forall X {:}t, \; \mathit{off}(X) \oplus q(X)
\bigsetend{}$
\end{tabular}
\end{eqnarray*}
Here we have introduced two latent predicates $p$ and $q$ which are incompatible with $\mathit{on}$ and $\mathit{off}$ respectively.
But in this interpretation, $\mathit{on}$ and $\mathit{off}$ are not incompatible with each other, so the degenerate interpretation (where both $\mathit{on}$ and $\mathit{off}$ are true at all times) is acceptable.
This shows the need for including constraints on the input predicates as part of the \emph{task formulation}.

More generally, for \emph{any} sensory sequence $(S_1, ..., S_T)$ featuring predicates $p_1, ..., p_n$, but no constraints between $p_1, ..., p_n$, we can always construct a degenerate interpretation by adding new predicates $q_1, ..., q_n$ with an xor constraint $\forall X: t, p_i(X) \oplus q_i(X)$ between each predicate $p_i$ and the corresponding new predicate $q_i$. In the degenerate interpretation, the initial conditions $I$ are $S_1 \cup ... \cup S_T$, and the rules $R$ are empty. This shows that, without constraints on the predicates appearing in the initial sequence, the problem is underspecified.
\end{example}

The apperception task can be generalized to the case where we are given as input, not a single sensory sequence $S$, but a set of $m$ such sequences.
\defbegin{}
\label{def:generalized-apperception-task}
Given a set $\set{S^1, ..., S^m}$ of sensory sequences, a type signature $\phi$ and constraints $C$ such that each $(S^i, \phi, C)$ is a valid input to an apperception task as defined in Definition \ref{def:apperception-task}, the \define{generalized apperception task} is to find a lowest-cost theory $(\phi', \set{}, R, C')$ and sets $\set{I^1, ..., I^m}$ of initial conditions such that $\phi'$ extends $\phi$, $C' \supseteq C$, and for each $i = 1 .. m$, $(\phi', I^i, R, C')$ makes sense of $S^i$.
\defend{}

\subsection{Examples}

In this section, we provide a worked example of an apperception task, along with different unified interpretations.
We wish to highlight that there are always many alternative ways of interpreting a sensory sequence, each with different latent information (although some may have higher cost than others).

We continue to use our running example, the sensory sequence from Example \ref{ex:sensory-sequence}.
Here there are two sensors $a$ and $b$, and each sensor can be $\mathit{on}$ or $\mathit{off}$.
\begin{eqnarray*}
\begin{tabular}{lllll}
$S_1 = \set{}$  & $S_2 = \set{ \mathit{off}(a), \mathit{on}(b)}$ & $S_3 = \set{ \mathit{on}(a),  \mathit{off}(b)}$ & 
$S_4 = \set{ \mathit{on}(a), \mathit{on}(b)}$ & $S_5 = \set{ \mathit{on}(b)}$ \\ 
$S_6 = \set{ \mathit{on}(a),  \mathit{off}(b)}$ & $S_7 = \set{ \mathit{on}(a), \mathit{on}(b)}$ & $S_8 = \set{ \mathit{off}(a), \mathit{on}(b)}$ & $S_9 = \set{ \mathit{on}(a)}$ & $S_{10} = \set{ }$
\end{tabular}
\end{eqnarray*}
Let $\phi = (T, O, P, V)$ where $T = \{\mathit{sensor}\}$, $O = \{a, b\}$, $P = \{\mathit{on}(\mathit{sensor}), \mathit{off}(\mathit{sensor})\}$, $V = \{X {:}\mathit{sensor}\}$.
Let $C = \{\forall X {:}\mathit{sensor}, \; \mathit{on}(X) \oplus \mathit{off}(X)\}$.

Examples \ref{example:eca1}, \ref{example:eca2}, and \ref{example:eca3} below show three different unified interpretations of Example \ref{ex:sensory-sequence}.

\begin{example}
\label{example:eca1}
One possible way of interpreting Example \ref{ex:sensory-sequence} is as follows.
The sensors $a$ and $b$ are simple state machines that cycle between states $p_1$, $p_2$, and $p_3$.
Each sensor switches between $\mathit{on}$ and $\mathit{off}$ depending on which state it is in. 
When it is in states $p_1$ or $p_2$, the sensor is on; when it is in state $p_3$, the sensor is off.
In this interpretation, the two state machines $a$ and $b$ do not interact with each other in any way.
Both sensors are following the same state transitions.
The reason the sensors are out of sync is because they start in different states.

The type signature for this first unified interpretation is $\phi' = (T, O, P, V)$, where $T = \set{\mathit{sensor}}$, 
$O = \set{a{:}\mathit{sensor}, b{:}\mathit{sensor}}$, 
$P = \set{ \mathit{on}(\mathit{sensor}), \mathit{off}(\mathit{sensor}), r(\mathit{sensor}, \mathit{sensor}), p_1(\mathit{sensor}), p_2(\mathit{sensor}), p_3(\mathit{sensor})}$, and 
$V = \set{X {:}\mathit{sensor}, Y {:}\mathit{sensor}}$.
The three unary predicates $p_1$, $p_2$, and $p_3$ are used to represent the three states of the state machine.

Our first unified interpretation is the tuple $(\phi', I, R, C')$, where:
\begin{eqnarray*}
\begin{tabular}{lll}
$I = \bigsetbegin{}
p_2(a) \\
p_1(b) \\
r(a, b) \\
r(b, a) \\
\bigsetend{}$ &
$R = \bigsetbegin{}
p_1(X) \fork p_2(X) \\
p_2(X) \fork p_3(X) \\
p_3(X) \fork p_1(X) \\
p_1(X) \rightarrow \mathit{on}(X) \\
p_2(X) \rightarrow \mathit{on}(X) \\
p_3(X) \rightarrow \mathit{off}(X)
\bigsetend{}$ &
$C' = \bigsetbegin{}
\forall X {:}\mathit{sensor}, \; \mathit{on}(X) \oplus \mathit{off}(X) \\
\forall X {:}\mathit{sensor}, \; p_1(X) \oplus p_2(X) \oplus p_3(X) \\
\forall X {:}\mathit{sensor}, \; \exists ! Y {:}\mathit{sensor} \; r(X, Y)
\bigsetend{}$
\end{tabular}
\end{eqnarray*}
The update rules $R$ contain three causal rules (using $\fork$) describing how each sensor cycles from state $p_1$ to $p_2$ to $p_3$, and then back again to $p_1$.
For example, the causal rule $p_1(X) \fork p_2(X)$ states that if sensor $X$ satisfies $p_1$ at time $t$, then $X$ satisfies $p_2$ at time $t+1$. We know that $X$ is a sensor from the variable typing information in $\phi'$.
$R$ also contains three static rules (using $\rightarrow$) describing how the $\mathit{on}$ or $\mathit{off}$ attribute of a sensor depends on its state.
For example, the static rule $p_1(Y) \rightarrow \mathit{on}(X) $ states that if $X$ satisfies $p_1$ at time $t$, then $X$ also satisfies $\mathit{on}$ at time $t$.

The constraints $C'$ state that (i) every sensor is (exclusively) either $\mathit{on}$ or $\mathit{off}$, that every sensor is (exclusively) either $p_1$, $p_2$, or $p_3$, and that every sensor has exactly one sensor that is related by $r$ to it.
The binary $r$ predicate, or something like it, is needed to satisfy the constraint of spatial unity.

In this first interpretation, three new predicates are invented ($p_1$, $p_2$, and $p_3$) to represent the three states of the state machine.
In the next interpretation, we will introduce new invented objects instead of invented predicates.

Given the initial conditions $I$ and the update rules $R$, we can use our interpretation to compute which atoms hold at which time step.
In this case, $\tau(\theta) = (A_1, A_2, ...)$ where $S_i \sqsubseteq A_i$.
Note that this trace repeats: $A_i = A_{i+3}$.
We can use the trace to predict the future values of our two sensors at time step 10, since
$A_{10} = \set{\mathit{on}(a), \mathit{on}(b), r(a, b), r(b, a), p_2(a), p_1(b)}$.

As well as being able to predict future values, we can retrodict past values (filling in $A_1$), or interpolate intermediate unknown values (filling in $A_5$ or $A_9$).\footnote{This ability to ``impute'' intermediate unknown values is straightforward given an interpretation. Recent results show that current neural methods for sequence learning are more comfortable predicting future values than imputing intermediate values.}
But although an interpretation provides the resources to ``fill in'' missing data, it has no particular bias to predicting future time-steps. 
The conditions which it is trying to satisfy (the unity conditions of Section \ref{sec:unity-conditions}) do not explicitly insist that an interpretation must be able to predict future time-steps.
Rather, the ability to predict the future (as well as the ability to retrodict the past, or interpolate intermediate values) is a \emph{derived} capacity that emerges from the more fundamental capacity to ``make sense'' of the sensory sequence.

\end{example}

\begin{example}
\label{example:eca2}

There are always infinitely many different ways of interpreting a sensory sequence.
Next, we show a rather different interpretation of $S_{1:10}$ from that of Example \ref{example:eca1}.
In our second unified interpretation, we no longer see sensors $a$ and $b$ as self-contained state-machines.
Now, we see the states of the sensors as depending on their left and right neighbours. 
In this new interpretation, we no longer need the three invented unary predicates ($p_1$, $p_2$, and $p_3$), but instead introduce a new \emph{object}.

Object invention is much less explored than predicate invention in inductive logic programming. 
But Dietterich et al.~\cite{dietterich2008structured} anticipated the need for it, and Inoue \cite{inoue2016meta} uses meta-level abduction to posit unperceived objects.

Our new type signature $\phi' = (T, O, P, V)$, where
$T= \set{\mathit{sensor}}$, 
$O = \set{a{:}\mathit{sensor}, b{:}\mathit{sensor}, c{:}\mathit{sensor}}$, \\
$P = \set{ \mathit{on}(\mathit{sensor}), \mathit{off}(\mathit{sensor}), r(\mathit{sensor}, \mathit{sensor})}$, and 
$V = \set{X{:}\mathit{sensor}, Y{:}\mathit{sensor}}$.

In this new interpretation, imagine there is a one-dimensional cellular automaton with three cells, $a$, $b$, and (unobserved) $c$. 
The three cells wrap around: the right neighbour of $a$ is $b$, the right neighbour of $b$ is $c$, and the right neighbour of $c$ is $a$.
In this interpretation, the spatial relations are fixed. (We shall see another interpretation later where this is not the case).
The cells alternate between on and off according to the following simple rule: if $X$'s left neighbour is on (respectively off) at $t$, then $X$ is on (respectively off) at $t+1$.

Note that objects $a$ and $b$ are the two sensors we are given, but $c$ is a new unobserved latent object that we posit in order to make sense of the data. 
Many interpretations follow this pattern: new latent unobserved objects are posited to make sense of the changes to the sensors we are given.

Note further that part of finding an interpretation is constructing the spatial relation between objects; this is not something we are given, but something we must \emph{construct}. In this case, we posit that the imagined cell $c$ is inserted to the right of $b$ and to the left of $a$.

We represent this interpretation by the tuple $(\phi', I, R, C')$, where:
\begin{eqnarray*}
\begin{tabular}{lll}
$I = \bigsetbegin{}
\mathit{on}(a) \\
\mathit{on}(b) \\
\mathit{off}(c) \\
r(a, b) \\
r(b, c) \\
r(c, a)
\bigsetend{}$ & 
$R = \bigsetbegin{}
r(X, Y) \wedge \mathit{off}(X) \fork \mathit{off}(Y) \\
r(X, Y) \wedge \mathit{on}(X) \fork \mathit{on}(Y)
\bigsetend{} $ &
$C' = \bigsetbegin{}
\forall X{:}\mathit{sensor}, \; \mathit{on}(X) \oplus \mathit{off}(X) \\
\forall X{:}\mathit{sensor}, \; \exists ! Y{:}\mathit{sensor}, \; r(X, Y)
\bigsetend{}$
\end{tabular}
\end{eqnarray*}
Here, $\phi'$ extends $\phi$, $C'$ extends $C$, and the interpretation satisfies the unity conditions.
\end{example}

\begin{example}
\label{example:eca3}
We shall give one more way of interpreting the same sensory sequence, to show the variety of possible interpretations.

In our third interpretation, we will posit three latent cells, $c_1$, $c_2$, and $c_3$ that are distinct from the sensors $a$ and $b$.
Cells have static attributes: each cell can be either black or white, and this is a permanent unchanging feature of the cell.
Whether a sensor is on or off depends on whether the cell it is currently contained in is black or white.
The reason why the sensors change from on to off is because they \emph{move} from one cell to another.

Our new type signature $(T, O, P, V)$ distinguishes between cells and sensors as separate types:
$T = \set{\mathit{cell}, \mathit{sensor}}$, 
$O = \set{a: \mathit{sensor}, b: \mathit{sensor}, c_1: \mathit{cell}, c_2: \mathit{cell}, c_3: \mathit{cell}}$, 
$P = \set{ \mathit{on}(\mathit{sensor}), \mathit{off}(\mathit{sensor}), \mathit{part}(\mathit{sensor}, \mathit{cell}), r(\mathit{cell}, \mathit{cell}), \mathit{black}(\mathit{cell}), \mathit{white}(\mathit{cell})}$, and
$V = \set{X: \mathit{sensor}, Y: \mathit{cell}, Y_2: \mathit{cell}}$.
Our interpretation is the tuple $(\phi, I, R, C)$, where:
\begin{eqnarray*}
\begin{tabular}{lll}
$I = \bigsetbegin{}
\mathit{part}(a, c_1) \\
\mathit{part}(b, c_2) \\
r(c_1, c_2) \\
r(c_2, c_3) \\
r(c_3, c_1) \\
\mathit{black}(c_1) \\
\mathit{black}(c_2) \\
\mathit{white}(c_3)
\bigsetend{}$ &
$R = \bigsetbegin{}
\mathit{part}(X,Y) \wedge \mathit{black}(Y) \rightarrow \mathit{on}(X) \\
\mathit{part}(X,Y) \wedge \mathit{white}(Y) \rightarrow \mathit{off}(X) \\
r(Y,Y_2) \wedge \mathit{part}(X,Y_2) \fork \mathit{part}(X,Y)
\bigsetend{}$ &
$C = \bigsetbegin{}
\forall X{:}\mathit{sensor}, \; \mathit{on}(X) \oplus \mathit{off}(X) \\
\forall Y{:}\mathit{cell}, \; \mathit{black}(Y) \oplus \mathit{white}(Y) \\
\forall X{:}\mathit{sensor}, \; \exists ! Y: \mathit{cell}, \; \mathit{part}(X, Y) \\
\forall Y{:}\mathit{cell}, \; \exists ! Y_2: \mathit{cell}, \; r(Y, Y_2)
\bigsetend{}$
\end{tabular}
\end{eqnarray*}
The update rules $R$ state that the $\mathit{on}$ or $\mathit{off}$ attribute of a sensor depends on whether its current cell is black or white.
They also state that the sensors move from right-to-left through the cells.

In this interpretation, there is no state information in the sensors. 
All the variability is explained by the sensors moving from one static object to another.

Here, the sensors move about, so spatial unity is satisfied by different sets of atoms at different time-steps.
For example, at time-step 1, sensors $a$ and $b$ are indirectly connected via the ground atoms $\mathit{part}(a, c_1), r(c_1, c_2), \mathit{part}(b, c_2)$.
But at time-step 2, $a$ and $b$ are indirectly connected via a different set of ground atoms $\mathit{part}(a, c_3), r(c_3, c_1), \mathit{part}(b, c_1)$.
Spatial unity requires all pairs of objects to always be connected via some chain of ground atoms at each time-step, but it does not insist that it is the \emph{same} set of ground atoms at each time-step.
\end{example}

Examples \ref{example:eca1}, \ref{example:eca2}, and \ref{example:eca3} provide different ways of interpreting the same sensory input.
In Example \ref{example:eca1}, the sensors are interpreted as self-contained state machines. 
Here, there are no causal interactions between the sensors: each is an isolated machine.
In Examples \ref{example:eca2} and \ref{example:eca3}, by contrast, there are causal interactions between the sensors.
In Example \ref{example:eca2}, the $\mathit{on}$  and $\mathit{off}$ attributes move from left to right along the sensors.
In Example \ref{example:eca3}, it is the sensors that move, not the attributes, moving from right to left.
The difference between these two interpretations is in terms of what is moving and what is static.

Note that the interpretations of Examples \ref{example:eca1}, \ref{example:eca2}, and \ref{example:eca3} have costs 16, 12, and 17 respectively.
So the theory of Example \ref{example:eca2}, which invents an unseen object, is preferred to the other theories that posit more complex dynamics.

\subsection{Properties of interpretations}
\label{sec:theory}

In this section, we provide some results about the correctness and completeness of unified interpretations. 

\begin{theorem}
\label{soundness}
For each sensory sequence $S = (S_1, ..., S_t)$ and each unified interpretation $\theta$ of $S$, for each object $x$ that features in $S$ (i.e. $x$ appears in some ground atom $p(x)$ or $q(x,y)$ in some state $S_i$ in $S$), for each state $A_i$ in $\tau(\theta) = (A_1, A_2, ...)$, $x$ features in $A_i$.
In other words, if $x$ features in any state in $S$, then $x$ features in every state in $\tau(\theta)$.
\end{theorem}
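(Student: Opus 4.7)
The plan is to invoke the spatial unity condition directly. Let $\theta = (\phi', I, R, C')$ be the given unified interpretation with type signature $\phi' = (T', O', P', V')$. Since $\phi'$ must be suitable for $S$, every object appearing in $S$ belongs to $O'$, so in particular $x \in O'$.

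Fix an arbitrary index $i \geq 1$ and any object $y \in O'$ distinct from $x$. By the definition of spatial unity, the state $A_i$ contains a chain of binary atoms $\{r_1(x, z_1), r_2(z_1, z_2), \ldots, r_{n+1}(z_n, y)\} \subseteq A_i$ connecting $x$ to $y$. The head of the chain, $r_1(x, z_1)$, is an atom in $A_i$ that has $x$ as an argument, so $x$ features in $A_i$. Since $i$ was arbitrary, $x$ features in every state of $\tau(\theta)$.

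The only real obstacle is the degenerate case $|O'| = 1$, in which $x$ is the only object in the signature and spatial unity becomes vacuous. I would handle this case separately: since all \logic{} predicates are of arity one or two, every atom in $A_i$ must necessarily mention $x$, so it suffices to show that each $A_i$ is non-empty. This follows from the fact that $x$ features in some $S_j$ (hence in $A_j \supseteq S_j$), combined with forward propagation via the frame axiom and the periodicity of the trace given by Theorem \ref{cycles}, which lets one cover the remaining early indices by wrapping around the cycle. Modulo this small case analysis, the proof is essentially a one-line appeal to spatial unity, which is precisely the clean consequence the theorem is meant to illustrate.
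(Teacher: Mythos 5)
Your proof is correct, but it takes a genuinely different route from the paper's. The paper never touches spatial unity: it takes the atom $\alpha$ containing $x$ that is guaranteed by $S \sqsubseteq \tau(\theta)$, uses \emph{conceptual unity} (Definition \ref{def:conceptual-unity}) to find a constraint containing $\alpha$'s predicate, and then uses \emph{static unity} (Definition \ref{def:static-unity}) --- specifically the exhaustiveness half of the $\oplus$ and $\exists!$ constraints --- to conclude that every state $A_i$ must contain \emph{some} atom from that constraint group with $x$ as an argument. Your argument instead reads the conclusion directly off the chain $r_1(x, z_1), \dots$ demanded by spatial unity (Definition \ref{def:spatial-unity}), which is legitimate (even under the symmetric reading of connectedness used in the ASP encoding, $x$ still occurs as an argument of some binary atom witnessing the chain), and your treatment of the $|O'| = 1$ case via non-emptiness propagation through the frame axiom and the periodicity of Theorem \ref{cycles} is sound. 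What you lose relative to the paper is the extra information its proof yields: it shows that $x$ carries a value from the \emph{same predicate family} that was observed in $S$ at every time step, which is what actually underwrites the remark immediately after the theorem that a unified interpretation supports prediction, retrodiction, and imputation of sensor values; your chain only certifies that $x$ occurs in some binary relation. What you gain is a slightly stronger statement in the multi-object case --- every object of the signature, observed or latent, features in every state, with no appeal to the hypothesis that $x$ occurs in $S$ --- at the price of a case split that the paper's argument does not need.
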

\begin{proof}
Let $\theta = (\phi, I, R, C)$ and $\phi = (T, O, P, V)$.
Since object $x$ features in sequence $S$, there exists some atom $\alpha$ involving $x$ in some state $S_j$ in $(S_1, ..., S_t)$. 
Since $\theta$ is an interpretation, $S \sqsubseteq \tau(\theta)$, and hence $\alpha \in (\tau(\theta))_j$. Consider the two possible forms of $\alpha$:
\begin{enumerate}
\item
$\alpha = p(x)$. 
Since $\theta$ satisfies \emph{conceptual unity}, there must be a constraint involving $p$ of the form $\forall X : t, p(X) \oplus q_1(X) ... \oplus q_n(X)$ in $C$. Since $\phi$ is suitable for $S$, $x \in O$ and $\kappa_O(x) = t$.
Let $\tau(\theta) = (A_1, A_2, ...)$ and consider any $A_i$ in $\tau(\theta)$.
Since $\theta$ satisfies \emph{static unity}, $A_i$ satisfies each constraint in $C$ and in particular $A_i \models \forall X : t, p(X) \oplus q_1(X) ... \oplus q_n(X)$.
Since $\kappa_O(x) = t$, $A_i \models p(x) \oplus q_1(x) ... \oplus q_n(x)$.
Hence $\{p(x), q_1(x), ..., q_n(x)\} \cap A_i \neq \emptyset$ i.e. $x$ features in $A_i$.
\item
$\alpha = q(x, y)$ for some $y$.
Since $\theta$ satisfies \emph{conceptual unity}, there must be a constraint involving $q$. This constraint can either be (i) a binary constraint of the form $\forall X : t_1, \forall Y : t_2, q(X, Y) \oplus p_1(X, Y) \oplus ... \oplus p_n(X, Y)$ or (ii) a uniqueness constraint of the form $\forall X : t_1, \exists ! Y : t_2, q(X, Y)$.

Considering first case (i), since $\phi$ is suitable for $S$, $x, y \in O$, $\kappa_O(x) = t_1$, and $\kappa_O(y) = t_2$.
Again, let $\tau(\theta) = (A_1, A_2, ...)$ and consider any $A_i$ in $\tau(\theta)$.
Since $\theta$ satisfies \emph{static unity}, $A_i$ satisfies each constraint in $C$ and in particular $A_i \models \forall X : t_1, \forall Y : t_2, q(X, Y) \oplus p_1(X, Y) \oplus ... \oplus p_n(X, Y)$.
Since $\kappa_O(x) = t_1, \kappa_O(y) = t_2$, $A_i \models q(x, y) \oplus p_1(x, y) \oplus ... \oplus p_n(x, y)$.
Hence $\{q(x, y), p_1(x, y), ..., p_n(x, y)\} \cap A_i \neq \emptyset$ i.e. $x$ features in $A_i$.

For case (ii), again let $\tau(\theta) = (A_1, A_2, ...)$ and consider any $A_i$ in $\tau(\theta)$.
Since $\theta$ satisfies \emph{static unity}, $A_i$ satisfies each constraint in $C$ and in particular $A_i \models \forall X : t_1, \exists ! Y : t_2, q(X, Y)$.
Since $\kappa_O(x) = t_1$, $A_i \models  \exists ! Y : t_2, q(x, Y)$.
Therefore there must be some $y$ such that $\kappa_O(y) = t_2$ and $q(x,y) \in A_i$. \qedhere
\end{enumerate}
\end{proof}

Theorem \ref{soundness} provides some guarantee that admissible interpretations that satisfy the unity conditions will always be acceptable in the minimal sense that they always provide some value for each sensor. 
This theorem is important because it justifies the claim that a unified interpretation will always be able to support prediction (of future values), retrodiction (of previous values), and imputation (of missing values).

Note that this theorem does not imply that the predicate of the atom in which $x$ appears is one of the predicates appearing in the sensory sequence $S$.
It is entirely possible that it is some distinct predicate that appears in $\phi$ but has never been observed in $S$.
The following example illustrates this possibility.
\begin{example}
Suppose the sensory sequence is just $S = (\set{p(a)})$.
Suppose the type signature $(T, O, P, V)$ introduces another unary predicate $q$, i.e. $T = \set{t}$, $O = \set{a}$, $P = \set{p(t), q(t)}$, and $V = \set{X : t}$.
Suppose our interpretation is $(\phi, I, R, C)$ where $I = \set{p(a)}$, $R = \set{p(X) \fork q(X)}$, and $C = \set{\forall X : t, p(X) \oplus q(X)}$.
Here, $\tau(\theta) = (\set{p(a)}, \set{q(a)}, \set{q(a)}, \set{q(a)}, ...)$. Note that $q$ is a new predicate that does not appear in the sensory input; $q$ is a ``peer'' of $p$ (in that they are connected by an xor constraint), but $q$ was never observed.
\end{example}

The next theorem is a form of ``completeness'', showing that every sensory sequence has some admissible interpretation that satisfies the unity conditions. 

\begin{theorem}
\label{completeness}
For every apperception task $(S, \phi, C)$ there exists some interpretation $\theta = (\phi', I, R, C')$ that makes sense of $S$, where $\phi'$ extends $\phi$ and $C' \supseteq C$.
\end{theorem}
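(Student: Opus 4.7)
The plan is to prove existence by explicit construction: given the apperception task $(S, \phi, C)$ with $S = (S_1, \dots, S_T)$, I would build a degenerate interpretation whose trace is cyclic of period $T$ and deliberately hard-codes $S$ via a latent ``clock''. First I would invoke clause (ii) of Definition \ref{def:apperception-task} to fix an extension $S' = (S'_1, \dots, S'_T)$ with $S \sqsubseteq S'$ such that every $S'_t$ satisfies every constraint in $C$; the determinacy of $S'$ is what later controls the frame axiom.

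Next I would enlarge $\phi$ to $\phi'$ by adjoining: a fresh type $\mathit{clock}$ with a single latent object $c$; $T$ fresh unary phase predicates $\pi_1, \dots, \pi_T$ on $\mathit{clock}$; for each $\phi$-type $t$ enumerating its objects as $a^t_1, \dots, a^t_{n_t}$, fresh unary identity predicates $\iota^t_1, \dots, \iota^t_{n_t}$ on $t$; and enough fresh binary relations, one per relevant pair of types, to arrange $c$ together with every $\phi$-object into one directed cycle. I would form $C'$ by adding to $C$ the xor constraint $\pi_1(X) \oplus \dots \oplus \pi_T(X)$, the xor constraints $\iota^t_1(X) \oplus \dots \oplus \iota^t_{n_t}(X)$ per type, and uniqueness ($\exists!$) constraints on each new binary relation; by construction $C' \supseteq C$ and every new predicate sits in a constraint.

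The initial conditions $I$ would consist of $S'_1$ together with $\pi_1(c)$, each identity atom $\iota^t_i(a^t_i)$, and the cycle atoms. The rules $R$ would comprise the clock rules $\pi_i(X) \fork \pi_{(i \bmod T)+1}(X)$ and, for every ground atom $p(a^{t_1}_{k_1}, \dots, a^{t_n}_{k_n})$ in $S'_t$ and every $t \in \{1, \dots, T\}$, one causal rule
\[
\pi_{t'}(Z) \wedge \iota^{t_1}_{k_1}(Y_1) \wedge \dots \wedge \iota^{t_n}_{k_n}(Y_n) \fork p(Y_1, \dots, Y_n),
\]
with $t'$ the preceding phase index modulo $T$. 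All atoms in rules are unground, as required. Verification then proceeds by induction on $t$ to show $A_t = S'_t \cup L_t$, where $L_t$ contains $\pi_t(c)$, the identity atoms, and the cycle atoms: the identity and cycle atoms persist forever by the frame axiom, the phase atom is installed by a clock rule and its predecessor is evicted by the phase xor constraint, and each ground atom of $S'_t$ is installed by its tailored causal rule. Temporal unity is then automatic from Definition \ref{def:trace}; conceptual unity and spatial unity hold by construction (the persistent cycle atoms witness a binary chain between any two objects in every state); and static unity reduces to $S'_t \models C$ together with the fact that the latent atoms satisfy the new constraints.

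The main obstacle is the frame-axiom bookkeeping in the inductive step: whenever $\alpha \in S'_t \setminus S'_{t+1}$, I must exhibit some atom in $S'_{t+1}$ that is incompossible with $\alpha$ under $C'$, so that the frame axiom does not wrongly preserve $\alpha$ beyond step $t$. This is precisely where the two hypotheses of Definition \ref{def:apperception-task} are doing all the work: because every predicate in $S$ occurs in some constraint in $C$, and because $S'$ is exhaustively determinate with respect to $C$, any missing unary atom $p(a)$ at $t+1$ is evicted by its xor-peer witness in $S'_{t+1}$, and any missing binary atom $r(a,b)$ is evicted either by a binary-xor peer or by the uniqueness-forced alternative value $r(a,b')$. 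A secondary but purely combinatorial concern is arranging the mixed-type spatial cycle, which is handled by picking one fresh binary relation per consecutive type-pair around the cycle and equipping each with an $\exists!$ constraint. Once these checks are complete, $\theta = (\phi', I, R, C')$ satisfies Definition \ref{def:makes-sense}, so it makes sense of $S$.
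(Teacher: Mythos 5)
Your construction is correct and is essentially the paper's own argument: both build a degenerate interpretation that memorizes the sequence via fresh per-time-step latent unary predicates grouped by an xor constraint, emits the observed atoms from those markers via general (constant-free) rules, and bolts on a gadget to secure spatial unity (your directed cycle; the paper's simpler single ``world'' hub object with $\mathit{part}$ relations and $\exists!$ constraints). The only substantive differences are refinements on your side --- you factor the paper's $n \times m$ object-time predicates $p^i_j$ into $T$ clock phases plus $n$ identity predicates, pass first to a determinate extension $S'$, and spell out the frame-axiom eviction argument that the paper dismisses as ``straightforward to check''.
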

\begin{proof}
First, we define $\phi'$ given $\phi = (T, O, P, V)$.
For each sensor $x_i$ that features in $S$, $i = 1 .. n$, and each state $S_j$ in $S$, $j = 1 .. m$, create a new unary predicate $p^i_j$.
The intention is that $p^i_j(X)$ is true if $X$ is the $i$'th object $x_i$ at the $j$'th time-step.
If $\kappa_O(x_i) = t$ then let $\kappa_P(p^i_j) = (t)$.
For each type $t \in T$, create a new variable $X_t$ where $\kappa_V(X_t) = t$.
Let $\phi'  = (T, O, P', V')$ where $P' = P \cup \set{p^i_j \mid i = 1 .. n, j = 1 .. m}$, and $V' = V \cup \set{X_t \mid  t \in T}$.

Second, we define $\theta = (\phi', I, R, C')$.
Let the initial conditions $I$ be:
\begin{eqnarray*}
\bigsetbegin{}p^i_1(x_i) \mid i = 1 .. n\bigsetend{}
\end{eqnarray*}
Let the rules $R$ contain the following causal rules for $i = 1 .. n$ and $j = 1 .. m-1$ (where $x_i$ is of type $t$):
\begin{eqnarray*}
p^i_j(X_t) \fork p^i_{j+1}(X_t)
\end{eqnarray*}
together with the following static rules for each unary atom $q(x_i) \in S_j$:
\begin{eqnarray*}
p^i_j(X_t) \rightarrow q(X_t)
\end{eqnarray*}
and the following static rules for each binary atom $r(x_i, x_k) \in S_j$ (where $x_i$ is of type $t$ and $x_k$ is of type $t'$):
\begin{eqnarray*}
p^i_j(X_t) \wedge p^k_j(Y_{t'}) \rightarrow r(X_t, Y_{t'})
\end{eqnarray*}
We augment $C$ to $C'$ by adding the following additional constraints.
Let $P_t$ be the unary predicates for all objects of type $t$:
\begin{eqnarray*}
P_t = \bigsetbegin{} p^i_j \mid \kappa_O(x_i) = t, j = 1 .. m \bigsetend{}
\end{eqnarray*}
Let $P_t = \{p'_1, ..., p'_k\}$.
Then for each type $t$ add a unary constraint:
\begin{eqnarray*}
\forall X_t : t, p'_1(X_t) \oplus ... \oplus p'_k(X_t)
\end{eqnarray*}
It is straightforward to check that $\theta$ as defined satisfies the constraint of conceptual unity, that the constraints $C'$ are satisfied by each state in $\tau(\theta)$, and that the sensory sequence is covered by $\tau(\theta)$. 
To satisfy spatial unity, add a new ``world'' object $w$ of a new type $t_w$ and for each type $t$ add a relation $\mathit{part}_t(t, t_w)$ and a constraint $\forall X : t, \exists ! Y : t_w, \mathit{part}_t(X, Y)$. For each object $x$ of type $t$, add an initial condition atom $\mathit{part}_t(x, w)$ to $I$.
Thus, all the conditions of unity are satisfied, and $\theta$ is a unified interpretation of $S$. \qedhere
\end{proof}

\begin{example}
\label{example:completeness}
Consider the following apperception problem $(S, \phi, C)$.
Suppose there is one sensor $a$ with values $\mathit{on}$ and $\mathit{off}$.
Suppose the sensory sequence is $S_{1:7}$ where:
\begin{eqnarray*}
\begin{tabular}{llll}
$S_1 = \set{ \mathit{on}(a)}$ &
$S_2 = \set{ \mathit{off}(a)}$ &
$S_3 = \set{ \mathit{on}(a)}$ &
$S_4 = \set{ \mathit{off}(a)}$ \\
$S_5 = \set{ \mathit{on}(a)}$ &
$S_6 = \set{ \mathit{off}(a)}$ &
$S_7 = \set{ \mathit{on}(a)}$ &
\end{tabular}
\end{eqnarray*}
Let $\phi = (T, O, P, V)$ where $T= \set{t}$, $O = \set{a : t}$, $P = \set{\mathit{on}(t), \mathit{off}(t)}$, and $V = \set{}$.
Clearly, $\phi$ is suitable for $S$.
The constraints $C$ are just $\set{ \forall X : t, \mathit{on}(X) \oplus \mathit{off}(X) }$.

Applying Theorem \ref{completeness}, we generate 7 unary predicates $p_1, ..., p_7$.
The type signature $\phi'$ for this interpretation is $(T', O', P', V')$ where
$T' = \set{t, t_w}$, 
$O' = \set{a, w}$, 
$P' = P \cup \set{p_1(t), p_2(t), ..., p_7(t), \mathit{part}(t, t_w)}$, and
$V' = \set{X : t, Y : t_w} $.
Our interpretation is $(\phi', I, R, C')$ where:
\begin{eqnarray*}
\begin{tabular}{lll}
$I = \bigsetbegin{}
p_1(a) \\
\mathit{part}(a, w)
\bigsetend{}$ &
$R = \bigsetbegin{}
p_1(X) \fork p_2(X) \\
p_2(X) \fork p_3(X) \\
... \\
p_6(Y) \fork p_7(Y) \\
p_1(X) \rightarrow \mathit{on}(X) \\
p_2(X) \rightarrow \mathit{off}(X) \\
p_3(X) \rightarrow \mathit{on}(X) \\
p_4(X) \rightarrow \mathit{off}(X) \\
p_5(X) \rightarrow \mathit{on}(X) \\
p_6(X) \rightarrow \mathit{off}(X) \\
p_7(X) \rightarrow \mathit{on}(X)
\bigsetend{}$ &
$C' = \bigsetbegin{}
\forall X : t, \; \mathit{on}(X) \oplus \mathit{off}(X) \\
\forall X : t, \; p_1(X) \oplus p_2(X) \oplus ... \oplus p_7(X) \\
\forall X : t, \; \exists ! Y : t_w \; \mathit{part}(X, Y)
\bigsetend{}$
\end{tabular}
\end{eqnarray*}
\end{example}

\noindent
Note that the interpretation provided by Theorem \ref{completeness} is degenerate and unilluminating: it treats each object entirely separately (failing to capture any regularities between objects' behaviour) and treats every time-step entirely separately (failing to capture any laws that hold over multiple time-steps). 
This unilluminating interpretation provides an \emph{upper bound} on the complexity of the theory needed to make sense of the sensory sequence. 

\section{Computer implementation}
\label{sec:system}

The \sys{} is our system for solving apperception tasks.\footnote{The source code is available at https://github.com/RichardEvans/apperception.}
Given as input an apperception task $(S, \phi, C)$, the engine searches for a type signature $\phi'$ and a theory $\theta = (\phi', I, R, C')$ where $\phi'$ extends $\phi$, $C' \supseteq C$ and $\theta$ makes sense of $S$.
In this section, we describe how it is implemented.

\defbegin{}
A \define{template} is a structure for circumscribing a large but finite set of theories. It is a type signature together with constants that bound the complexity of the rules in the theory.
Formally, a template $\chi$ is a tuple $(\phi, N_{\rightarrow}, N_{\fork}, N_B)$ where $\phi$ is a type signature, $N_{\rightarrow}$ is the max number of static rules allowed in $R$, $N_{\fork}$ is the max number of causal rules allowed in $R$,
and $N_B$ is the max number of atoms allowed in the body of a rule in $R$.
\label{def:template}
\defend{}

Each template $\chi$ specifies a large (but finite) set of theories that conform to $\chi$.
Let $\Theta_{\chi, C} \subset \Theta$ be the subset of theories $(\phi, I, R, C')$ in $\Theta$ that conform to $\chi$ and where $C' \supseteq C$.

\begin{algorithm}
\DontPrintSemicolon
\SetKwInOut{Input}{input}
\SetKwInOut{Output}{output}
\Input{$(S, \phi, C)$, an apperception task}
\Output{$\theta^*$, a unified interpretation of $S$}
\BlankLine
$(s^*, \theta^*) \leftarrow (\mathit{max(float)}, nil)$ \;
\BlankLine
\ForEach{template $\chi$ extending $\phi$ of increasing complexity } {
$\theta \leftarrow \operatorname*{argmin}_\theta  \set{ \mathit{cost}(\theta) \mid \theta \in \Theta_{\chi, C}, S \sqsubseteq \tau(\theta), \mathit{unity}(\theta)}$ \; 
\If{$\theta \neq \mathit{nil}$}{
$s \leftarrow \mathit{cost}(\theta)$ \;
\If{$s < s^*$}{
$(s^*, \theta^*) \leftarrow (s, \theta)$ \;
}
}
\If{exceeded processing time}{
\KwRet{$\theta^*$} \;
}
}
\BlankLine
\BlankLine
\caption{The \sys{} algorithm in outline}
\label{algo}
\end{algorithm}

Our method, presented in Algorithm \ref{algo}, is an anytime algorithm that enumerates templates of increasing complexity. 
For each template $\chi$, it finds the $\theta \in \Theta_{\chi, C}$ with lowest cost (see Definition \ref{def:theory-cost}) that satisfies the conditions of unity.
If it finds such a $\theta$, it stores it.
When it has run out of processing time, it returns the lowest cost $\theta$ it has found from all the templates it has considered.

Note that the relationship between the complexity of a template and the cost of a theory satisfying the template is not always simple. Sometimes a theory of lower cost may be found from a template of higher complexity. This is why we cannot terminate as soon as we have found the first theory $\theta$. We must keep going, in case we later find a lower cost theory from a more complex template.


The two non-trivial parts of this algorithm are the way we enumerate templates, and the way we find the lowest-cost theory $\theta$ for a given template $\chi$.
We consider each in turn.

\subsection{Iterating through templates}
\label{sec:iterating}

We need to enumerate templates in such a way that every template is (eventually) visited by the enumeration.
Since the objects, predicates, and variables are \emph{typed} (see Definition \ref{def:type-signature}), the acceptable ranges of $O$, $P$, and $V$ depend on $T$.
Because of this, our enumeration procedure is two-tiered:
first, enumerate sets $T$ of types;
second, given a particular $T$, enumerate $(O, P, V, N_\rightarrow, N_{\fork{}}, N_B)$ tuples for that particular $T$.
We cannot, of course, enumerate \emph{all} $(O, P, V, N_\rightarrow, N_{\fork{}}, N_B)$ tuples because there are infinitely many.
Instead, we specify a constant bound ($n$) on the number of tuples, and gradually increase that bound:

\begin{algorithm}[H]
\DontPrintSemicolon
\SetKwInOut{Input}{input}
\SetKwInOut{Output}{output}
\BlankLine
\ForEach{$(T, n)$} {
emit $n$ tuples of the form $(O, P, V, N_\rightarrow, N_{\fork{}}, N_B)$ \;
}
\label{algo:enumerate}
\end{algorithm}
\noindent
In order to enumerate $(T, n)$ pairs, we use a standard diagonalization procedure. See Table \ref{table:diagonalization}.

\begin{table}
\centering
\begin{tabular}{|l|lllll}
\hline
& 100 & 200 & 300 & 400 & ...\\
\hline
1 & 1 & 2 & 4 & 7 & ... \\
2 & 3 & 5 & 8 & ... & \\
3 & 6 & 9 & ... & & \\
4 & 10 & ... & & & \\
... & ... & & & & \\
\end{tabular}
\caption[Enumerating $(t, n)$ pairs]{Enumerating $(t, n)$ pairs. Row $t$ means that there are $t$ types in $T$, while column $n$ means there are $n$ tuples of the form $(O, P, V, N_\rightarrow, N_{\fork{}}, N_B)$ to enumerate. We increment $n$ by 100. The entries in the table represent the order in which the $(t, n)$ pairs are visited.}
\label{table:diagonalization}
\end{table}

Once we have a $(T, n)$ pair, we need to emit $n$ $(O, P, V, N_\rightarrow, N_{\fork{}}, N_B)$ tuples using the types in $T$.
One way of enumerating $k$-tuples, where $k > 2$, is to use the diagonalization technique recursively: first enumerate pairs, then apply the diagonalization technique to enumerate pairs consisting of individual elements paired with pairs, and so on.
But this recursive application will result in heavy biases towards certain $k$-tuples.
Instead, we use the Haskell function \verb|Universe.Helpers.choices| to enumerate $n$-tuples while minimizing bias.
The \verb|choices :: [[a]] -> [[a]]| function takes a finite number $n$ of (possibly infinite) lists, and produces a (possibly infinite) list of  $n$-tuples, generating a $n$-way Cartesian product that is guaranteed to eventually produce every such $n$-tuple.

We use \verb|choices| to generate 6-tuples $(O, P, V, N_\rightarrow, N_{\fork{}}, N_B)$ tuples by creating six infinite streams: 
(i) $\mathcal{S}_O$: an infinite list of finite lists of typed objects, 
(ii) $\mathcal{S}_P$: an infinite list of finite lists of typed predicates,
(iii) $\mathcal{S}_V$: an infinite list of finite lists of typed variables,
(iv) $\mathcal{S}_{\rightarrow} = \{0, 1, ...\}$: the number of static rules,
(v) $\mathcal{S}_{\fork} = \{0, 1, ...\}$: the number of causal rules, and
(vi) $\mathcal{S}_B = \{0, 1, ...\}$: the max number of body atoms.
Now when we pass this list of streams to the \verb|choices| function, it produces an enumeration of the 6-way Cartesian product $\mathcal{S}_O \times \mathcal{S}_P \times \mathcal{S}_V \times \mathcal{S}_{\rightarrow} \times \mathcal{S}_{\fork} \times \mathcal{S}_B$.

\begin{example}
\label{example:enumeration}
Recall the apperception problem from Example \ref{ex:sensory-sequence}.
There are two sensors $a$ and $b$, and each sensor can be $\mathit{on}$ or $\mathit{off}$.
\begin{eqnarray*}
\begin{tabular}{lllll}
$S_1 = \set{}$  & $S_2 = \set{ \mathit{off}(a), \mathit{on}(b)}$ & $S_3 = \set{ \mathit{on}(a),  \mathit{off}(b)}$ & 
$S_4 = \set{ \mathit{on}(a), \mathit{on}(b)}$ & $S_5 = \set{ \mathit{on}(b)}$ \\ 
$S_6 = \set{ \mathit{on}(a),  \mathit{off}(b)}$ & $S_7 = \set{ \mathit{on}(a), \mathit{on}(b)}$ & $S_8 = \set{ \mathit{off}(a), \mathit{on}(b)}$ & $S_9 = \set{ \mathit{on}(a)}$ & $S_{10} = \set{ }$
\end{tabular}
\end{eqnarray*}
We shall start with an initial template $\chi_0 = (\phi = (T, O, P, V), N_{\rightarrow}, N_{\fork}, N_B)$, where $T = \set{\mathit{sensor}, \mathit{grid}}$, 
$O = \set{a{:}\mathit{sensor}, b{:}\mathit{sensor}, g{:}\mathit{grid}}$, 
$P = \set{ \mathit{on}(\mathit{sensor}), \mathit{off}(\mathit{sensor}), \mathit{part}(\mathit{sensor}, \mathit{grid})}$, 
$V = \set{X{:}\mathit{sensor}, Y{:}\mathit{sensor}}$, 
$N_{\rightarrow} = 1$,
$N_{\fork} = 3$, and
$N_B = 2$.
We use the template enumeration procedure described above to generate increasingly complex templates $\chi_1, \chi_2, ...$, using $\chi_0$ as a base.
This produces the following augmented templates : 
\begin{eqnarray*}
\begin{tabular}{lll}
$\Delta \chi_1 = (\emptyset, \emptyset, \emptyset, \{V_1{:}\mathit{sensor}\}, 0, 0, 0)$ &
$\Delta \chi_2 = (\emptyset, \emptyset, \emptyset, \emptyset, 0, 0, 1)$ &
$\Delta \chi_3 = (\emptyset, \emptyset, \{p_1(\mathit{sensor})\}, \emptyset, 0, 0, 0)$ \\
$\Delta \chi_4 = (\emptyset, \emptyset, \emptyset, \{V_1{:}\mathit{sensor}\}, 0, 0, 1)$ &
$\Delta \chi_5 = (\emptyset, \emptyset, \emptyset, \emptyset, 0, 0, 2)$ &
$\Delta \chi_6 = (\emptyset, \{o_1{:}\mathit{sensor}\}, \emptyset, \emptyset, 0, 0, 0)$ \\
$\Delta \chi_7 = (\emptyset, \emptyset, \{p_1(\mathit{sensor})\}, \emptyset, 0, 0, 1)$ &
$\Delta \chi_8 = (\emptyset, \emptyset, \emptyset, \{V_1{:}\mathit{sensor}\}, 0, 0, 2)$ &
$\Delta \chi_9 = (\emptyset, \emptyset, \emptyset, \emptyset, 0, 0, 2)$ \\
$\Delta \chi_{10} = (\emptyset, \emptyset, \{p_1(\mathit{sensor})\},  \{V_1{:}\mathit{sensor}\}, 0, 0, 0)$ & $\dotso$ & $\dotso$
\end{tabular}
\end{eqnarray*}
In the list above, we display the change from the base template $\chi_0$, so $\Delta \chi_i$ means the changes in template $\chi_i$ from the base template $\chi_0$.
Each template $\chi = (\phi=(T, O, P, V), N_{\rightarrow}, N_{\fork}, N_B)$ is flattened as a 7-tuple $(T, O, P, V,  N_{\rightarrow}, N_{\fork}, N_B)$.

Many of these templates do not have the expressive resources to find a unified interpretation.
But some do.
The first solution the \sys{} finds has the following type signature (the new elements are in bold):
\begin{eqnarray*}
\begin{tabular}{llll}
$T = \left\{ \begin{array}{l}
\mathit{grid}\\
\mathit{sensor}\\
\end{array}\right\}$ &
$O = \left\{ \begin{array}{l}
a: \mathit{sensor}\\
b: \mathit{sensor}\\
g: \mathit{grid}\\
\end{array}\right\}$ &
$P = \left\{ \begin{array}{l}
\mathbf{p_1(\mathit{sensor})}\\
\mathbf{p_2(\mathit{sensor})}\\
\mathit{off}(\mathit{sensor})\\
\mathit{on}(\mathit{sensor})\\
\mathit{part}(\mathit{sensor}, \mathit{grid})\\
\end{array}\right\}$ & 
$V = \left\{ \begin{array}{l}
S: \mathit{sensor}\\
S2: \mathit{sensor}\\
\end{array}\right\}$
\end{tabular}
\end{eqnarray*}
together with the following theory $\theta = (\phi, I, R, C)$, where:
\begin{eqnarray*}
\begin{tabular}{llll}
$I = \bigsetbegin{}
p_1(a)\\
p_2(b) \\
\mathit{on}(a) \\
\mathit{part}(a,g) \\
\mathit{part}(b,g)
\bigsetend{}$ & 
$R =  \bigsetbegin{}
p_2(\mathit{S}) \rightarrow \mathit{on}(\mathit{S})\\
p_2(\mathit{S}) \fork p_1(\mathit{S})\\
p_1(\mathit{S}) \wedge \mathit{on}(\mathit{S}) \fork \mathit{off}(\mathit{S})\\
\mathit{off}(\mathit{S}) \wedge p_1(\mathit{S}) \fork p_2(\mathit{S})\\
\bigsetend{}$ & 
$C = \bigsetbegin{}
\forall X : sensor,\;p_1(X) \oplus p_2(X)\\
\forall X : sensor, \; \exists ! Y : grid, \; part(X, Y) \\
\bigsetend{}$
\end{tabular}
\end{eqnarray*}
This solution uses the invented predicates $p_1$ and $p_2$ to represent two states of a state-machine. This is recognisable as a compressed version of Example \ref{example:eca1} above.

Later, the \sys{} finds another solution using the type signature $\phi = (T, O, P, V)$ (again, the augmented parts of the type signature are in bold):
\begin{eqnarray*}
\begin{tabular}{llll}
$T = \left\{ \begin{array}{l}
\mathit{grid}\\
\mathit{sensor}\\
\end{array}\right\}$ &
$O = \left\{ \begin{array}{l}
a: \mathit{sensor}\\
b: \mathit{sensor}\\
g: \mathit{grid}\\
\mathbf{o_1}: \mathit{sensor}\\
\end{array}\right\}$ &
$P = \left\{ \begin{array}{l}
\mathbf{r_1}(\mathit{sensor}, \mathit{sensor})\\
\mathit{off}(\mathit{sensor})\\
\mathit{on}(\mathit{sensor})\\
\mathit{part}(\mathit{sensor}, \mathit{grid})\\
\end{array}\right\}$ &
$V = \left\{ \begin{array}{l}
S: \mathit{sensor}\\
S2: \mathit{sensor}\\
\end{array}\right\}$
\end{tabular}
\end{eqnarray*}
together with the following theory $\theta = (\phi, I, R, C)$, where:
\begin{eqnarray*}
\begin{tabular}{llll}
$I = \left\{ \begin{array}{l}
\begin{tabular}{ll}
$\mathit{off}(o_1)$ &
$\mathit{on}(a)$ \\
$r_1(a,o_1)$ &
$r_1(b,a)$ \\
$r_1(o_1,b)$ &
$\mathit{part}(a,\mathit{grid})$ \\
$\mathit{part}(b,\mathit{grid})$ &
$\mathit{part}(o_1,\mathit{grid})$\\
\end{tabular}
\end{array}\right\}$ &
$R =  \left\{ \begin{array}{l}
\mathit{off}(\mathit{S}) \wedge r_1(\mathit{S},\mathit{S}2) \rightarrow \mathit{on}(\mathit{S}2)\\
\mathit{off}(\mathit{S}2) \wedge r_1(\mathit{S},\mathit{S}2) \wedge \mathit{on}(\mathit{S}) \fork \mathit{off}(\mathit{S})\\
\end{array}\right\}$ &
$C = \left\{ \begin{array}{l}
\forall X{:}\mathit{sensor}, \; \exists ! Y{:}\mathit{grid}, \; \mathit{part}(X, Y) \\
\forall X{:}\mathit{sensor}, \; \exists ! Y{:}\mathit{sensor}, \; r_1(X, Y) \\
\end{array}\right\}$
\end{tabular}
\end{eqnarray*} 
Here, it has constructed an invented object $o_1{:}\mathit{sensor}$ and posited a one-dimensional spatial relationship $r_1$ between the three sensors. 
This solution is recognisable as a variant of Example \ref{example:eca2} above.

\end{example}

\subsection{Finding the best theory from a template}
\label{sec:searching}

The most complex part of Algorithm \ref{algo} is:
\begin{eqnarray*}
\theta \leftarrow \operatorname*{argmin}_\theta  \set{ \mathit{cost}(\theta) \mid \theta \in \Theta_{\chi, C}, S \sqsubseteq \tau(\theta), \mathit{unity}(\theta)}
\end{eqnarray*}
Here, we search for a theory $\theta$ with the lowest cost (see Definition \ref{def:theory-cost}) such that $\theta$ conforms to the template $\chi$ and includes the constraints in $C$, such that $\tau(\theta)$ covers $S$, and $\theta$ satisfies the conditions of unity.
In this sub-section, we explain in outline how this works.

\begin{figure}
\centering
\begin{minipage}{18 cm}
  \subfloat[Deduction\label{fig:inference-method-1}]{%
    \includegraphics[width=0.2\textwidth]{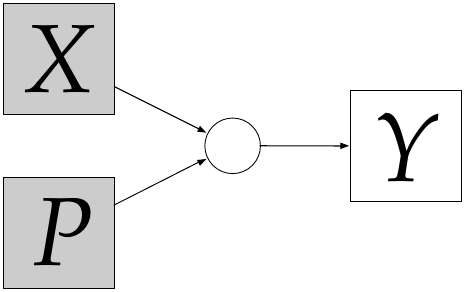}
  }
  \qquad  
  \subfloat[Abduction\label{fig:inference-method-2}]{%
    \includegraphics[width=0.2\textwidth]{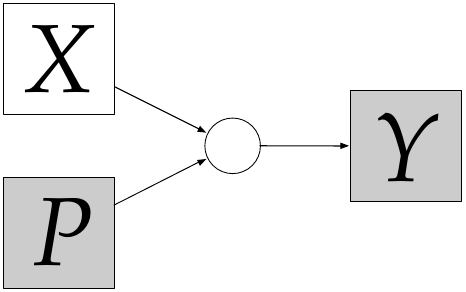}
  }  
  \qquad  
  \subfloat[Induction\label{fig:inference-method-3}]{%
    \includegraphics[width=0.2\textwidth]{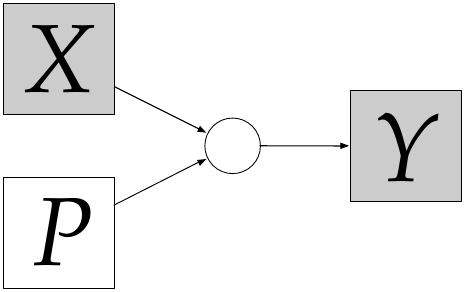}
  }  
 \qquad
  \subfloat[Abduction and induction\label{fig:inference-method-4}]{%
    \includegraphics[width=0.2\textwidth]{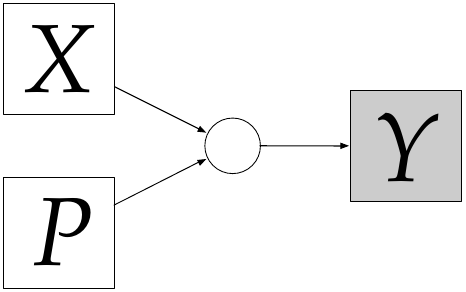}
  }
  
\caption[The varieties of inference]{The varieties of inference. Here, shaded elements are given, and unshaded elements must be generated. $X$ and $Y$ are sets of facts, while $P$ is a set of rules for transforming $X$ into $Y$.} 
\label{fig:inference-methods}
\end{minipage}
\end{figure}

Our approach combines abduction and induction to generate a unified interpretation $\theta$.\footnote{For related work that also uses abduction, see \cite{ray2009nonmonotonic,henson2012semantic,teijeiro2018adoption}.}
See Figure \ref{fig:inference-methods}.
Here, $X \subseteq \mathcal{G}$ is a set of facts (ground atoms), $P : \mathcal{G} \rightarrow \mathcal{G}$ is a procedure for generating the consequences of a set of facts, and $Y \subseteq \mathcal{G}$ is the result of applying $P$ to $X$.
If $X$ and $P$ are given, and we wish to generate $Y$, then we are performing \emph{deduction}.
If $P$ and $Y$ are given, and we wish to generate $X$, then we are performing \emph{abduction}.
If $X$ and $Y$ are given, and we wish to generate $P$, then we are performing \emph{induction}.
Finally, if only $Y$ is given, and we wish to generate both $X$ and $P$, then we are jointly performing abduction and induction.
This is what the \sys{} does.\footnote{At a high level, our system is similar to XHAIL \cite{ray2009nonmonotonic}. But there are a number of differences.
First, our program $P$ contains causal rules and constraints as well as standard Horn clauses. 
Second, our conclusion $Y$ is an infinite sequence $(S_1, S_2, ...)$ of sets, rather than a single set.
Third, we add additional filters on acceptable theories in the form of the unity conditions (see Definition \ref{def:unity}).}

Our method is described in Algorithm \ref{algo2}. 
In order to jointly abduce a set $I$ (of initial conditions) and induce sets $R$ and $C$ (of rules and constraints), 
we implement a \logic{} interpreter in ASP. See Section \ref{sec:computational-details} for the details.
This interpreter takes a set $I$ of atoms (represented as a set of ground ASP terms) and sets $R$ and $C$ of rules and constraints (represented again as a set of ground ASP terms), and computes the trace of the theory $\tau(\theta) = (S_1, S_2, ...)$ up to a finite time limit.

Concretely, we implement the interpreter as an ASP program $\pi_\tau$ that computes $\tau(\theta)$ for theory $\theta$.
We implement the conditions of unity as ASP constraints in a program $\pi_u$. 
We implement the cost minimization as an ASP program $\pi_m$ that counts the number of atoms in each rule plus the number of initialisation atoms in $I$, and uses an ASP weak constraint \cite{calimeri2012asp} to minimize this total. 
Then we generate ASP programs representing the sequence $S$, the initial conditions, the rules and constraints. We combine the ASP programs together and ask the ASP solver (\verb|clingo| \cite{gebser2014clingo}) to find a lowest cost solution.
(There may be multiple solutions that have equally lowest cost; the ASP solver chooses one of the optimal answer sets). 
We extract a readable interpretation $\theta$ from the ground atoms of the answer set.
In Section \ref{sec:computational-details}, we explain how Algorithm \ref{algo2} is implemented in ASP.
In Section \ref{sec:complexity}, we evaluate the computational complexity.
In Section \ref{sec:optimization}, we describe the various optimisations used to prune the search. 
In Section \ref{sec:ilasp}, we compare with ILASP, a state-of-the-art ILP system.

\begin{algorithm}
\DontPrintSemicolon
\SetKwInOut{Input}{input}
\SetKwInOut{Output}{output}
\Input{$S$, a sensory sequence}
\Input{$\chi = (\phi, N_{\rightarrow}, N_{\fork}, N_B)$, a template}
\Input{$C$, a set of constraints on the predicates of the sensory sequence}
\Output{$\theta$, the simplest unified interpretation of $S$ that conforms to $\chi$}
\BlankLine
$\pi_S \leftarrow \mathsf{gen\_input} (S)$ \;
$\pi_I \leftarrow \mathsf{gen\_inits}(\phi)$\;
$\pi_R \leftarrow \mathsf{gen\_rules}(\phi, N_{\rightarrow}, N_{\fork}, N_B)$ \;
$\pi_C \leftarrow \mathsf{gen\_constraints}(\phi, C)$ \;
$\Pi \leftarrow \pi_\tau \cup \pi_u \cup \pi_m \cup \pi_S \cup \pi_I \cup \pi_R \cup \pi_C$  \;
$A \leftarrow \mathsf{clingo}(\Pi)$ \;
\If{$\mathsf{satisfiable}(A)$}{
$\theta \leftarrow \mathsf{extract}(A)$ \;
\KwRet{$\theta$} \;
}
\KwRet{nil} \;
\BlankLine
\caption[Finding the lowest cost $\theta$ for sequence $S$ and template $\chi$]{Finding the lowest cost $\theta$ for sequence $S$ and template $\chi$. 
Here, $\pi_\tau$ computes the trace, $\pi_u$ checks that the unity conditions are satisfied, and $\pi_m$ minimizes the cost of $\theta$.
}
\label{algo2}
\end{algorithm}

\subsection{The ASP encoding}
\label{sec:computational-details}

Our \logic{} interpreter is written in ASP.
All elements of \logic{}, including variables, are represented by ASP constants.
A variable $X$ is represented by a constant \verb|var_x|, and a predicate $p$ is represented by a constant \verb|c_p|.
Elements of the target language are \emph{reified} in ASP, so an unground atom $p(X)$ of \logic{} is represented by a term \verb|s(c_p, var_x)|, and 
a rule is represented by a set of ground atoms for the body, and a single ground atom for the head.
For example, the static rule $p(X) \wedge q(X, Y) \rightarrow r(Y)$ is represented as:
\begin{Verbatim}[baselinestretch=0.9]
rule_body(r1, s(c_p, var_x)). 
rule_body(r1, s2(c_q, var_x, var_y)). 
rule_head_static(r1, s(c_r, var_y)). 
\end{Verbatim}
The causal rule $\mathit{on}(X) \fork \mathit{off}(X)$ is represented as:
\begin{Verbatim}[baselinestretch=0.9]
rule_body(r2, s(c_on, var_x)). 
rule_head_causes(r2, s(c_off, var_x)). 
\end{Verbatim}
Given a type signature $\phi$, we construct ASP terms that represent every well-typed unground atom in $U_\phi$, and wrap these terms in the \verb|is_var_atom| predicate. 
For example, to represent that $r(C, C_2)$ is a well-typed unground atom, we write \verb|is_var_atom(atom(c_r, var_c, var_c2))|.
Similarly, we construct ASP terms that represent every well-typed ground atom in $G_\phi$ using the \verb|is_ground_atom| predicate.

%

The initial conditions $I \subseteq G_\phi$ are represented by the \verb|init| predicate. 
For example, to represent that $I = \{p(a), r(b, c)\}$, we write:
\begin{Verbatim}[baselinestretch=0.9]
init(s(c_p, c_a)).
init(s2(c_r, c_b, c_c)).
\end{Verbatim}

Our meta-interpreter $\pi_\tau$ implements $\tau : \Theta \rightarrow (2^\mathcal{G})^*$ from Definition \ref{def:trace}. We use \verb|holds(a, t)| to represent that atom \verb|a| is true at time \verb|t| (i.e. $a \in S_t$ where $\tau(\theta) = (S_1, S_2, ..., S_t, ...)$).
\begin{Verbatim}[baselinestretch=0.9]
% initial conditions
holds(A, T) :-
    init(A),
    init_time(T).

% update for static rules
holds(GA, T) :-
    rule_head_static(R, VA),
    eval_body(R, Subs, T),
    ground_atom(VA, GA, Subs).

% update for causal rules
holds(GC, T+1) :-     
    rule_head_causes(R, VC),
    eval_body(R, Subs, T),
    ground_atom(VC, GC, Subs),
    is_time(T+1).

% frame axiom
holds(S, T+1) :-
    holds(S, T),
    is_time(T+1),
    not -holds(S, T+1).
\end{Verbatim}    
Since $\tau(\theta)$ is an infinite sequence $(A_1, A_2, ...)$, we cannot compute the whole of it.
Instead, we only compute the sequence up to the max time of the original sensory sequence $S$.

Note that the frame axiom uses negation as failure and strong negation \cite{baral1994logic} to check that some other incompatible atom has not already been added. 
Thus, the frame axiom is not restrictive and can be overridden as needed by the causal rules, to handle predicates that are not inertial.

The conditions of unity described in Section \ref{sec:unity-conditions} are represented directly as ASP constraints in $\pi_u$.
For example, spatial unity is encoded as:
\begin{Verbatim}[baselinestretch=0.9]
:-  spatial_unity_counterexample(X, Y, T).
\end{Verbatim}
Here, there is a counterexample to spatial unity at time \verb|T| if objects \verb|X| and \verb|Y| are not \verb|related|:
\begin{Verbatim}[baselinestretch=0.9]
spatial_unity_counterexample(X, Y, T) :-
    is_object(X),
    is_object(Y),
    is_time(T),
    not related(X, Y, T).
\end{Verbatim}
Here, \verb|related| is the reflexive symmetric transitive closure of the relation holding between \verb|X| and \verb|Y| if there is some relation \verb|R| connecting them.
Note that \verb|related| can quantify over relations because the \logic{} atoms and predicates have been reified into terms:
\begin{Verbatim}[baselinestretch=0.9]
related(X, Y, T) :-
    holds(s2(R, X, Y), T).

related(X, X, T) :-
    is_object(X),
    is_time(T).
    
related(X, Y, T) :- related(Y, X, T).

related(X, Y, T) :-
    related(X, Z, T),
    related(Z, Y, T).
\end{Verbatim}

When constructing a theory $\theta = (\phi, I, R, C)$, the solver needs to choose which ground atoms to use as initial conditions in $I$, which static and causal rules to include in $R$, and which xor or uniqueness conditions to use as conditions in $C$.

To allow the solver to choose what to include in $I$, we add the ASP choice rule to $\pi_I$:
\begin{Verbatim}[baselinestretch=0.9]
{ init(A) } :- is_ground_atom(A).
\end{Verbatim}

To allow the solver to choose which rules to include in $R$, we add the following clauses to $\pi_R$:
\begin{Verbatim}[baselinestretch=0.9]
0 { rule_body(R, VA) : is_var_atom(VA) } k_max_body :- is_rule(R).
1 { rule_head_static(R, VA) : is_var_atom(VA) } 1 :- is_static_rule(R).
1 { rule_head_causes(R, VA) : is_var_atom(VA) } 1 :- is_causes_rule(R).
\end{Verbatim}
Here, \verb|k_max_body| is the $N_B$ parameter of the template that specifies the max number of body atoms in any rule.
The number of rules satisfying \verb|is_static_rule| and \verb|is_causes_rule| is determined by the parameters $N_{\fork}$ and $N_\rightarrow$ in the template (see Definition \ref{def:template}). 

The ASP program $\pi_m$ minimizes the cost of the theory $\theta$ (see Definition \ref{def:theory-cost}) by using weak constraints \cite{calimeri2012asp}:
\begin{eqnarray*}
& & \asp{:\sim  rule\_body(R, A). \; [1 @ 1, R, A]} \\
& & \asp{:\sim  rule\_head\_static(R, A). \; [1 @ 1, R, A]} \\
& & \asp{:\sim  rule\_head\_causes(R, A). \; [1 @ 1, R, A]} \\
& & \asp{:\sim  init(A). \; [1 @ 1, A]} 
\end{eqnarray*}

\subsection{Complexity}
\label{sec:complexity}

This section describes the complexity of Algorithm \ref{algo2}.

We assume basic concepts and standard terminology from complexity theory.
Let $P$ be the class of problems that can be solved in polynomial time by a deterministic Turing machine, 
$NP$ be the class of problems solved in polynomial time by a non-deterministic Turing machine, and $EXPTIME$ be the class of problems solved in time $2^{n^d}$ by a deterministic Turing machine.
Let $\Sigma_{i+1}^P = NP^{\Sigma_i^P}$ be the class of problems that can be solved in polynomial time by a non-deterministic Turing machine with a $\Sigma_i^P$ oracle.
If $\Pi$ is a Datalog program, and $A$ and $B$ are sets of ground atoms, then:
\begin{itemize}
\item
the \define{data complexity} is the complexity of testing whether $\Pi \cup A \models B$, as a function of $A$ and $B$, when $\Pi$ is fixed
\item
the \define{program complexity} (also known as ``expression complexity'') is the complexity of testing whether $\Pi \cup A \models B$, as a function of $\Pi$ and $B$, when $A$ is fixed
\end{itemize}
Datalog has polynomial time data complexity but exponential time program complexity: deciding whether a ground atom is in the least Herbrand model of a Datalog program is EXPTIME-complete. The reason for this complexity is because the number of ground instances of a clause is an exponential function of the number of variables in the clause.
Finding a solution to an ASP program is in NP \cite{ben1994propositional,dantsin2001complexity}, while finding an optimal solution to an ASP program with weak constraints is in $\Sigma^P_2$ \cite{brewka2003answer,gebser2011complex}.

Since deciding whether a non-disjunctive ASP program has a solution is in NP \cite{ben1994propositional,dantsin2001complexity}, our ASP encoding of Algorithm \ref{algo2} shows that finding a unified interpretation $\theta$ for a sequence given a template is in NP.
Since verifying whether a solution to an ASP program with preferences is indeed optimal is in $\Sigma^P_2$ \cite{brewka2003answer,gebser2011complex}, our ASP encoding shows that finding the lowest cost $\theta$ is in $\Sigma^P_2$.

However, the standard complexity results assume the ASP program has \emph{already been grounded into a set of propositional clauses}.
To really understand the space and time complexity of Algorithm \ref{algo2}, we need to examine how the set of ground atoms in the ASP encoding grows as a function of the parameters in the template $\chi = (\phi=(T, O, P, V), N_{\rightarrow}, N_{\fork}, N_B)$.

Observe that, since we restrict ourselves to unary and binary predicates, the number of ground and unground atoms is a small polynomial function of the type signature parameters\footnote{The actual numbers will be less than these bounds because type-checking rules out certain combinations.}:
\begin{eqnarray*}
|G_\phi| & \leq & |P| \cdot |O|^2 \\
|U_\phi| & \leq & |P| \cdot |V|^2
\end{eqnarray*}
But note that the number of substitutions $\Sigma_\phi$ that is compatible with the signature $\phi$ is an exponential function of the number of variables $V$:
\begin{eqnarray*}
|\Sigma_\phi| & \leq & |O|^{|V|}
\end{eqnarray*}

Table \ref{table:num-ground-clauses} shows the number of ground clauses for the three most expensive clauses.
The total number of ground atoms for the three most expensive clauses is approximately $5 \cdot |\Sigma_\phi| \cdot (N_{\rightarrow} + N_{\fork}) \cdot |U_\phi| \cdot t$.

\begin{table}
\centering
\begin{tabular}{|l|l|}
\hline
{\bf Predicate} & {\bf \# ground clauses} \\
\hline
\verb|holds| & $|\Sigma_\phi| \cdot |U_\phi| \cdot (N_{\rightarrow} + N_{\fork}) \cdot t + |G_\phi| \cdot (t+1)$ \\
\hline
\verb|eval_atom(VA, Subs, T)| & $|\Sigma_\phi| \cdot |U_\phi| \cdot t$ \\
\hline
\verb|eval_body(R, Subs, T)| & $|\Sigma_\phi| \cdot (N_{\rightarrow} + N_{\fork}) \cdot t$ \\
\hline
\end{tabular}
\caption{The number of ground clauses in the ASP encoding of Algorithm \ref{algo2}.}
\label{table:num-ground-clauses}
\end{table}

%

\begin{table}
\begin{center}
\begin{tabular}{|p{6.0cm}|r|r|}
\hline
{\bf Clause} & {\bf \# ground clauses} & {\bf \# atoms}\\
\hline
\begin{lstlisting} 
holds(GA, T) :-
    rule_head_static(R, VA),
    eval_body(R, Subs, T),
    ground_atom(VA, GA, Subs).\end{lstlisting}&
$|\Sigma_\phi| \cdot |U_\phi| \cdot N_{\rightarrow} \cdot t$
&
$4$ \\
\hline
\begin{lstlisting}
holds(GA, T+1) :-
    rule_head_causes(R, VA),
    eval_body(R, Subs, T),
    ground_atom(VA, GA, Subs).\end{lstlisting}&
$|\Sigma_\phi| \cdot |U_\phi| \cdot N_{\fork} \cdot t$
&
$4$ \\
\hline
\begin{lstlisting}
eval_body(R, Subs, T) :-
    is_rule(R),
    is_subs(Subs),
    is_time(T),
    eval_atom(V, Subs, T) : rule_body(R, V).\end{lstlisting}&
$|\Sigma_\phi| \cdot (N_{\rightarrow} + N_{\fork}) \cdot t$
&
$|U_\phi| + 4$ \\
\hline
\end{tabular} 
\caption[The number of ground clauses in the ASP encoding]{The number of ground clauses in the ASP encoding of Algorithm \ref{algo2}.}
\label{table:num-ground-clauses}
\end{center}
\end{table}

\subsection{Optimization}
\label{sec:optimization}

Because of the combinatorial complexity of the apperception task, 
we had to introduce a number of optimizations to get reasonable performance on even the simplest of domains.

\subsubsection{Reducing grounding with type checking}
\label{sec:type-checking}

We use the type signature $\phi$ to dramatically restrict the set of ground atoms ($G_\phi$), the unground atoms ($U_\phi$), the substitutions ($\Sigma_\phi$), and rules $R_\phi$. 
Type-checking has been shown to drastically reduce the search space in program synthesis tasks \cite{typing-morel}.

\subsubsection{Symmetry breaking}
\label{sec:symmetry-breaking}

We use symmetry breaking to remove candidates that are equivalent.
We remove programs that are equivalent up to a variable renaming by using a strict ordering on variables.
We also remove programs that are equivalent up to a reordering of the rules by using a strict ordering on unground atoms.
%

\subsubsection{Adding redundant constraints}
\label{sec:redundant-constraints}

ASP programs can be significantly optimized by adding redundant constraints (constraints that are provably entailed by the other clauses in the program) \cite{gebser2012answer}.
We speeded up solving time (by about 30\%) by adding the following redundant constraints:
\begin{Verbatim}[baselinestretch=0.9]
:-  init(A),
    init(B),
    incompossible(A, B).

:-  rule_body(R, A),
    rule_body(R, B),
    incompossible_var_atoms(A, B).    

:-  rule_body(R, A),
    rule_head_static(R, A).

:-  rule_body(R, A),
    rule_head_causes(R, A).

:-  rule_body(R, A),
    rule_head_static(R, B),
    incompossible_var_atoms(A, B).    

:-  rule_body(R, A),
    rule_head_causes(R, B),
    incompossible_var_atoms(A, B).    
\end{Verbatim}     

%

\section{Experiments}
\label{sec:experiments}

\subsection{Five experimental domains}
\label{sec:domains}

To evaluate the generality of our system, we tested it in a variety of domains: elementary (one-dimensional) cellular automata,
drum rhythms and nursery tunes, sequence induction tasks, multi-modal binding tasks, and occlusion tasks.
These particular domains were chosen because they represent a diverse range of tasks that are simple for humans but are hard for state-of-the-art machine learning systems.
The tasks were chosen to highlight the difference between mere perception (the classification tasks that machine learning systems already excel at) and apperception (assimilating information into a coherent integrated theory, something traditional machine learning systems are not designed to do).

\subsubsection{Results}
\label{sec:results-summary}

We implemented the \sys{} in Haskell and ASP.
We used \verb|clingo| \cite{gebser2014clingo} to solve the ASP programs generated by our system.
We ran all experiments with a time-limit of 4 hours on a standard Unix desktop.

Our experiments (on the prediction task) are summarised in Table \ref{table:experiments-overview}.
Note that our accuracy metric for a single task is rather exacting: the model is accurate (Boolean) on a task iff \emph{every} hidden sensor value is predicted correctly.\footnote{The reason for using this strict notion of accuracy is that, as the domains are deterministic and noise-free, there is a simplest possible theory that explains the sensory sequence. In such cases where there is a correct answer, we wanted to assess whether the system found that correct answer exactly -- not whether it was fortunate enough to come close while misunderstanding the underlying dynamics.} It does not score any points for predicting most of the hidden values correctly.
As can be seen from Table \ref{table:experiments-overview}, our system is able to achieve good accuracy across all five domains.

\begin{table}
\centering
\begin{tabular}{|l|p{1.2cm}|p{1.2cm}|p{1.7cm}|p{2.2cm}|p{1.5cm}|}
\hline
{\bf Domain} & {\bf Tasks (\#)} & {\bf Memory (megs)} & {\bf Input size (bits)} &  {\bf Held out size (bits)} & {\bf Accuracy (\%)} \\
\hline
ECA & 256 & 473.2 & 154.0 & 10.7 & 97.3\% \\
\hline
Rhythm \& music & 30 & 2172.5 & 214.4 & 15.3 & 73.3\% \\
\hline
\emph{Seek Whence} & 30 & 3767.7 & 28.4 & 2.5 & 76.7\% \\
\hline
Multi-modal binding & 20 & 1003.2 & 266.0 & 19.1 & 85.0\% \\
\hline
Occlusion & 20 & 604.3 & 109.2 & 10.1 & 90.0 \% \\
\hline
\end{tabular}
\caption[Results for prediction tasks on the five experimental domains]{Results for prediction tasks on five domains. We show the mean information size of the sensory input, to stress the scantiness of our sensory sequences. 
We also show the mean information size of the held-out data.
Our metric of accuracy for prediction tasks is whether the system predicted \emph{every} sensor's value correctly. }
\label{table:experiments-overview}
\end{table}


\subsubsection{Elementary cellular automata}
\label{sec:eca}

An Elementary Cellular Automaton (ECA) \cite{wolfram1983statistical,cook2004universality} is a one-dimensional Cellular Automaton. 
The world is a circular array of cells. 
Each cell can be either $\mathit{on}$ or $\mathit{off}$.
The state of a cell depends only on its previous state and the previous state of its left and right neighbours.

\begin{figure}
\begin{center}
\begin{tikzpicture}[b/.style={draw, minimum size=3mm,   
       fill=black},w/.style={draw, minimum size=3mm},
       m/.style={matrix of nodes, column sep=1pt, row sep=1pt, draw, label=below:#1}, node distance=1pt]

\matrix (A) [m=0]{
|[b]|&|[b]|&|[b]|\\
&|[w]|\\
};
\matrix (B) [m=1, right=of A]{
|[b]|&|[b]|&|[w]|\\
&|[b]|\\
};
\matrix (C) [m=1, right=of B]{
|[b]|&|[w]|&|[b]|\\
&|[b]|\\
};
\matrix (D) [m=0, right=of C]{
|[b]|&|[w]|&|[w]|\\
&|[w]|\\
};
\matrix (E) [m=1, right=of D]{
|[w]|&|[b]|&|[b]|\\
&|[b]|\\
};
\matrix (F) [m=1, right=of E]{
|[w]|&|[b]|&|[w]|\\
&|[b]|\\
};
\matrix (G) [m=1, right=of F]{
|[w]|&|[w]|&|[b]|\\
&|[b]|\\
};
\matrix (H) [m=0, right=of G]{
|[w]|&|[w]|&|[w]|\\
&|[w]|\\
};
\end{tikzpicture}  
\caption[Updates for ECA rule 110]{Updates for ECA rule 110. The top row shows the context: the target cell together with its left and right neighbour. The bottom row shows the new value of the target cell given the context. A cell is black if it is on and white if it is off.}
\label{fig:update110}
\end{center}
\end{figure}
Figure \ref{fig:update110} shows one set of ECA update rules\footnote{This particular set of update rules is known as Rule $110$. Here, $110$ is the decimal representation of the binary $01101110$ update rule, as shown in Figure \ref{fig:update110}.  This rule has been shown to be Turing-complete \cite{cook2004universality}.}. 
Each update specifies the new value of a cell based on its previous left neighbour, its previous value, and its previous right neighbour. 
The top row shows the values of the left neighbour, previous value, and right neighbour.
The bottom row shows the new value of the cell.
There are 8 updates, one for each of the $2^3$ configurations.
In the diagram, the leftmost update states that if the left neighbour is $\mathit{on}$, and the cell is $\mathit{on}$, and its right neighbour is $\mathit{on}$, then at the next time-step, the cell will be turned $\mathit{off}$.
Given that each of the $2^3$ configurations can produce $\mathit{on}$ or $\mathit{off}$ at the next time-step, there are $2^{2^3} = 256$ total sets of update rules.

Given update rules for each of the 8 configurations, and an initial starting state, the trajectory of the ECA is determined.
Figure \ref{fig:trajectory110} shows the state sequence for Rule 110 above from one initial starting state of length 11.
\begin{figure}
\begin{center}
\begin{tikzpicture}[b/.style={draw, minimum size=3mm,   
       fill=black},w/.style={draw, minimum size=3mm},
       m/.style={matrix of nodes, column sep=1pt, row sep=1pt, draw, label=below:#1}, node distance=1pt]

\matrix (A) [m=110]{
|[w]|&|[w]|&|[w]|&|[w]|&|[w]|&|[b]|&|[w]|&|[w]|&|[w]|&|[w]|&|[w]|\\
|[w]|&|[w]|&|[w]|&|[w]|&|[b]|&|[b]|&|[w]|&|[w]|&|[w]|&|[w]|&|[w]|\\
|[w]|&|[w]|&|[w]|&|[b]|&|[b]|&|[b]|&|[w]|&|[w]|&|[w]|&|[w]|&|[w]|\\
|[w]|&|[w]|&|[b]|&|[b]|&|[w]|&|[b]|&|[w]|&|[w]|&|[w]|&|[w]|&|[w]|\\
|[w]|&|[b]|&|[b]|&|[b]|&|[b]|&|[b]|&|[w]|&|[w]|&|[w]|&|[w]|&|[w]|\\
|[b]|&|[b]|&|[w]|&|[w]|&|[w]|&|[b]|&|[w]|&|[w]|&|[w]|&|[w]|&|[w]|\\
|[b]|&|[b]|&|[w]|&|[w]|&|[b]|&|[b]|&|[w]|&|[w]|&|[w]|&|[w]|&|[b]|\\
|[w]|&|[b]|&|[w]|&|[b]|&|[b]|&|[b]|&|[w]|&|[w]|&|[w]|&|[b]|&|[b]|\\
|[b]|&|[b]|&|[b]|&|[b]|&|[w]|&|[b]|&|[w]|&|[w]|&|[b]|&|[b]|&|[b]|\\
|[w]|&|[w]|&|[w]|&|[b]|&|[b]|&|[b]|&|[w]|&|[b]|&|[b]|&|[w]|&|[w]|\\
|[w]|&|[w]|&|[w]|&|[w]|&|[w]|&|[w]|&|[w]|&|[w]|&|[w]|&|[w]|&|[w]|\\
};
\node at (A-11-1) {?};
\node at (A-11-2) {?};
\node at (A-11-3) {?};
\node at (A-11-4) {?};
\node at (A-11-5) {?};
\node at (A-11-6) {?};
\node at (A-11-7) {?};
\node at (A-11-8) {?};
\node at (A-11-9) {?};
\node at (A-11-10) {?};
\node at (A-11-11) {?};
\end{tikzpicture}  
\end{center}
\caption[One trajectory for ECA rule 110]{One trajectory for Rule 110. Each row represents the state of the ECA at one time-step. In this prediction task, the bottom row (representing the final time-step) is held out. }
\label{fig:trajectory110}
\end{figure}

In our experiments, we attach sensors to each of the 11 cells, produce a sensory sequence, and then ask our system to find an interpretation that makes sense of the sequence.
For example, for the state sequence of Figure \ref{fig:trajectory110}, the sensory sequence is $(S_1, ..., S_{10})$ where:
\begin{eqnarray*}
S_1 & = & \set{\mathit{off}(c_1), \mathit{off}(c_2), \mathit{off}(c_3), \mathit{off}(c_4), \mathit{off}(c_5), \mathit{on}(c_6), \mathit{off}(c_7), \mathit{off}(c_8), \mathit{off}(c_(), \mathit{off}(c_{10}), \mathit{off}(c_{11})} \\
S_2 & = & \set{\mathit{off}(c_1), \mathit{off}(c_2), \mathit{off}(c_3), \mathit{off}(c_4), \mathit{on}(c_5), \mathit{on}(c_6), \mathit{off}(c_7), \mathit{off}(c_8), \mathit{off}(c_(), \mathit{off}(c_{10}), \mathit{off}(c_{11})} \\
S_3 & = & \set{\mathit{off}(c_1), \mathit{off}(c_2), \mathit{off}(c_3), \mathit{on}(c_4), \mathit{on}(c_5), \mathit{on}(c_6), \mathit{off}(c_7), \mathit{off}(c_8), \mathit{off}(c_(), \mathit{off}(c_{10}), \mathit{off}(c_{11})} \\
S_4 & = & \set{\mathit{off}(c_1), \mathit{off}(c_2), \mathit{on}(c_3), \mathit{on}(c_4), \mathit{off}(c_5), \mathit{on}(c_6), \mathit{off}(c_7), \mathit{off}(c_8), \mathit{off}(c_(), \mathit{off}(c_{10}), \mathit{off}(c_{11})} \\
S_5 & = & \set{\mathit{off}(c_1), \mathit{on}(c_2), \mathit{on}(c_3), \mathit{on}(c_4), \mathit{on}(c_5), \mathit{on}(c_6), \mathit{off}(c_7), \mathit{off}(c_8), \mathit{off}(c_(), \mathit{off}(c_{10}), \mathit{off}(c_{11})} \\
... &&
\end{eqnarray*}
Note that we do \emph{not} provide the spatial relation between cells. The system does not know that e.g.~cell $c_1$ is directly to the left of cell $c_2$. 

\paragraph{Results}
Given the 256 ECA rules, all with the same initial configuration, we treated the trajectories as a prediction task and applied our system to it.
Our system was able to predict 249/256 correctly.
In each of the 7/256 failure cases, the \sys{} found a unified interpretation, but this interpretation produced a prediction which was not the same as the oracle. 
For example, the dynamics found for Figure \ref{fig:trajectory110} above are:
\begin{eqnarray*}
R = \bigsetbegin{}
r(X, Y) \wedge \mathit{on}(X) \wedge \mathit{off}(Y) \fork \mathit{on}(Y) \\
r(X, Y) \wedge r(Y, Z) \wedge \mathit{on}(X) \wedge \mathit{on}(Z) \wedge \mathit{on}(Y) \fork \mathit{off}(Y)
\bigsetend{}
\end{eqnarray*}
These rules exactly capture the dynamics of the cells that \emph{change}. The other cells retain their value from the previous time-step, according to the frame axiom of Definition \ref{def:trace}.

The initial conditions found by the \sys{} describe the initial values of the cells, and also specify the latent $r$ relation between cells: 
\begin{eqnarray*}
I = \bigsetbegin{}
\begin{tabular}{lllllllllll}
$\mathit{off}(c_1)$ &
$\mathit{off}(c_2)$ &
$\mathit{off}(c_3)$ &
$\mathit{off}(c_4)$ &
$\mathit{off}(c_5)$ &
$\mathit{on}(c_6)$ &
$\mathit{off}(c_7)$ &
$\mathit{off}(c_8)$ &
$\mathit{off}(c_9)$ &
$\mathit{off}(c_{10})$ &
$\mathit{off}(c_{11})$ \\
$r(c_1, c_{11})$ &
$r(c_2, c_1)$ &
$r(c_3, c_2)$ &
$r(c_4, c_3)$ &
$r(c_5, c_4)$ &
$r(c_6, c_5)$ &
$r(c_7, c_6)$ &
$r(c_8, c_7)$ &
$r(c_9, c_8)$ &
$r(c_{10}, c_9)$ &
$r(c_{11}, c_{10})$
\end{tabular}
\bigsetend{}
\end{eqnarray*}
Here, the system uses $r(X, Y)$ to mean that cell $Y$ is immediately to the right of cell $X$.
Note that \emph{the system has constructed the spatial relation itself}. 
It was not given the spatial relation $r$ between cells.
All it was given was the sensor readings of the 11 cells.
It constructed the spatial relationship $r$ between the cells in order to make sense of the data.


\subsubsection{Drum rhythms and nursery tunes}
\label{sec:rhythms-and-tunes}

We also tested our system on simple melodies and rhythms.
Here, each sensor is an auditory receptor that is tuned to listen for a particular note or drum beat.
In the tune tasks, there is one sensor for $C$, one for $D$, one for $E$, all the way to $\mathit{HighC}$. (There are no flats or sharps).
In the rhythm tasks, there is one sensor listening out for bass drum, one for snare drum, and one for hi-hat.
Each sensor can distinguish four loudness levels, between 0 and 3. 
When a note is pressed, it starts at max loudness (3), and then decays down to 0.
Multiple notes can be pressed simultaneously. 

For example, the \emph{Twinkle Twinkle Little Star} tune generates the following sensor readings (assuming 8 time-steps for a bar):
\begin{eqnarray*}
S_1 & = & \set{ v(s_c, 3), v(s_d, 0), v(s_e, 0), v(s_f, 0),  v(s_g, 0), v(s_a, 0), v(s_b, 0), v(s_{c*}, 0) } \\
S_2 & = & \set{ v(s_c, 2), v(s_d, 0), v(s_e, 0), v(s_f, 0),  v(s_g, 0), v(s_a, 0), v(s_b, 0), v(s_{c*}, 0) } \\
S_3 & = & \set{ v(s_c, 3), v(s_d, 0), v(s_e, 0), v(s_f, 0),  v(s_g, 0), v(s_a, 0), v(s_b, 0), v(s_{c*}, 0) } \\
S_4 & = & \set{ v(s_c, 2), v(s_d, 0), v(s_e, 0), v(s_f, 0),  v(s_g, 0), v(s_a, 0), v(s_b, 0), v(s_{c*}, 0) } \\
S_5 & = & \set{ v(s_c, 1), v(s_d, 0), v(s_e, 0), v(s_f, 0),  v(s_g, 3), v(s_a, 0), v(s_b, 0), v(s_{c*}, 0) } \\
S_6 & = & \set{ v(s_c, 0), v(s_d, 0), v(s_e, 0), v(s_f, 0),  v(s_g, 2), v(s_a, 0), v(s_b, 0), v(s_{c*}, 0) } \\
...
\end{eqnarray*}

\paragraph{Results}
Recall that our accuracy metric is stringent and only counts a prediction as accurate if \emph{every} sensor's value is predicted correctly.
In the rhythm and music domain, this means the \sys{} must correctly predict the loudness value (between 0 and 3) for each of the sound sensors. There are 8 sensors for tunes and 3 sensors for rhythms. 
When we tested the \sys{} on the 20 drum rhythms and 10 nursery tunes, our system was able to predict 22/30 correctly.
Note that the interpretations found are large and complex programs by the standards of state-of-the-art ILP systems.
In \emph{Three Blind Mice}, for example, the interpretation contained 10 update rules and 34 initialisation atoms making a total of 44 clauses.

\subsubsection{\emph{Seek Whence} and C-test sequence induction intelligence tests}
\label{sec:seek-whence}

Hofstadter introduced the \define{Seek Whence}\footnote{The name is a pun on ``sequence''. See also the related Copycat domain \cite{mitchell1993analogy}.} domain in \cite{hofstadter2008fluid}. 
The task is, given a sequence $s_1, ..., s_t$ of symbols, to predict the next symbol $s_{t+1}$.
Typical \emph{Seek Whence} tasks include\footnote{Hofstadter used natural numbers, but we transpose the sequences to letters, to bring them in line with the Thurstone letter completion problems \cite{thurstone1941factorial} and the C-test \cite{hernandez1998formal}.}:
\begin{itemize}
\item
\sequence{b, b, b, c, c, b, b, b, c, c, b, b, b, c, c}
\item
\sequence{a, f, b, f, f, c, f, f, f, d, f, f}
\item
\sequence{b, a, b, b, b, b, b, c, b, b, d, b, b, e, b}
\end{itemize}
Hofstadter called the third sequence the ``theme song'' of the \emph{Seek Whence} project because of its difficulty.
There is a ``perceptual mirage'' in the sequence because of the sub-sequence of five $b$'s in a row that makes it hard to see the intended pattern: $(b,x,b)^*$ for ascending $x$.

\begin{figure}
\centering
\begin{tabular}{|l|l|}
\hline
\sequence{a,a,b,a,b,c,a,b,c,d,a} & 
\sequence{a,b,c,d,e}\\
\sequence{b,a,b,b,b,b,b,c,b,b,d,b,b,e} &
\sequence{a,b,b,c,c,c,d,d,d,d,e}\\
\sequence{a,f,e,f,a,f,e,f,a,f,e,f,a} &
\sequence{b,a,b,b,b,c,b,d,b,e}\\
\sequence{a,b,b,c,c,d,d,e,e} &
\sequence{a,b,c,c,d,d,e,e,e,f,f,f}\\
\sequence{f,a,f,b,f,c,f,d,f} &
\sequence{a,f,e,e,f,a,a,f,e,e,f,a,a}\\
\sequence{b,b,b,c,c,b,b,b,c,c,b,b,b,c,c} &
\sequence{b,a,a,b,b,b,a,a,a,a,b,b,b,b,b}\\
\sequence{b,c,a,c,a,c,b,d,b,d,b,c,a,c,a} &
\sequence{a,b,b,c,c,d,d,e,e,f,f}\\
\sequence{a,a,b,a,b,c,a,b,c,d,a,b,c,d,e} &
\sequence{b,a,c,a,b,d,a,b,c,e,a,b,c,d,f}\\
\sequence{a,b,a,c,b,a,d,c,b,a,e,d,c,b} &
\sequence{c,b,a,b,c,b,a,b,c,b,a,b,c,b}\\
\sequence{a,a,a,b,b,c,e,f,f} &
\sequence{a,a,b,a,a,b,c,b,a,a,b,c,d,c,b}\\
\sequence{a,a,b,c,a,b,b,c,a,b,c,c,a,a,a} &
\sequence{a,b,a,b,a,b,a,b,a}\\
\sequence{a,c,b,d,c,e,d} &
\sequence{a,c,f,b,e,a,d}\\
\sequence{a,a,f,f,e,e,d,d} &
\sequence{a,a,a,b,b,b,c,c}\\
\sequence{a,a,b,b,f,a,b,b,e,a,b,b,d} &
\sequence{f,a,d,a,b,a,f,a,d,a,b,a}\\
\sequence{a,b,a,f,a,a,e,f,a} &
\sequence{b,a,f,b,a,e,b,a,d}\\
\hline
\end{tabular}
\caption{Sequences from \emph{Seek Whence} and the C-test}
\label{diag:sw}
\end{figure}

\paragraph{Results}
Given the 30 \emph{Seek Whence} sequences, we treated the trajectories as a prediction task and applied our system to it.
Our system was able to predict 23/30 correctly.
For the 7 failure cases, 4 of them were due to the system not being able to find any unified interpretation within the memory and time limits, while in 3 of them, the system found a unified interpretation that produced the ``incorrect'' prediction.

The first key point we want to emphasise here is that our system was able to achieve human-level performance\footnote{See Meredith \cite{meredith1986seek} for empirical results 25 students  on the``Blackburn dozen'' \emph{Seek Whence} problems.}  on these tasks without hand-coded domain-specific knowledge.\footnote{The one piece of domain-specific knowledge we inject is the successor relation between the letters $a$, $b$, $c$, ...}
This is a \emph{general} system for making sense of sensory data that, when applied to the \emph{Seek Whence} domain, is able to solve these particular problems.
The second point we want to stress is that our system did not learn to solve these sequence induction tasks after seeing many previous examples.\footnote{Machine learning approaches to these tasks need thousands of examples before they can learn to predict. See for example \cite{barrett2018measuring}.} 
On the contrary: our system had never seen any such sequences before; it confronts each sequence \emph{de novo}, without prior experience.
This system is, to the best of our knowledge, the first such general system that is able to achieve such a result.

\subsubsection{Binding tasks}
\label{sec:probe-task-binding}

The binding problem \cite{holcombe2009binding} is the task of recognising that information from different sensory modalities should be collected together as different aspects of a single external object.
For example, you hear a buzzing and a siren in your auditory field and you see an insect and an ambulance in your visual field.
How do you associate the buzzing and the insect-appearance as aspects of one object, and the siren and the ambulance appearance as aspects of a separate object?

To investigate how our system handles such binding problems,  we tested it on the following multi-modal variant of the ECA described above.
Here, there are two types of sensor. The light sensors have just two states: black and white, while the touch sensors have four states: fully un-pressed (0), fully pressed (3), and two intermediate states (1, 2).
After a touch sensor is fully pressed (3), it slowly depresses, going from states 2 to 1 to 0 over 3 time-steps.
In this example, we chose Rule 110 (the Turing-complete ECA rule) with the same initial configuration as in Figure \ref{fig:trajectory110}, as described earlier.
In this multi-modal variant, there are 11 light sensors, one for each cell in the ECA, and two touch sensors on cells 3 and 11.
See Figure \ref{fig:binding}.

\paragraph{Results} We ran 20 multi-modal binding experiments, with different ECA rules, different initial conditions, and the touch sensors attached to different cells. The engine achieved 85\% accuracy.

\begin{figure}
\begin{center}
\begin{tikzpicture}[b/.style={draw, minimum size=3mm,   
       fill=black},w/.style={draw, minimum size=3mm},
       m/.style={matrix of nodes, column sep=1pt, row sep=1pt, draw, label=below:#1}, node distance=1pt]

\matrix (A) [m=110]{
$l_1$&$l_2$&${\color{red}l_3}$&$l_4$&$l_5$&$l_6$&$l_7$&$l_8$&$l_9$&$l_{10}$&${\color{blue}l_{11}}$&$t_1$&$t_2$\\
W&W&{\color{red}W}&W&W&B&W&W&W&W&{\color{blue}W}&{\color{red}0}&{\color{blue}0}\\
W&W&{\color{red}W}&W&B&B&W&W&W&W&{\color{blue}W}&{\color{red}0}&{\color{blue}0}\\
W&W&{\color{red}W}&B&B&B&W&W&W&W&{\color{blue}W}&{\color{red}0}&{\color{blue}0}\\
W&W&{\color{red}B}&B&W&B&W&W&W&W&{\color{blue}W}&{\color{red}3}&{\color{blue}0}\\
W&B&{\color{red}B}&B&B&B&W&W&W&W&{\color{blue}W}&{\color{red}3}&{\color{blue}0}\\
B&B&{\color{red}W}&W&W&B&W&W&W&W&{\color{blue}W}&{\color{red}2}&{\color{blue}0}\\
B&B&{\color{red}W}&W&B&B&W&W&W&W&{\color{blue}B}&{\color{red}1}&{\color{blue}3}\\
W&B&{\color{red}W}&B&B&B&W&W&W&B&{\color{blue}B}&{\color{red}0}&{\color{blue}3}\\
B&B&{\color{red}B}&B&W&B&W&W&B&B&{\color{blue}B}&{\color{red}3}&{\color{blue}3}\\
W&W&{\color{red}W}&B&B&B&W&B&B&W&{\color{blue}W}&{\color{red}2}&{\color{blue}2}\\
?&?&?&?&?&?&?&?&?&?&?&?&?\\
};
\end{tikzpicture}  
\end{center}
\caption[A multi-modal trace of ECA rule 110 with light sensors and touch sensors]{A multi-modal trace of ECA rule 110 with eleven light sensors (left) $l_1, ..., l_{11}$ and two touch sensors (right) $t_1, t_2$ attached to cells 3 and 11. Each row represents the states of the sensors for one time-step. For this prediction task, the final time-step is held out.}
\label{fig:binding}
\end{figure}

\subsubsection{Occlusion tasks}
\label{sec:probe-task-binding}

Neural nets that predict future sensory data conditioned on past sensory data struggle to solve occlusion tasks because it is hard to inject into them the prior knowledge that objects persist over time.
Our system, by contrast, was designed to posit latent objects that persist over time.

To test our system's ability to solve occlusion problems, we generated a set of tasks of the following form:
there is a 2D grid of cells in which objects move horizontally.
Some move from left to right, while others move from right to left, with wrap around when they get to the edge of a row.
The objects move at different speeds. 
Each object is placed in its own row, so there is no possibility of collision.
There is an ``eye'' placed at the bottom of each column, looking up. 
Each eye can only see the objects in the column it is placed in.
An object is occluded if there is another object below it in the same column.
See Figure \ref{fig:occlusion}.

\begin{figure}
\centering
\includegraphics[scale=0.50]{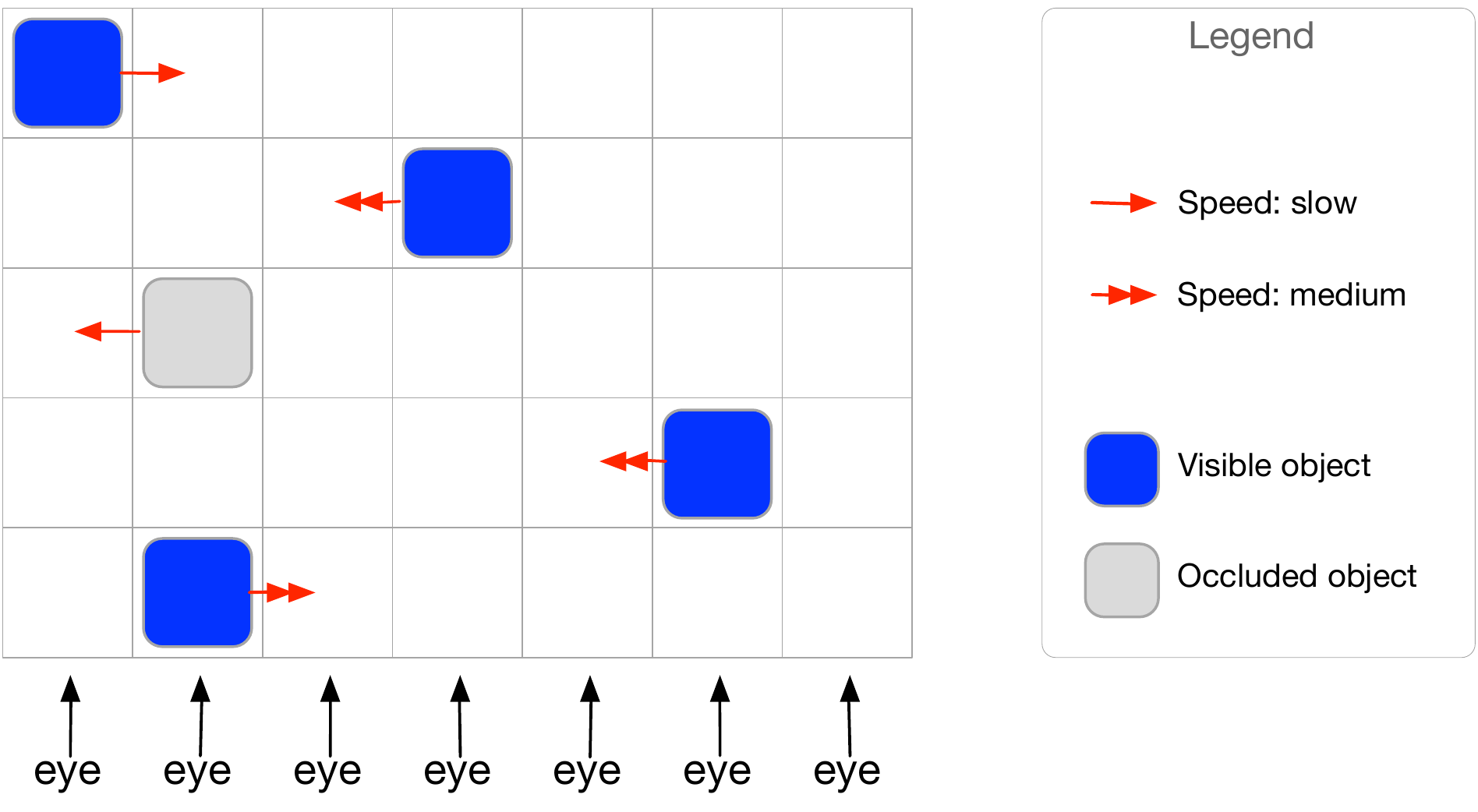}
\caption{An occlusion task}
\label{fig:occlusion}
\end{figure}

The system receives a sensory sequence consisting of the positions of the moving objects whenever they are visible.
The positions of the objects when they are occluded is used as held-out test data to verify the predictions of the model.
This is an imputation task.

\paragraph{Results} 
We generated 20 occlusion tasks by varying the size of the grid, the number of moving objects, their direction and speed.
The \sys{} achieved 90\% accuracy.

\subsection{Empirical evaluations}
\label{sec:results}

In this section, we test our system to evaluate the truth of the following hypotheses:
\begin{enumerate}
\item
The five domains of section \ref{sec:domains} are challenging tasks for existing baselines.
\item
Our system handles retrodiction and imputation just as easily as prediction.
\item
The features of our system (unity conditions, cost minimisation) are essential to its performance.
\item
Our system outperforms state-of-the-art inductive logic programming approaches in the five domains.
\end{enumerate}
We consider each in turn.

\subsubsection{The five domains of section \ref{sec:domains} are challenging tasks for existing baselines}
\label{sec:baselines}
To evaluate whether our domains are indeed sufficiently challenging, we compared our system against four baselines.
The first \define{constant} baseline always predicts the same constant value for every sensor for each time-step.
The second \define{inertia} baseline always predicts that the final hidden time-step equals the penultimate time-step. 
The third \define{MLP} baseline is a fully-connected multilayer perceptron (MLP) \cite{murphy2012machine} that looks at a window of earlier time-steps to predict the next time-step. 
The fourth \define{LSTM} baseline is a recurrent neural net based on the long short-term memory (LSTM) architecture \cite{hochreiter1997long}.

The neural baselines are designed to exploit potential statistical patterns that are indicative of hidden sensor states. In the MLP baseline, we formulate the problem as a multi-class classification problem, where the input consists in a feature representation ${\bf x}$ of relevant context sensors, and a feed-forward network is trained to predict the correct state ${\bf y}$ of a given sensor in question. In the prediction task, the feature representation comprises one-hot\footnote{A one-hot representation of feature $i$ of $n$ possible features is a vector of length $n$ in which all the elements are 0 except the $i$'th element.}  representations for the state of every sensor in the previous two time steps before the hidden sensor. The training data consists of the collection of all observed states in an episode (as potential hidden sensors), together with the respective history before. Samples with incomplete history window (at the beginning of the episode) are discarded.

The MLP classifier is a 2-layer feed-forward neural network, which is trained on all training examples derived from the current episode (thus no cross-episode transfer is possible). We restrict the number of hidden neurons to (20, 20) for the two layers, respectively, in order to prevent overfitting given the limited number of training points within an episode. We use a learning rate of $10^{-3}$ and train the model using the \emph{Adam} optimiser \cite{kingma2014adam} for up to $200$ epochs, holding aside 10\% of data for early stopping. 

Given that the input is a temporal sequence, a recurrent neural network (that was designed to model temporal dynamics) is a natural choice of baseline.
But we found that the LSTM performs only slightly better than the MLP on Seek Whence tasks, and worse on the other tasks. 
The reason for this is that the paucity of data (a single temporal sequence consisting of a small number of time-steps) does not provide enough information for the high capacity LSTM to learn desirable gating behaviour. The simpler and more constrained MLP with fewer weights is able to do slightly better on some of the tasks, yet both neural baselines achieve low accuracy in absolute terms.

Figure \ref{fig:baselines-chart} shows the results. Clearly, the tasks are very challenging for all four baseline systems.

\begin{figure}
\centering
\begin{tikzpicture}
\begin{axis}[
    ybar,
    enlargelimits=0.25,
    legend style={at={(0.5,-0.15)},
      anchor=north,legend columns=-1},
    ylabel={predictive accuracy},
    symbolic x coords={eca,music,Seek-Whence},
    xtick=data,
    ]
\addplot[ybar, pattern=dots] coordinates {(eca,97) (music,73) (Seek-Whence,76)};
\addplot[ybar, pattern=north west lines] coordinates {(eca,8) (music,2) (Seek-Whence,26)};
\addplot[ybar, pattern=north east lines] coordinates {(eca,29) (music,0) (Seek-Whence,33)};
\addplot[ybar, pattern=grid] coordinates {(eca,14) (music,7) (Seek-Whence,17)};
\addplot[ybar, pattern=horizontal lines] coordinates {(eca,3) (music,0) (Seek-Whence,18)};
\legend{our system (AE),constant baseline, inertia baseline, MLP baseline, LSTM baseline}
\end{axis}
\end{tikzpicture}
\caption[Comparison with baselines]{Comparison with baselines. We display predictive accuracy on the held-out final time-step.}
\label{fig:baselines-chart}
\end{figure}

\subsubsection{Our system handles retrodiction and imputation just as easily as prediction}
\label{sec:retrodiction-imputation}

To evaluate whether our system is just as capable of retrodicting earlier values and imputing missing intermediate values as it is at predicting future values, we ran tests where the unseen hidden sensor values were at the first time step (in the case of retrodiction) or randomly scattered through the time-series (in the case of imputation).\footnote{A deterministic transition dynamic is a function from a set of ground atoms to a set of ground atoms. If this function is not injective, then information is lost as we go through time: there will be a unique next step from the current time-step, but there can be multiple previous steps that transition into the current time-step. Our search procedure uses maximum a posteriori (MAP) estimation: we find a single model with the highest posterior (based on the likelihood -- how well it explains the sequence, and on the prior -- the program length), and use that model to predict, retrodict, and impute. A more ambitious Bayesian approach would construct a probability distribution over rival theories, and use a mixture model for retrodiction. But, given the computation complexity of finding a single solution (see Section \ref{sec:complexity}),  this ambitious approach is -- in the short term at least -- prohibitively expensive.}
We made sure that the number of hidden sensor values was the same for prediction, retrodiction, and imputation.

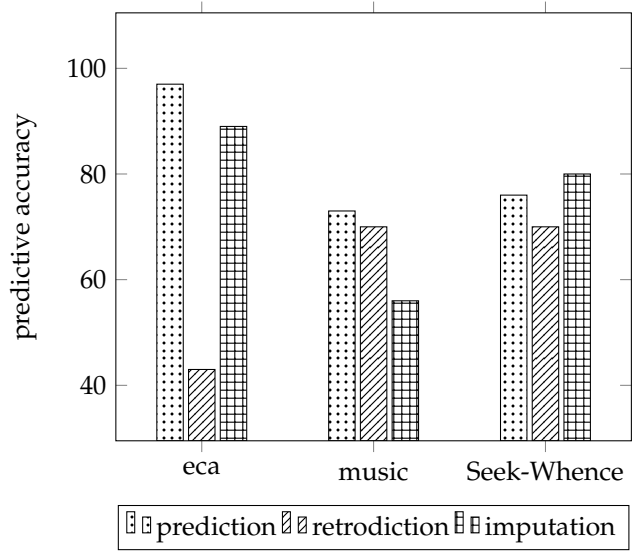
\begin{figure}
\centering
\begin{tikzpicture}
\begin{axis}[
    ybar,
    enlargelimits=0.25,
    legend style={at={(0.5,-0.15)},
      anchor=north,legend columns=-1},
    ylabel={predictive accuracy},
    symbolic x coords={eca,music,Seek-Whence},
    xtick=data,
    ]
\addplot[ybar, pattern=dots] coordinates {(eca,97) (music,73) (Seek-Whence,76)};
\addplot[ybar, pattern=north east lines] coordinates {(eca,43) (music,70) (Seek-Whence,70)};
\addplot[ybar, pattern=grid] coordinates {(eca,89) (music,56) (Seek-Whence,80)};

\legend{prediction,retrodiction,imputation}
\end{axis}
\end{tikzpicture}
\caption[Comparing prediction with retrodiction and imputation]{Comparing prediction with retrodiction and imputation. In retrodiction, we display accuracy on the held-out initial time-step. In imputation, a random subset of atoms are held-out; the held-out atoms are scattered throughout the time-series. In other words, there may be different held-out atoms at different times. The number of held-out atoms in imputation matches the number of held-out atoms in prediction and retrodiction.}
\label{fig:retrodiction-and-imputation-chart}
\end{figure}

Figure \ref{fig:retrodiction-and-imputation-chart} shows the results.
The results are significantly lower for retrodiction in the ECA tasks, but otherwise comparable.
The reason for retrodiction's lower performance on ECA is that for a particular initial configuration there are a significant number (more than 50\%) of the ECA rules that wipe out all the information in the current state after the first state transition, and all subsequent states then remain the same. 
The results for imputation are comparable with the results for prediction. Although the results for rhythm and music are lower, the results on Seek Whence are slightly higher.

\subsubsection{The features of our system are essential to its performance.}
\label{sec:ablation}

To verify that the unity conditions are doing useful work, we performed a number of experiments in which the various conditions were removed, and compared the results.
We ran four ablation experiments.
In the first, we removed the check that the theory's trace covers the input sequence: $S \sqsubseteq \tau(\theta)$
(see Definition \ref{def:makes-sense}).
In the second, we removed the check on conceptual unity.
Removing this condition means that the unary predicates are no longer connected together via exclusion relations $\oplus$, and the binary predicates are no longer constrained by $\exists !$ conditions.
(See Definition \ref{def:conceptual-unity}).
In the third ablation test, we removed the check on spatial unity.
Removing this condition means allowing objects which are not connected via binary relations.
In the fourth ablation test, we removed the cost minimization part of the system.
Removing this minimization means that the system will return the first interpretation it finds, irrespective of size.

The results of the ablation experiments are displayed in Table \ref{table:ablation}.
The first ablation test, where we remove the check that the generated sequence of sets of ground atoms respects the original sensory sequence ($S \sqsubseteq \tau(\theta)$), performs very poorly. 
Of course, if the generated sequence does not cover the given part of the sensory sequence, it is highly unlikely to accurately predict the held-out part of the sensory sequence. 
This test is just a sanity check that our evaluation scripts are working as intended.

The second ablation test, where we remove the check on conceptual unity, also performs poorly.
The reason is that without constraints, there are no incompossible atoms.
Recall from Definition \ref{def:trace} that two atoms are incompossible if there is some $\oplus$ constraint or some $\exists !$ constraint that means the two atoms cannot be simultaneously true.
But in Definition \ref{def:trace}, the frame axiom forces an atom that was true at the previous time-step to also be true at the next time-step unless the old atom is incompossible with some new atom:
we add $\alpha$ to $H_t$ if $\alpha$ is in $H_{t-1}$ and there is no atom in $H_t$ that is incompossible with $\alpha$.
But if there are no incompossible atoms, then all previous atoms are always added.
Therefore, if there are no $\oplus$ and $\exists !$ constraints, then the set of true atoms monotonically increases over time.
This in turn means that state information becomes meaningless, as once something becomes true, it remains always true, and cannot be used to convey information. 

\begin{table}
\centering
\begin{tabular}{|l|r|r|r|}
\hline
& {\bf ECA} & {\bf Rhythm \& Music}  & {\bf Seek Whence} \\
\hline
Full system (AE) & 97.3\% & 73.3\% & 76.7\% \\
\hline
No check that $S \sqsubseteq \tau(\theta)$ & 5.1\% & 3.0\% & 4.6\% \\
\hline
No conceptual unity & 5.3\% & 0.0\% & 6.7\% \\
\hline
No spatial unity & 95.7\% & 73.3\% & 73.3\% \\
\hline
No cost minimization & 96.7\% & 56.6\% & 73.3\% \\
\hline
\end{tabular}
\caption[Ablation experiments]{Ablation experiments. We display predictive accuracy on the final held-out time-step.}
\label{table:ablation}
\end{table}

When we remove the spatial unity constraint, the results for the rhythm tasks are identical, but the results for the ECA and Seek Whence tasks are lower. 
The reason why the results are identical for the rhythm tasks is because the background knowledge provided (the $r$ relation on notes, see Section \ref{sec:rhythms-and-tunes}) means that the spatial unity constraint is guaranteed to be satisfied. 
The reason why the results are lower for ECA tasks is because interpretations that fail to satisfy spatial unity contain disconnected clusters of cells (e.g.~cells $\{c_1, ..., c_5\}$ are connected by $r$ in one cluster, while cells $\{c_6, ..., c_{11}\}$ are connected in another cluster, but  $\{c_1, ..., c_5\}$ and $\{c_6, ..., c_{11}\}$ are disconnected). Interpretations with disconnected clusters tend to generalize poorly and hence predict with less accuracy. 
The reason why the results are only slightly lower for the Seek Whence tasks is because the lowest cost unified interpretation for most of these tasks also happens to satisfy spatial unity. 

The results for the fourth ablation test, where we remove the cost minimization, are broadly comparable with the full system in ECA and Seek Whence, but are markedly worse in the rhythm / music tasks. 
But even if the results were comparable in all tasks, there are independent reasons to want to minimize the size of the interpretation. Shorter interpretations are more human-readable, and transfer better to new situations (since they tend to be more general, as they have fewer atoms in the bodies of the rules). 

\subsubsection{Our system outperforms state-of-the-art inductive logic programming approaches in the five domains}
\label{sec:ilasp}

In order to assess the efficiency of our system, we compared it to ILASP\footnote{We compared against ILASP rather than Metagol (another state-of-the-art inductive logic programming system \cite{mugg:metagold,cropper}) because (i) ILASP is comparable in performance (it achieved slightly better results than Metagol in the Inductive General Game Playing task suite \cite{cropper2019inductive}, getting 40\% correct as opposed to Metagol's 36\%), and (ii) since ILASP also uses ASP we can compare the grounding size of our program with ILASP and get a fair apples-for-apples comparison. We used ILASP rather than HEXMIL (the ASP implementation of Metagol  \cite{kaminski2018exploiting}) because of scaling problems with HEXMIL \cite{cropper2019learning}. Our decision to use ILASP rather than Metagol for these tests was based on a number of discussions with Andrew Cropper (the developer of Metagol) and Mark Law (the developer of ILASP). We are very grateful to both for their advice on this.} \cite{law2014inductive,law2015learning,law2016iterative,law2018complexity}, a state-of-the-art Inductive Logic Programming algorithm\footnote{Strictly speaking, ILASP is a family of 
algorithms, rather than a single algorithm. We used ILASP2 \cite{law2015learning} in this evaluation.}.
Unlike traditional ILP systems that learn definite logic programs, ILASP learns \emph{answer set programs}\footnote{Answer set programming under the stable model semantics is distinguished from traditional logic programming in that it is purely declarative and each program has multiple solutions (known as answer sets). Because of its non-monotonicity, ASP is well suited for knowledge representation and common-sense reasoning \cite{mueller2014commonsense,gelfond2014knowledge}.}.
ILASP is a powerful and general framework for learning answer set programs; it is able to learn choice rules, constraints, and even preferences over answer sets \cite{law2015learning}.

ILASP is able to solve some simple apperception tasks. For example, ILASP is able to solve the task in Example \ref{example:eca1}. 
But for the ECA tasks, the music and rhythm tasks, and the Seek Whence tasks, the ASP programs generated by ILASP were not solvable because they required too much memory.

In order to understand the memory requirements of ILASP on these tasks, and to compare our system with ILASP in a fair like-for-like manner, we looked at the size of the grounded ASP programs.
Recall that both our system and ILASP generate ASP programs that are then grounded into propositional clauses that are then passed to a SAT solver.
The grounding size determines the memory usage and is strongly correlated with solution time.

We took a sample ECA, Rule 245, and looked at the grounding size as the number of cells increased from 2 to 11.
The results are in Figure \ref{fig:grounding-comparison}.

\begin{figure}
\begin{center}
\begin{tikzpicture}
\begin{axis}[
    title={},
    xlabel={Number of cells in the ECA},
    ylabel={Grounding size (in megabytes)},
    xmin=2, xmax=11,
    ymin=0, ymax=7000,
    xtick={2,4,6,8,10},
    ytick={0,1000,2000,3000,4000,5000,6000,7000},
    legend pos=north west,
    ymajorgrids=true,
    grid style=dashed,
]
 
\addplot[
    color=blue,
    mark=square,
    ]
    coordinates {
    (2,60)(3,173)(4,376)(5,692)(6,1149)(7,1771)(8,2585)(9,3103)(10,4902)(11,6464)
    };
    \addlegendentry{ILASP}

\addplot[
    color=red,
    mark=*,
    ]
    coordinates {
    (2,1)(3,2)(4,4)(5,7)(6,13)(7,21)(8,31)(9,45)(10,61)(11,82)
    };
    \addlegendentry{Our system}
 
\end{axis}
\end{tikzpicture}
\end{center}
\caption{Comparing our system and ILASP w.r.t. grounding size}
\label{fig:grounding-comparison}
\end{figure}
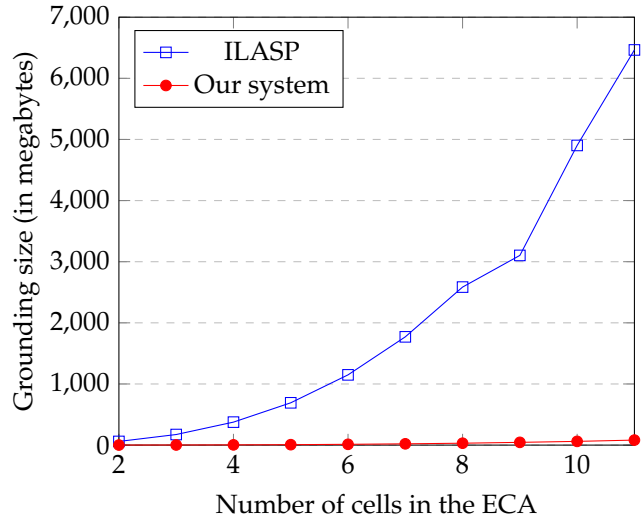

As we increase the number of cells, the grounding size of the ILASP program grows much faster than the corresponding  \sys{} program. 
The reason for this marked difference is the different ways the two approaches represent rules.
In our system, rules are interpreted by an interpreter that operates on reified representations of rules.
In ILASP, by contrast, rules are \emph{compiled} into ASP rules.
This means, if there are $|U_\phi|$ unground atoms and there are at most $N_B$ atoms in the body of a rule, then ILASP will generate $|U_\phi|^{N_B+1}$ different clauses.
When it comes to grounding, if there are $|\Sigma_\phi|$ substitutions and $t$ time-steps, then ILASP will generate at most
$|U_\phi|^{N_B+1} \cdot |\Sigma_\phi| \cdot t$ ground instances of the generated clauses.
Each ground instance will contain $N_B + 1$ atoms, so there are $(N_B + 1) \cdot |U_\phi|^{N_B+1} \cdot |\Sigma_\phi| \cdot t$ ground atoms in total.

Compare this with our system. 
Here, we do not represent every possible rule explicitly as a separate clause.
Rather, we represent the possible atoms in the body of a rule by an ASP choice rule:
\begin{verbatim}
0 { rule_body(R, VA) : is_unground_atom(VA) } k_max_body :- is_rule(R).
\end{verbatim}
If there are $N_\rightarrow$ static rules and $N_{\fork}$ causal rules, then this choice rule only generates $N_\rightarrow + N_{\fork}$ ground clauses, each containing $|U_\phi|$ atoms. 

The most expensive clauses in our encoding are analysed in Table \ref{table:num-ground-clauses}.
Recall from Section \ref{sec:complexity} that the total number of atoms in the ground clauses is approximately $5 \cdot |\Sigma_\phi| \cdot (N_{\rightarrow} + N_{\fork}) \cdot |U_\phi| \cdot t$. 

To compare this with ILASP, let us set $N_B = 4$ (which is representative). 
Then ILASP generates ground clauses with $5 \cdot |U_\phi|^5 \cdot |\Sigma_\phi| \cdot t$ ground atoms while our system generates clauses with  $5 \cdot |\Sigma_\phi| \cdot (N_{\rightarrow} + N_{\fork}) \cdot |U_\phi| \cdot t$ ground atoms.
The reason, then, why our system has such lower grounding sizes than ILASP is because $(N_{\rightarrow} + N_{\fork})  << |U_\phi|^4$.
Intuitively, the key difference is that ILASP considers \emph{every possible} subset of the hypothesis space, while our system (by restricting to at most $N_{\rightarrow} + N_{\fork}$ rules) only considers \emph{subsets of length at most $N_{\rightarrow} + N_{\fork}$}.

\section{Noisy apperception}
\label{sec:noise}

So far, we have assumed that our sensor readings are entirely noise-free:
some of the sensory readings may be missing, but none of the readings are inaccurate.

If we give the \sys{} a sensory sequence with mislabeled data, it will struggle to provide a theoretical explanation of the mislabeled input.
Consider, for example, $S_{1:20}$:
\begin{eqnarray*}
\begin{tabular}{lllll}
$S_1 = \set{p(a)}$  & $S_2 = \set{p(a)}$ & $S_3 = \set{ \mathbf{q(a)}}$ & $S_4 = \set{p(a)}$  & $S_5 = \set{p(a)}$ \\
 $S_6 = \set{ p(a)}$ & $S_7 = \set{p(a)}$  & $S_8 = \set{p(a)}$ & $S_9 = \set{ p(a)}$ & $S_{10} = \set{p(a)}$  \\
$S_{11} = \set{p(a)}$ & $S_{12}= \set{ p(a)}$ & $S_{13} = \set{p(a)}$  & $S_{14} = \set{p(a)}$ & $S_{15} = \set{ p(a)}$ \\
$S_{16} = \set{p(a)}$  & $S_{17} = \set{p(a)}$ & $S_{18} = \set{ p(a)}$ &$S_{19} = \set{p(a)}$ & $S_{20} = \set{p(a)}$
\end{tabular}
\end{eqnarray*}
Here, $S_3 = \set{ \mathbf{q(a)}}$ is an outlier in the otherwise tediously predictable sequence.

If we give sequences such as this to the \sys{}, it attempts to make sense of \emph{all} the input, including the anomalies. 
In this case, it finds the following baroque explanation:
\begin{eqnarray*}
\begin{tabular}{lll}
$I = \bigsetbegin{}
p(a) \\
c_1(a)
\bigsetend{}$ &
$R = \bigsetbegin{}
q(X) \rightarrow c_3(X) \\
c_3(X) \fork p(X) \\
c_1(X) \fork c_2(X) \\
c_2(X) \fork q(X)
\bigsetend{}$ &
$C' = \bigsetbegin{}
\forall X {:}s, \; p(X) \oplus q(X) \\
\forall X {:}s, \; c_1(X) \oplus c_2(X) \oplus c_3(X)
\bigsetend{}$
\end{tabular}
\end{eqnarray*}
Here, the \sys{} has introduced three new invented predicates $c_1, c_2, c_3$ in order to count how many $p$'s it has seen, so that it knows when to switch to $q$.
If we move the anomalous entry $q(a)$ later in the sequence, or add further anomalies, the engine is forced to construct increasingly complex theories.
This is clearly unsatisfactory.

In order to handle noisy mislabeled data, we shall relax our insistence that the sequence $S_{1:T}$ is \emph{entirely} covered by the trace of the theory $\theta$.
Instead of insisting that $S \sqsubseteq \tau(\theta)$, we shall minimise the number of discrepancies between each $S_i$ and $\tau(\theta)_i$, for $i = 1 .. T$.

We want to find the most probable theory $\theta$ given our noisy input sequence $S_{1:T}$:
\begin{eqnarray}
\label{eqn:noisy-1}
\argmax_{\theta} \; p\left(\theta \mid S_{1:T}\right)
\end{eqnarray}
By Bayes' rule, this is equivalent to
\begin{eqnarray}
\label{eqn:noisy-2}
\argmax_{\theta} \; \frac{p(\theta) \cdot p\left(S_{1:T} \mid \theta \right)}{p(S_{1:T})}
\end{eqnarray}
Since the denominator does not depend on $\theta$, this is equivalent to:
\begin{eqnarray}
\label{eqn:noisy-3}
\argmax_{\theta} \; p(\theta) \cdot p\left(S_{1:T} \mid \theta \right)
\end{eqnarray}
Since the probability of the state $S_i$ is conditionally independent of the previous state $S_{i-1}$ given $\theta$ (this is the assumption behind the Hidden Markov Model), the above is equivalent to:
\begin{eqnarray}
\label{eqn:noisy-4}
\argmax_{\theta} \; p(\theta) \cdot \prod_{i=1}^T p\left(S_i \mid \theta \right)
\end{eqnarray}
Now each $S_i$ depends only on $\tau(\theta)_i$, the trace of $\theta$ at time step $i$.
Thus we can rewrite to:
\begin{eqnarray}
\label{eqn:noisy-5}
\argmax_{\theta} \; p(\theta) \cdot \prod_{i=1}^T p\left(S_i \mid \tau(\theta)_i \right)
\end{eqnarray}
Since we assume a universal distribution on theories, the probability of $\theta$ is $2^{-\mathit{len}(\theta)}$.
Let the probability of  $S_i$ given $\tau(\theta)_i$ be $p\left(S_i \mid \tau(\theta)_i \right) = 2^{-|S_i - \tau(\theta)_i|}$.
Then we can rewrite to:
\begin{eqnarray}
\label{eqn:noisy-6}
\argmax_{\theta} \; 2^{-\mathit{len}(\theta)} \cdot \prod_{i=1}^T 2^{-|S_i - \tau(\theta)_i|}
\end{eqnarray}
Thus, we define the \define{$\mathit{cost}_\mathit{noise}$} of the theory to be:
\begin{eqnarray}
\label{eqn:noisy-8}
\mathit{cost}_\mathit{noise} = \mathit{len}(\theta) + \beta \sum_{i=1}^T |S_i - \tau(\theta)_i|
\end{eqnarray}
and search for the theory with lowest cost.

\begin{example}
Consider, for example, the following sequence $S_{1:10}$:
\begin{eqnarray*}
\begin{tabular}{lllll}
$S_1 = \set{}$  & $S_2 = \set{ \mathit{off}(a), \mathit{on}(b)}$ & $S_3 = \set{ \mathit{on}(a),  \mathit{off}(b)}$ & $S_4 = \set{ \mathit{on}(a), \mathit{on}(b)}$ & $S_5 = \set{ \mathit{on}(b)}$ \\
$S_6 = \set{ \mathit{on}(a),  \mathit{off}(b)}$ & $S_7 = \set{ \mathit{on}(a), \mathit{on}(b)}$ & $S_8 = \set{ \mathit{off}(a), \mathit{on}(b)}$ & $S_9 = \set{ \mathit{on}(a)}$ & $S_{10} = \set{ }$
\end{tabular}
\end{eqnarray*}
Because the sequence is so short, the lowest $\mathit{cost}_\mathit{noise}$ theory is:
\begin{eqnarray*}
\begin{tabular}{lll}
$I = \bigsetbegin{}
\bigsetend{}$ &
$R = \bigsetbegin{}
\bigsetend{}$ &
$C' = \bigsetbegin{}
\forall X {:}s, \; \mathit{on}(X) \oplus \mathit{off}(X)
\bigsetend{}$
\end{tabular}
\end{eqnarray*}
This degenerate empty theory has cost 14 (the number of atoms in $S$) which is shorter than any ``proper'' explanation that captures the regularities.
But as the sequence gets longer, the advantage of a ``proper'' explanation over a degenerate solution becomes more and more apparent.
Consider, for example, the following extension $S'_{1:30}$:
\begin{eqnarray*}
\begin{tabular}{lllll}
$S'_1 = \set{}$  & $S'_2 = \set{ \mathit{off}(a), \mathit{on}(b)}$ & $S'_3 = \set{ \mathit{on}(a),  \mathit{off}(b)}$ &
$S'_4 = \set{ \mathit{on}(a), \mathit{on}(b)}$ & $S'_5 = \set{ \mathit{on}(b)}$ \\
$S'_6 = \set{ \mathit{on}(a),  \mathit{off}(b)}$ & $S'_7 = \set{ \mathit{on}(a), \mathit{on}(b)}$ & $S'_8 = \set{ \mathit{off}(a), \mathit{on}(b)}$ & $S'_9 = \set{ \mathit{on}(a)}$ & $S'_{10} = \set{ }$ \\ 
$S'_{11}= \set{ \mathit{off}(a), \mathit{on}(b)}$ &  $S'_{12} = \set{ \mathit{on}(a),  \mathit{off}(b)}$ & $S'_{13} = \set{ \mathit{on}(a), \mathit{on}(b)}$ & $S'_{14}= \set{ \mathit{off}(a), \mathit{on}(b)}$ &  $S'_{15} = \set{ \mathit{on}(a),  \mathit{off}(b)}$ \\
$S'_{16} = \set{ \mathit{on}(a), \mathit{on}(b)}$ & $S'_{17}= \set{ \mathit{off}(a), \mathit{on}(b)}$ &  $S'_{18} = \set{ \mathit{on}(a),  \mathit{off}(b)}$ & $S'_{19} = \set{ \mathit{on}(a), \mathit{on}(b)}$ & $S'_{20}= \set{ \mathit{off}(a), \mathit{on}(b)}$ \\  
$S'_{21} = \set{ \mathit{on}(a),  \mathit{off}(b)}$ & $S'_{22} = \set{ \mathit{on}(a), \mathit{on}(b)}$ & $S'_{23}= \set{ \mathit{off}(a), \mathit{on}(b)}$ &  $S'_{24} = \set{ \mathit{on}(a),  \mathit{off}(b)}$ & $S'_{25} = \set{ \mathit{on}(a), \mathit{on}(b)}$ \\
$S'_{26}= \set{ \mathit{off}(a), \mathit{on}(b)}$ &  $S'_{27} = \set{ \mathit{on}(a),  \mathit{off}(b)}$ & $S'_{28} = \set{ \mathit{on}(a), \mathit{on}(b)}$ & $S'_{29}= \set{ \mathit{off}(a), \mathit{on}(b)}$ &  $S'_{30} = \set{ }$
\end{tabular}
\end{eqnarray*}
Now the lowest $\mathit{cost}_\mathit{noise}$ theory is one that finds the underlying regularity:
\begin{eqnarray*}
\begin{tabular}{lll}
$I = \bigsetbegin{}
\mathit{on}(a) \\
p_1(a) \\
p_2(b)
\bigsetend{}$ &
$R = \bigsetbegin{}
\mathit{off}(X) \rightarrow p_3(X) \\
p_2(X) \rightarrow \mathit{on}(X) \\
p_1(X) \fork \mathit{off}(X) \\
p_3(X) \fork p_2(X) \\
p_2(X) \fork p_1(X)
\bigsetend{}$ &
$C' = \bigsetbegin{}
\forall X {:}s, \; \mathit{on}(X) \oplus \mathit{off}(X) \\
\forall X {:}s, \; p_1(X) \oplus p_2(X) \oplus p_3(X)
\bigsetend{}$
\end{tabular}
\end{eqnarray*}
We can see, then, that the noise-robust version of the \sys{} is somewhat less data-efficient than the noise-intolerant version described earlier. 
\label{ex:noise-1}
\end{example}

\subsection{Experiments}

We used the following sequences to compare the noise-intolerant \sys{} with the noise-robust version:
\begin{center}
\begin{tabular}{ll}
\sequence{a,b,a,b,a,b,a,b,a,b,a,b} & \sequence{a,a,b,a,a,b,a,a,b,a,a,b} \\
\sequence{a,a,b,b,a,a,b,b,a,a,b,b} & \sequence{a,a,a,b,a,a,a,b,a,a,a,b} \\
\sequence{a,b,b,a,a,b,b,a,a,b,b,a} & \sequence{a,b,c,a,b,c,a,b,c,a,b,c} \\
\sequence{a,b,c,b,a,a,b,c,b,a,a,b,c,b,a} & \sequence{a,b,a,c,a,b,a,c,a,b,a,c} \\
\sequence{a,b,c,c,a,b,c,c,a,b,c,c} & \sequence{a,a,b,b,c,c,a,a,b,b,c,c}
\end{tabular}
\end{center}
We chose these particular sequences because they are simple, noise-free, and the \sys{} is able to solve them in a reasonably short time.

We performed two groups of experiments.
In the first, we evaluated how much longer the sequence needs to be for the noise-robust version to capture the underlying regularity, in comparison with the noise-intolerant version which is more data-efficient.
Figure \ref{fig:noise-1} shows the results. 
We plot mean percentage accuracy (over the ten sequences) against the length of the sequence that is provided to the \sys{}.
Note that the noise-intolerant version only needs sequences of length 10 to achieve 100\% accuracy, while the noise-tolerant version needs sequences of length 30.

\begin{figure}
  \centering
  \begin{tikzpicture}
    \begin{axis}[
        xmin=0,
        xmax=50,
        xtick={0,10,20,30,40, 50},
        ytick={0,0.25,0.5,0.75,1.0},
        ymin=0,
        ymax=1.0,
        xlabel={Length},
        ylabel={Accuracy},
        legend pos= south east
              ]
      \addplot coordinates{(2,0.4) (3, 0.3) (4, 0.4) (5, 0.3) (6, 0.5) (7, 0.6) (8, 0.9) (9, 1.0) (10, 1.0) (15, 1.0) (20, 1.0) (25, 1.0) (30, 1.0) (35, 1.0) (40, 1.0) (45, 1.0) (50, 1.0) } ;
      \addlegendentry{noise-intolerant}
      \addplot coordinates{(2, 0.4) (3, 0.3) (4, 0.2) (5, 0.3) (6, 0.3) (7, 0.5) (8, 0.3) (9, 0.3) (10, 0.3) (15, 0.5) (20, 0.7) (25, 0.9) (30, 0.9) (35, 1.0) (40, 0.9) (45, 1.0) (50, 1.0) };
      \addlegendentry{noise-robust}
    \end{axis}
  \end{tikzpicture}
\caption[Comparing the noise-robust and noise-intolerant versions for data-efficiency]{Comparing the data-efficiency of the noise-robust version of the \sys{} with the noise-intolerant version. We plot mean percentage accuracy against length of the sequence. The noise-intolerant version achieves 100\% accuracy when the sequence is length 10 or over, while the noise-robust version only achieves this level of accuracy when the length is over 30.}
\label{fig:noise-1}
\end{figure}
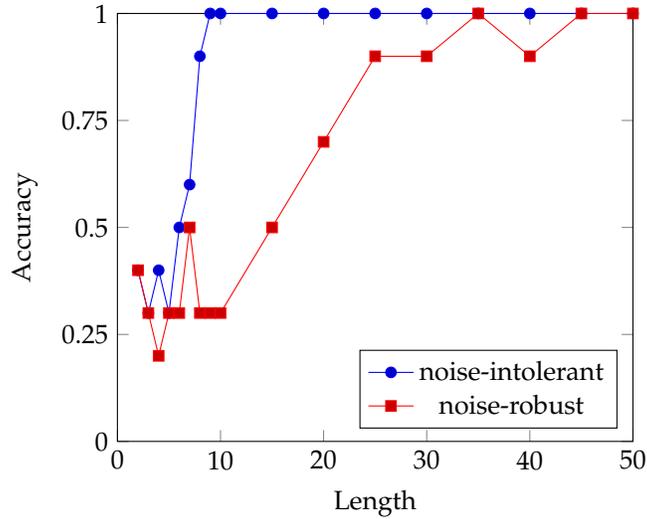

In the second experiment, we evaluate how much better the noise-robust version of the \sys{} is at handling mislabeled data.
We take the same ten sequences above, extended to length 100, and consider various perturbations of the sequence where we randomly mislabel a certain number of entries.
Figure \ref{fig:noise-2} shows the results.
We plot mean percentage accuracy (over the ten sequences) against the percentage of mislabellings. 
Note that the noise-intolerant version deteriorates to random as soon as any noise is introduced, while the noise-robust version is able to maintain reasonable accuracy with up to 30\% of the sequence mislabeled.

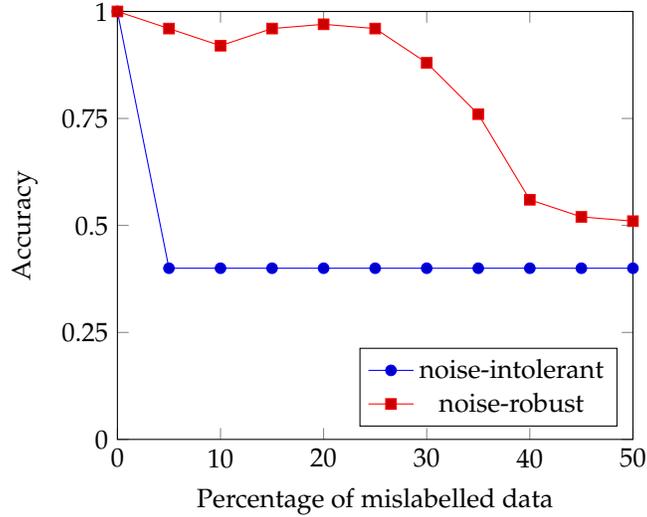
\begin{figure}
  \centering
  \begin{tikzpicture}
    \begin{axis}[
        xmin=0,
        xmax=50,
        xtick={0,10,20,30,40, 50},
        ytick={0,0.25,0.5,0.75,1.0},
        ymin=0,
        ymax=1.0,
        xlabel={Percentage of mislabelled data},
        ylabel={Accuracy},
        legend pos= south east
              ]
      \addplot coordinates{(0,1.0) (5, 0.4) (10, 0.4) (15, 0.4) (20, 0.4) (25, 0.4) (30, 0.4) (35, 0.4) (40, 0.4) (45, 0.4) (50, 0.4)  } ;
      \addlegendentry{noise-intolerant}
      \addplot coordinates{(0,1.0) (5, 0.96) (10, 0.92) (15, 0.96) (20, 0.97) (25, 0.96) (30, 0.88) (35, 0.76) (40, 0.56) (45, 0.52) (50, 0.51) };
      \addlegendentry{noise-robust}
    \end{axis}
  \end{tikzpicture}
\caption[Comparing the noise-robust and noise-intolerant versions for accuracy]{Comparing the accuracy of the noise-robust version of the \sys{} with the noise-intolerant version. We plot mean percentage accuracy against the number of mislabellings. The noise-intolerant version deteriorates to random as soon as any noise is introduced, while the noise-robust version is able to maintain reasonable (88\%) accuracy with up to 30\% of the sequence mislabelled.}
\label{fig:noise-2}
\end{figure}

\section{Related work}
\label{sec:discrete-related}

In this section, we describe particular systems that are related to our approach. For a general overview of the space of different approaches, see Section \ref{sec:intro-related}.

\subsection{``Theory learning as stochastic search in a language of thought''}
\label{sec:related-ullman}

Ullman et al \cite{goodman2011learning,ullman2012theory} describe a system for learning first-order rules from symbolic data.
Recasting their approach into our notation, their system is given as input a set $S$ of ground atoms\footnote{Compare with our system, which is given a sequence $(S_1, ..., S_T)$ of sets of ground atoms.}, and it searches for a set of static rules $R$ and a set $I$ of atoms such that $R, I \models S$.

Of course, the task as just formulated admits of entirely trivial solutions: for example, let $I = S$ and $R = \{\}$.
Ullman et al rule out such trivial solutions by adding two restrictions.
First, they distinguish between two disjoint sets of predicates:
the \emph{surface} predicates are the predicates that appear in the input $S$, while the \emph{core} predicates are the latent predicates. 
Only core predicates are allowed to appear in the initial conditions $I$.
This distinction rules out the trivial solution above, but there are other degenerate solutions:
for each surface predicate $p$, add a new core predicate $p_c$.
If $p(k_1, ..., k_n)$ is in $S$, add $p_c(k_1, ..., k_n)$ to $I$.
Also, add the rule $p(X_1, ..., X_n) \leftarrow p_c(X_1, .., X_n)$ to $R$.
Clearly, $R, I \models S$ but this solution is unilluminating, to say the least.
To prevent such degenerate solutions, the second restriction that Ullman et al add is to prefer \emph{shorter} rule-sets  $R$ and smaller sets $I$ of initial atoms.
The idea is that if $S$ contains structural regularities, their system will find an  $R$ and $I$ that are much simpler than the degenerate solution above.

Consider, for example, the various surface relations in a family tree: John is the father of William; William is the husband of Anne; Anne is the mother of Judith; John is the grandfather of Judith.
All the various surface relations (father, mother, husband, grandfather...) can be explained by a small number of core relations: $\mathit{parent}(X, Y)$, $\mathit{spouse}(X, Y)$, $\mathit{male}(X)$, and $\mathit{female}(X)$.
Now the surface facts $S = \{\mathit{father}(\mathit{john}, \mathit{william}), ...\}$ can be explained by a small number of facts involving core predicates $I = \{\mathit{parent}(\mathit{john}, \mathit{william}), \mathit{male}(\mathit{john}), ...\}$ together with rules such as:
\begin{eqnarray*}
\mathit{father}(X, Y) \leftarrow \mathit{parent}(X, Y), \mathit{male}(X)
\end{eqnarray*}

At the computational level, then, the task that Ullman et al set out to solve is: given a set $S$ of ground atoms featuring surface predicates, find the smallest set $I$ of ground atoms featuring only core predicates, and the smallest set $R$ of static rules, such that $R, I \models S$.
Recasting this task in the language of probability, they wish to find:
\begin{eqnarray*}
\operatorname*{arg\,max}_{R, I} p(R, I \mid S)
\end{eqnarray*}
Using Bayes' rule this can be recast as:
\begin{eqnarray*}
\operatorname*{arg\,max}_{R, I} p(R, I \mid S) & = & \operatorname*{arg\,max}_{R, I} \frac{p(S \mid R, I) p(R, I)}{p(S)} \\
& = & \operatorname*{arg\,max}_{R, I} p(S \mid R, I) p(R, I) \\
& = & \operatorname*{arg\,max}_{R, I} p(S \mid R, I) p(R) p(I \mid R)
\end{eqnarray*}
Here, the likelihood $p(S \mid R, I)$ is the proportion of $S$ that is entailed by $R$ and $I$, the prior $p(R)$ is the size of the rules, and $p(I \mid R)$ is the size of $I$.

At the algorithmic level, Ullman et al apply Markov Chain Monte Carlo (MCMC).
MCMC is a stochastic search procedure.
When it is currently considering search element $x$, it generates a candidate next element $x'$ by randomly perturbing $x$. 
Then it compares the scores of $x$ and $x'$.
If $x'$ is better, it switches attention to focus on $x'$. 
Otherwise, if $x'$ is worse than $x$, there is still a non-zero probability of switching (to avoid local minima), but the probability is lower when $x$' is significantly worse than the current search element $x$.

In their algorithm, MCMC is applied at two levels.
At the first level, a set $R$ of rules is perturbed into $R'$ by adding or removing atoms from clauses, or by switching one predicate for another predicate with the same arity. At the second level, $I$ is perturbed into $I'$ by changing the extension of the core predicates.

Given that the search space of sets of rules is so enormous, and that MCMC is a stochastic search procedure that only operates locally, the algorithm needs additional guidance to find solutions.
In their case, they provide a \emph{template}, a set of meta-rules that constrain the types of rules that are generated in the outermost MCMC loop. 
A meta-rule is a higher-order clause in which the predicates are themselves variables. 
For example, in the following meta-rule for transitivity, $P$ is a variable ranging over two-place predicates:
\begin{eqnarray*}
P(X, Y) \leftarrow P(X, Z), P(Z, Y)
\end{eqnarray*}
Meta-rules are a key component in many logic program synthesis systems \cite{mugg:metagold,crop:metafunc,crop:thesis,law2014inductive,law2018complexity}.

Ullman et al tested their system in a number of domains including taxonomy hierarchies, simplified magnetic theories, kinship relations, and psychological explanations of action. 
In each domain, their system is able to learn human-interpretable theories from small amounts of data.

At a high level, Ullman et al's system has much in common with the \sys{}.
They are both systems for generating interpretable explanations from small quantities of symbolic data.
While the \sys{} generates a $(\phi, I, R, C)$ tuple from a sequence $(S_1, ..., S_T)$, their system generates an $(I, R)$ pair from a single set $S$ of atoms.
But there are a number of significant differences.
First, our system takes as input a \emph{sequence} $(S_1, ..., S_T)$ while their system considers only a single state $S$.
Because they do not model facts changing over time, their system only needs to represent static rules and does not need to also represent causal rules.
Second, a unified interpretation $\theta = (\phi, I, R, C)$ in our system includes a set $C$ of \emph{constraints}.
These constraints play a critical role in our system: they are both regulative (ruling out certain incompossible combinations of atoms) and constitutive (the constraints determine the incompossible relation that in turn grounds the frame axiom). 
There is no equivalent of our constraints $C$ in their system.
A third key difference is that our system has to produce a theory that, as well as explaining the sensory sequence, also has to satisfy the \emph{unity conditions}: spatial unity, conceptual unity, static unity, and temporal unity.
There is no analog of our unity conditions in Ullman et al's system.
Fourth, their system requires \emph{hand-engineered templates} in order to find a theory that explains the input.
This reliance on hand-engineered templates restricts the domain of application of their technique:
in a domain in which they do not know, in advance, the structure of the rules they want to learn, their system will not be applicable.
Fifth, our system posits \emph{latent objects} as well as latent predicates, while their system only posits latent predicates.
The ability to imagine unobserved objects, with unobserved attributes that explain the observed attributes of observed objects, is a key feature of the \sys{}.

At the algorithmic level, the systems are very different.
While we use a form of meta-interpretive learning (see Section \ref{sec:searching}), they use MCMC.
Our system compiles an apperception problem into the task of finding an answer set to an ASP program that minimises the program cost. 
The ASP problem is given to an ASP solver, that is guaranteed to find the \emph{global minimum}.
MCMC, by contrast, is a stochastic procedure that operates \emph{locally} (moving from one single point in program space to another), and is not guaranteed to (in fact, in practice, it rarely does) find a global minimum.

\subsection{``Learning from interpretation transition''}

Inoue, Ribeiro, and Sakama \cite{inoue2014learning} describe a system (\define{LFIT}) for learning logic programs from sequences of sets of ground atoms.
Since their task definition is broadly similar to ours, we focus on specific differences. 
In our formulation of the apperception task, we must construct a $(\phi, I, R, C)$ tuple from a sequence $(S_1, ..., S_T)$ of sets of ground atoms.
In their task formulation, they learn a set of causal rules from a set $\{(A_i, B_i)\}_{i=1}^N$ of pairs of sets of ground atoms.

In some respects, their task formulation is more general than ours.
First, their input  $\{(A_i, B_i)\}_{i=1}^N$ can represent transitions from \emph{multiple} trajectories, rather than just a single trajectory, and corresponds to a generalized apperception task (see Definition \ref{def:generalized-apperception-task}). 
Second, they learn \emph{normal} logic programs, allowing negation as failure in the body of a rule, while our system only learns definite clauses. 

But there are a number of other ways in which our task formulation is significantly more general than LFIT. 
First, our system posits \emph{latent information} to explain the observed sequence, while LFIT does not construct any latent information. 
Their system searches for a program $P$ that generates \emph{exactly} the output state. In our approach, by contrast, we  search for a program whose trace \emph{covers} the output sequence, but does not need to be identical to it. The trace of a unified interpretation typically contains much extra information that is not part of the original input sequence, but that is used to explain the input information. 

Second, our system abduces a set of \emph{initial conditions} as well as a set of rules, while LFIT does not construct initial conditions. Because of this, our system is able to predict the future, retrodict the past, and impute missing intermediate values. 
LFIT, by contrast, can only be used to predict future values. 

Third, our system generates a set of \emph{constraints} as well as rules. The constraints perform double duty: on the one hand, they restrict the sets of compossible atoms that can appear in traces; on the other hand, they generate the incompossibility relation that grounds the frame axiom. Note that there is no frame axiom in LFIT.

In \cite{inoue2014learning}, Inoue et al use a bottom-up synthesis method to learn rules.
Given a state transition $(A, B)$ in $E$, they construct a normal ground rule for each $\beta \in B$:
\begin{eqnarray*}
\bigwedge_{\alpha \in A} \alpha \wedge \bigwedge_{\alpha \in \mathcal{G} - A} \mathit{not} \; \alpha \fork \beta
\end{eqnarray*}
Then, they use resolution to generalize the individual ground rules. 
It is important to note that this strategy is quite conservative in the generalizations it performs, 
since it only produces a more general rule if it turns out to be a resolvent of a pair of previous rules. 
While the \sys{} searches for the shortest (and hence most general) rules, LFIT searches for the most specific generalization.
In more recent work \cite{ribeiro2015learning}, LFIT has been changed to perform top-down specialization, rather than bottom-up generalization. With this change, LFIT is guaranteed to find the shortest set of rules that explain the transitions.  

LFIT was tested on Boolean networks and on Elementary Cellular Automata. 
It is instructive to compare our system with LFIT on the ECA tasks.
When LFIT is applied to the ECA task, it is provided with the one-dimensional spatial relation between the cells as background knowledge. 
In our approach, by contrast, we do \emph{not} hand-code the spatial relation, but rather let the \sys{} generate the spatial relation itself as part of the initial conditions. (See Section \ref{sec:eca}). It is precisely because our system is able to posit latent information to explain the surface features that it is able to generate the spatial relation itself, rather than having to be given it.

In some situations, positing latent information allow us to constructer a simpler theory.
In other situations, however, positing latent information is absolutely essential to making sense of the sequence. There are many apperception tasks for which every interpretation that makes sense of the sequence \emph{must} include latent information:
consider, for example, learning the dynamics of the ECA without spatial information (Section \ref{sec:eca}), the \emph{Seek Whence} sequences (Section \ref{sec:seek-whence}), or the binding tasks (Section \ref{sec:probe-task-binding}).
Even the following simple example shows the unavoidable need to posit latent information:
\begin{example}
Consider the task $(S, \phi, C)$ where $S = (S_1, S_2, S_3, ..., S_n)$, $S_1 = S_2= \{p(a)\}$, and $S_3 = S_4 = ... = S_n = \{q(a)\}$. Here, $\phi$ contains one type $t$, one object $a$ of type $t$, two unary predicates $p$ and $q$, and one constraint $\forall X: t, p(X) \oplus q(X)$. Since $p$ and $q$ are incompatible, the transition from $p$ to $q$ from state $S_2$ to $S_3$ must be explained by a causal rule $\phi \fork q(X)$, where $\phi$ is a set of atoms. Now $\phi$ cannot be empty, or the rule would be unsafe, $\phi$ cannot be $\{p(X)\}$, or else $q(a)$ would be derivable at the second time-step, contradicting $S_2 = \{p(a)\}$. Similarly, $\phi$ cannot be $\{q(X)\}$, or else $q(a)$ would not be derivable at time-step 3. Hence $\phi$ must contain an atom featuring a predicate distinct from $p$ or $q$. Hence, every interpretation that makes sense of $S$ must invoke latent information. 

One of the theories found by the \sys{} for this is $\theta = (\phi, I, R, C)$, where:
\begin{eqnarray*}
\begin{tabular}{lll}
$I = \bigsetbegin{}
p(a) \\
r(a)
\bigsetend{}$ &
$R = \bigsetbegin{}
r(X) \fork s(X) \\
p(X) \wedge s(X) \fork q(X)
\bigsetend{}$ &
$C = \bigsetbegin{}
\forall X {:}t, \; p(X) \oplus q(X) \\
\forall X {:}t, \; r(X) \oplus s(X)
\bigsetend{}$
\end{tabular}
\end{eqnarray*}
Here, there are two latent predicates, $r$ and $s$, that are used as counters, so the system can distinguish between the two occurrences of $p(a)$ in $S_1$ and $S_2$.
Thus for some sequences, the positing of latent predicates, and the abduction of initial conditions for the latent atoms, is \emph{unavoidable}.
\end{example}
LFIT has been extended in a number of ways, to increase the range of real-world problems that it can tackle.
In \cite{martinez2017relational}, LFIT was extended to learn probabilistic models.
In \cite{ribeiro2015learningb}, the system was extended from the Markov($1$) assumption (where the new state depends only on the current state) to the more general Markov($k$) setting (where the new state depends on the last $k$ states). 
In \cite{ribeiro2018learning}, LFIT was generalised so that as well as working with deterministic models (where all state transitions happen simultaneously), it also can work with other semantics (where only a subset of the transitions may happen at each time-step).
In \cite{ribeiro2017inductive}, LFIT was extended to work directly with continuous sensor data, rather than assuming the continuous sensor data has first been discretised by some other process. 
In \cite{tourret2017learning,phua2017learning}, LFIT was reimplemented in a feed-forward neural network, so as to robustly handle noisy and continuous data.

\subsection{``Unsupervised learning by program synthesis''}
\label{sec:related-ellis}

Ellis et al \cite{ellis2015unsupervised} use program synthesis to solve an unsupervised learning problem.
Given an unlabeled dataset $\{x_i\}_{i=1}^N$, they find a program $f$ and a set of inputs $\set{I_i}_{i=1}^N$ such that $f(I_i)$ is close to $x_i$ for each $i = 1 .. N$.
More precisely, they use Bayesian inference to find the $f$ and $\set{I_i}_{i=1}^N$ that minimizes the combined log lengths of the program, the initial conditions, and the data-reconstruction error:
\begin{eqnarray*}
-log P_f(f) + \sum_{i=1}^N \left( - log P_{x|z} (x_i \mid f(I_i)) - log P_I(I_i) \right)
\end{eqnarray*}
where $P_f(f)$ is a description length prior over programs, $P_I(I_i)$ is a description length prior over initial conditions, and $P_{x|z}(\cdot \mid z_i)$ is a noise model. 
This system was designed from the outset to be robust to noise, using Bayesian inference to calculate the desired tradeoff between the program length, the initial conditions length, and the data-reconstruction error cost.
They tested this system in two domains: reproducing two dimensional pictures, and learning morphological rules for English verbs.

This system is similar to ours in that it produces interpretable programs from a small number of data samples.
Like ours, their program length prior acts as an inductive bias that prefers general solutions over special-case memorized solutions.
Like ours, as well as constructing a program, they also learn initial conditions that combine with the program to produce the desired results\footnote{In fact, they learn a different set of initial conditions $I_i$ for each data point $x_i$. This corresponds to the generalized apperception task of Definition \ref{def:generalized-apperception-task}.}.
At a high level, their algorithm is also similar: they generate a Sketch program \cite{solar2006combinatorial} from the dataset $\{x_i\}_{i=1}^N$ of examples, and use a SMT solver to fill in the holes.
They then extract a readable program from the SMT solution, which they then apply to new instances, exhibiting strong generalization.

As well as the high level architectural similarities, there are a number of important differences.
First, their goal was to generate an object $f(I_i)$ that matches as closely as possible to the input object $x_i$. 
Our goal is more general: we seek to generate a sequence $\tau(\theta)$ that \emph{covers} the input sequence.
The covering relation is much more general, as $S_i$ only has to be a \emph{subset} of $(\tau(\theta))_i$, not identical to it.
This allows the addition of latent information to the trace of the theory.
A second key difference is that we focus on generating sequences, not individual objects. 
Our system is designed for making sense (unsupervisedly) of time series, sequences of states, not of reconstructing individual objects.
A third key difference is that we use a single domain-independent language, \logic{}, for all domains, while Ellis et al use a different domain-specific imperative language for each domain they consider.
A fourth key difference is that we use a declarative language, rather than an imperative language. 
An individual rule or constraint has a truth-conditional interpretation, and can be interpreted as a \emph{belief} of the synthesising agent. An individual line of an imperative procedure, by contrast, cannot be interpreted as a belief.
A fifth major difference is that we synthesise \emph{constraints} as well as rules.
Constraints are the ``special sauce'' of our system: exclusive disjunctions combine predicates into groups, enforce that each state is fully determinate, and ground the incompossibility relation that underlies the frame axiom.

\subsection{``Learning symbolic models of stochastic domains''}
\label{sec:related-learning-symbolic-models}

Pasula et al \cite{pasula2007learning} describe a system for learning a state transition model from data.
The model learns a probability distribution $p(s' \mid s, a)$ where $s$ is the previous state, $a$ is the action that was performed and $s'$ is the next state. 

Each state is represented as a set of ground atoms, just like in our system.
They assume \emph{complete observability}: they assume they are given the value of every sensor and the task is just to predict the next values of the sensors.

They represent a state transition model by a set of ``dynamic rules'': these are first-order clauses determining the future state given a current state and an action. These dynamic rules are very close to the causal rules in \logic{}. Unlike in our system, their rules have a probability outcome for each possible head. 
Note their system does not include static rules or constraints.

In their semantics, they assume that \emph{exactly one dynamic rule fires every time-step}. This is a very strong assumption. But it makes it easier to learn rules with probabilistic outcomes. 

They learn state transitions for the noisy gripper domain (where a robot hand is stacking bricks, and sometimes fails to pick up what it attempts to pick up) and a logistics problem (involving trucks transporting objects from one location to another). Impressively, they are able to learn probabilistic rules in noisy settings.
They also verify the usefulness of their learned models by passing them to a planner (a sparse sampling MDP planner), and show, reassuringly, that the agent achieves more reward with a more accurate model.

At a strategic level, their system is similar in approach to ours.
First, they learn first-order rules, not merely propositional ones. In fact, they show in ablation studies that learning propositional rules generalises significantly less well, as you would expect.
Second, they use an inductive bias against constants (p.14), just as we do: ``learning action models which are restricted to be free of constants provides a useful bias that can improve generalisation when training with small data sets''. 
Third, their system is able to construct new invented predicates. 

But there are also a number of differences. 
In our system, many rules can fire simultaneously. 
But in theirs, only one rule can fire in any state. 
Because of this assumption, they cannot model e.g.~a cellular automaton, where each cell has its own individual update rule firing simultaneously.
Another limiting assumption is that they assume they have complete observability of all sensory predicates. 
This means they would not be able to solve e.g.~occlusion tasks.  

\subsection{``Nonmonotonic abductive inductive learning''}
\label{sec:related-work-xhail}

Ray \cite{ray2009nonmonotonic} described a system, \define{XHAIL}, for jointly learning to abduce ground atoms and induce first-order rules. 
XHAIL learns normal logic programs that can include negation as failure in the body of a rule. 

XHAIL is similar to the \sys{} in that as well as inducing general first-order rules, it also constructs a set of initial ground atoms. 
This enables it to model latent (unobserved) information, which is a very powerful and useful feature.
At the implementation level, it uses a similar strategy in that solutions are found by iterative deepening over a series of increasingly complex ASP programs.
The simplified event calculus \cite{kowalski1989logic} is represented explicitly as background knowledge.

But there are also a number of key differences.
First, it does not model constraints. This means it is not able to represent the incompossibility relation between ground atoms. 
Also, XHAIL does not try to satisfy our other unity conditions, such as spatial and conceptual unity.
Second, the induced rules are \emph{compiled} in XHAIL, rather than being interpreted (as in our system). 
Representing each candidate induced rule explicitly as a separate ASP rule means that the number of ASP rules considered grows exponentially with the size of the rule body\footnote{It shares the same implementation strategy as ASPAL \cite{corapi2012inductive} and ILASP \cite{law2014inductive}. See Section \ref{sec:ilasp} for discussion of the grounding problem associated with this family of approaches. The discussion is specifically focused on ILASP, but we believe the same issue affects ASPAL and XHAIL \emph{mutatis mutandem}.}.
Third, XHAIL needs to be provided with a set of mode declarations to limit the search space of possible induced rules.
These mode declarations constitute a significant piece of background knowledge.
Now of course there is nothing wrong with allowing an ILP system to take advantage of background knowledge to aid the search. But when an ILP system \emph{relies} on this hand-engineered knowledge, then it restricts the range of applicability to domains in which human engineers can anticipate in advance the form of the rules they want the system to learn\footnote{See Appendix C of \cite{evans2018learning} for a discussion of the use of mode declarations as a language bias in ILP systems.}.

\subsection{The Game Description Language and \logic{}}
\label{sec:distinctive-aspects}

Our language \logic{} is an extension of Datalog that incorporates, as well as the standard static rules of Datalog, both causal rules (Definition \ref{def:rules}) and constraints (Definition \ref{def:constraints}). 
The semantics of \logic{} are defined according to Definition \ref{def:trace}.
Unlike standard Datalog, the atoms and rules of \logic{} are strongly typed (see Definitions \ref{def:suitable-type-signature}, \ref{def:unground}, and \ref{def:rules}).

At a high level, \logic{} is related to the Game Description Language (GDL) \cite{love2008general}.
The GDL is an extension of Datalog that was designed to express deterministic multi-agent discrete Markov decision processes. 
The GDL includes (stratified) negation by failure, as well as some (restricted) use of function symbols, but these extensions were carefully designed to preserve the key Datalog property that a program has a unique subset-minimal Herbrand model.
The GDL includes special keywords, including \verb|init| for specifying initial conditions (equivalent to the initial conditions $I$ in a $(\phi, I, R, C)$ theory), and \verb|next| for specifying state transitions (equivalent to our causal rules). 
The \emph{inductive general game playing} (IGGP) task \cite{genesereth2013international,cropper2019inductive} involves learning the rules of a game from observing traces of play. 

An IGGP task is broadly similar to an apperception task in that both involve inducing initial conditions and rules from traces. 
But there are many key differences. 
One major feature of \logic{} is the use of \emph{constraints} to generate incompossible sets of ground atoms.
These exclusion constraints are needed to generate the incompossibility relation which in turn is needed to restrict the scope of the frame axiom (see Definition \ref{def:trace}). 

The main difference between \logic{} and the GDL is that the former includes exclusion constraints.
The exclusion constraints play two essential roles.
First, they enable the theory as a whole to satisfy the condition of conceptual unity.
Second, they provide constraints, via the condition of static unity, on the generated trace: since the constraints must always be satisfied, this restricts the rules that can be constructed.
Satisfying these constraints means \emph{filling in missing information}. 
This is why a unified interpretation is able to make sense of incomplete traces where some of the sensory data is missing.

\subsection{Related application areas}
\label{sec:other-related-work}

We briefly outline three related research areas where the \sys{} can be applied.
One application area is relational reinforcement learning \cite{dvzeroski2001relational,de2008logical,bu2008comprehensive}.
Here, the agent works out how to optimize its reward in an environment by constructing (using ILP) a first-order model of the dynamics of that environment, which it then uses to plan. 
Here, the \sys{} can be used to construct the dynamics model.

A second application area is learning game rules from player traces \cite{goodacre1996inductive,law2014inductive}.
Here, the learning system is presented with traces (typically, sequences of sets of ground atoms), representing the state of the game at various points in time, and has to learn the transition dynamics (or reward function, or action legality function) of the underlying system. 

A third related area is the predictive processing (PP) paradigm \cite{friston2005theory,friston2012history,clark2013whatever,swanson2016predictive}, an increasingly popular model in computational and cognitive neuroscience.
Inspired by Helmholtz, the model learns to make sense of its sensory stream by attempting to predict future percepts. When the predicted percepts diverge from the actual percepts, the model updates its parameters to minimize prediction error. The PP model is probabilistic, Bayesian, and hierarchical: probabilistic in that the predicted sensory readings are represented as probability density functions, Bayesian in that the likelihood (represented by the information size of the misclassified predictions) is combined with prior expectations \cite{friston2012history}, and hierarchical in that each layer provides predictions of sensory input for the layer below; there are typically many layers.
While PP focuses on prediction, the \sys{} generates an interpretation that is equally adept at predicting future signals, retrodicting past signals, and imputing missing intermediary signals. 
In our approach, the ability to predict future signals is a derived capacity, a capacity that emerges from the more general capacity to construct a unified interpretation -- but prediction is not singled out in particular. The \sys{} is able to predict, retrodict, and impute -- in fact, it is able to do all three \emph{simultaneously} using a single incomplete sensory sequence with elements missing at the beginning, at the end, and in the middle.

%
%

\subsection{Summary}
\label{sec:summary-related-work}

In summary, there are various other systems that construct dynamic rules for explaining sequences. 
But these systems are unable to posit latent hidden information to make sense of the sequence.
These systems are able to predict future elements of the sequence, but are not able to retrodict earlier elements, or impute missing intermediary elements.
For problems that require positing latent hidden information\footnote{For example, making sense of the ECA sequences \emph{without} spatial information (Section \ref{sec:eca}), or interpreting the \emph{Seek Whence} sequences (Section \ref{sec:seek-whence}), or the binding tasks (Section \ref{sec:probe-task-binding}).}, or problems that require retrodiction and imputation as well as prediction, the \sys{} is particularly well suited.

\section{Conclusion}
\label{sec:conclusion}

This paper is an attempt to answer a key question of unsupervised learning: what does it mean to ``make sense'' of a sensory sequence? 
Our answer is that making sense means constructing a symbolic theory containing a set of objects that persist over time, with attributes that change over time, according to general laws.
This theory must both explain the sensory input, and satisfy the unity conditions of Section \ref{sec:unity-conditions}.
As well as providing a precise formalization of this task, we also provide a concrete implementation of a system that is able to make sense of the sensory stream.
We have tested the \sys{} in a variety of domains;
in each domain, we tested its ability to predict future values, retrodict previous values, and impute missing intermediate values. 
Our system achieves good results across the board, outperforming neural network baselines and also state-of-the-art ILP systems.

Of particular note is that the \sys{} is able to achieve human performance on challenging sequence induction intelligence tests.
We stress, once more, that the system was not hard-coded to solve these tasks.
Rather, it is a general \emph{domain-independent} sense-making system that is able to apply its general architecture to the particular problem of Seek Whence induction tasks, and is able to solve these problems ``out of the box'' without human hand-engineered help.
We also stress, again, that the system did not learn to solve these sequence induction tasks by being presented with hundreds of training examples\footnote{Barrett et al \cite{barrett2018measuring} train a neural network to learn to solve Raven's progressive matrices from thousands of training examples.}. 
Indeed, the system had never seen a \emph{single} such task before.
Instead, it applied its general sense-making urge to each individual task, \emph{de novo}.

Our architecture, an unsupervised program synthesis system, is a purely symbolic system, and 
as such, it inherits two key advantages of ILP systems \cite{evans2018learning}.
First, the interpretations produced are \emph{interpretable}. 
Because the output is symbolic, it can be read and verified by a human\footnote{Large machine-generated programs are not always easy to understand. But machine-generated symbolic programs are certainly easier to understand than the weights of a neural network. See Muggleton et al \cite{muggleton2018ultra} for an extensive discussion. }.
Second, it is very \emph{data-efficient}.
Because of the language bias of the \logic{} language, and the strong inductive bias provided by the unity conditions, the system is able to make sense of extremely short sequences of sensory data, without having seen any others.

However, the system in its current form has some limitations that we wish to make explicit.
First, the sensory input must be discretized before it can be passed to the system.
We assume some prior system has already discretized the continuous sensory values by grouping them into buckets.
One possible approach to deal with continuous sensory values is to combine the \sys{} with a neural network that maps the raw continuous inputs into categories.
We have recently developed such an extension, in which we discretize the input by simulating a binary neural network. The binary neural network is implemented in ASP, so the weights of the network and the rules of the theory can be found \emph{simultaneously} by solving one large SAT problem. 

Second, our implementation as described above assumes all causal rules are fully deterministic.
It is quite straightforward to add non-determinism to the \logic{} framework:
we can define an \define{extended theory} as a theory with initial conditions for \emph{each time-step} (rather than only allowing initial conditions for the first time-step, as in Definition \ref{def:theory}).
An extended theory $\theta = (\phi, \{I_1, ..., I_{T}\}, R, C)$ generates a trace $\tau(\theta)  = (A_1, A_2, ...)$ in exactly the same way as in Definition \ref{def:trace}, with one small exception:
$I_t \subseteq A_t$ replaces $I \subseteq A_1$. In other words, new atoms can be abduced at each time-step.
This would allow us to handle non-determinism by abducing atoms that change their truth-value according to $\{I_1, ..., I_{T}\}$ instead of according to the rules in $R$.

Third, the size of the search space means that our system is currently restricted to small-to-medium-size problems.\footnote{This is not because our system is carelessly implemented: the \sys{} is able to synthesize significantly larger programs than state-of-the-art ILP systems (see the comparison with ILASP in Section \ref{sec:ilasp}).}
Going forward, we believe that the right way to build complex theories is incrementally, using curriculum learning: the system should consolidate what it learns in one episode, storing it as background knowledge, and reusing it in subsequent episodes.

We hope in future work to address these limitations. 
But we believe that, even in its current form, the \sys{} shows considerable promise as a prototype of what a general-purpose domain-independent sense-making machine must look like.

\section*{Acknowledgements}
We are very grateful to Andrew Cropper, Jessica Hamrick, Mark Law, Matko Bo\u snjak, Murray Shanahan, Nando de Freitas, and the anonymous reviewers for extensive feedback.

\section*{References}

\bibliography{main}

\end{document}